\newtheorem{theorem}{Theorem}
\newtheorem*{theorem*}{Theorem}
\newtheorem{lemma}{Lemma}
\theoremstyle{definition}
\newtheorem{definition}{Definition}
\newtheorem{assumption}{Assumption}
\newtheorem{remark}{Remark}
\definecolor{darkred}{rgb}{0.7,0,0}
\definecolor{teal}{rgb}{0.3,0.8,0.8}
\definecolor{forestgreen}{rgb}{0.13, 0.55, 0.13}
\newcommand{\prob}[0]{\mathbb{P}}
\newcommand{\N}[0]{\mathbb{N}}
\newcommand{\E}[0]{\mathbb{E}}
\newcommand{\gA}[0]{\mathcal{A}}
\newcommand{\gB}[0]{\mathcal{B}}
\newcommand{\gC}[0]{\mathcal{C}}
\newcommand{\gD}[0]{\mathcal{D}}
\newcommand{\gF}[0]{\mathcal{F}}
\newcommand{\gG}[0]{\mathcal{G}}
\newcommand{\gL}[0]{\mathcal{L}}
\newcommand{\gX}[0]{\mathcal{X}}
\newcommand{\gN}[0]{\mathcal{N}}
\newcommand{\Real}[0]{\mathbb{R}}
\newcommand{\R}[0]{\mathbb{R}}
\newcommand{\gP}{\mathcal{P}}
\newcommand{\ind}[0]{\mathds{1}}
\newcommand\numberthis{\addtocounter{equation}{1}\tag{\theequation}}
\newcommand{\BBM}[0]{\overline{\textbf{M}}}
\newcommand{\BBD}[0]{\overline{\textbf{D}}}
\newcommand{\BM}[0]{\textbf{M}}
\newcommand{\BV}[0]{\textbf{V}}
\newcommand{\BA}[0]{\textbf{A}}
\newcommand{\BU}[0]{\textbf{U}}
\newcommand{\BC}[0]{\textbf{C}}
\newcommand{\BD}[0]{\textbf{D}}
\newcommand{\BW}[0]{\textbf{W}}
\newcommand{\BX}[0]{\textbf{X}}
\newcommand{\BY}[0]{\textbf{Y}}
\newcommand{\BI}[0]{\textbf{I}}
\newcommand{\btheta}[0]{\bm{\theta}}
\newcommand{\bb}[0]{\textbf{b}}
\newcommand{\bbeta}[0]{\bm{\beta}}
\DeclareMathOperator{\Reg}{Reg}
\DeclareMathOperator{\off}{off}
\DeclareMathOperator{\on}{on}
\DeclareMathOperator{\tr}{Tr}
\DeclareMathOperator{\wlim}{wlim}
\DeclareMathOperator{\sign}{sign}
\DeclareMathOperator{\inside}{in}
\DeclareMathOperator{\out}{out}
\title{Leveraging Offline Data in Linear Latent Contextual Bandits}
\author{Chinmaya Kausik$^1$, Kevin Tan$^2$, Ambuj Tewari$^{1}$}
\date{\small $^1$University of Michigan, $^2$University of Pennsylvania}
\begin{document}

\maketitle

\begin{abstract}
Leveraging offline data is an attractive way to accelerate online sequential decision-making. However, it is crucial to account for latent states in users or environments in the offline data, and latent bandits form a compelling model for doing so. In this light, we design end-to-end latent bandit algorithms capable of handing uncountably many latent states. We focus on a linear latent contextual bandit -- a linear bandit where each user has its own high-dimensional reward parameter in $\Real^{d_A}$, but reward parameters across users lie in a low-rank latent subspace of dimension $d_K \ll d_A$. First, we provide an offline algorithm to learn this subspace with provable guarantees. We then present two online algorithms that utilize the output of this offline algorithm to accelerate online learning. The first enjoys $\tilde O(\min(d_A\sqrt{T}, d_K\sqrt{T}(1+\sqrt{d_AT/d_KN})))$ regret guarantees, so that the effective dimension is lower when the size $N$ of the offline dataset is larger. We prove a matching lower bound on regret, showing that our algorithm is minimax optimal. The second is a practical algorithm that enjoys only a slightly weaker guarantee, but is computationally efficient. We also establish the efficacy of our methods using experiments on both synthetic data and real-life movie recommendation data from MovieLens. Finally, we theoretically establish the generality of the latent bandit model by proving a de Finetti theorem for stateless decision processes.
\end{abstract}

\section{Introduction}

Many sequential-decision making problems can be effectively modeled using the bandit framework. This can span domains as diverse as healthcare \cite{lu2021bandit}, randomized clinical trials \cite{press2009rct}, search and recommendation \cite{Li_2010}, distributed networks \cite{kar2011banditnetworks}, and portfolio design \cite{brochu2011portfolio}. There is often a wealth of offline data in such domains, which has led to a growing interest in using offline data to accelerate online learning. However, there often also exist unobserved contexts in the population that influence the distribution of rewards, making it non-trivial to leverage offline data. In \citet{hong2020latent}, it is shown that this uncertainty can be modeled by a \textit{latent bandit} (or mixture of bandits). This is a bandit where an unobserved latent state determines the reward model for the trajectory. For example, a patient's underlying genetic conditions in healthcare and a user's tastes in recommendation systems are both examples of latent states in sequential decision making. Typically, these latent states are less complex than the actual models underlying users or patients, making it valuable to reduce the online task to learning the latent state \cite{hong2020latent, hong2022thompsonsamplingmixtureprior}.

The latent bandit framework therefore has high practical value, and efficient principled algorithms are needed for using offline data to speed up online learning. Using traditional bandit algorithms in this setting does not leverage the offline data that is often available to the agent. Naturally, one also cannot treat the offline data as coming from a single bandit. For example, different user tastes or different underlying genetic conditions require modeling the offline data as coming from a latent bandit. So, we have to develop algorithms \textit{specific} to the latent bandit setting that leverage the offline data to improve online performance.

We note that in bandit literature, it is common to impose a structure on bandit rewards when designing algorithms, the most popular one being a linear structure \cite{Li_2010, abbasi2011improved}. In this light, we study a linear contextual bandit setting where each user has its own high-dimensional reward parameter, but reward parameters across users lie in a low-rank subspace. This is a \textit{linear latent contextual bandit}\footnote{This can also be thought of as a continuous mixture of bandits.}, and is much more general than existing models that restrict themselves to finitely many latent states \cite{,hong2020latent, hong2022thompsonsamplingmixtureprior}. We design a two-pronged algorithm to tackle this setting. First, we provide a method to approximate the low-dimensional subspace spanned by latent states from an offline dataset of unlabeled trajectories collected under some behavior policy $\pi_b$. This is non-trivial since the trajectories are unlabeled, and standard unsupervised learning methods fail. Second, we use this subspace to speed up online learning. However, since the subspace is only learnt \textit{approximately,} we also tackle the non-trivial task of accounting for the uncertainty in the subspace. We design two methods for the latter, facing a trade-off between computational tractability and tightness of guarantees. Experiments show the efficacy of our methods. 

While latent bandits have thus shown to be a powerful and tractable framework for accounting for uncertainty in reward models, the extent of their {\em generality} is unclear. Are there other stateless decision processes that generalize over latent bandits? We end by theoretically demonstrating that under very reasonable assumptions, the answer is no. We show a de Finetti theorem for decision processes, demonstrating that \textit{every} "coherent" and "exchangeable" stateless (contextual) decision process is a latent (contextual) bandit. With this in mind, we outline our contributions below:
 
\begin{itemize}  
    \item \textbf{Offline method:} We present SOLD, a novel offline method for learning low-dimensional subspaces of reward parameters with guarantees, inspired by the novel spectral methods in \citet{kausik2023learning}.
    \item \textbf{Tight online algorithm:} We present LOCAL-UCB, an online algorithm leveraging the subspace estimated offline to sharpen optimism, achieving $\tilde O(\min(d_A\sqrt{T}, d_K\sqrt{T}(1+\sqrt{d_AT/d_KN})))$ regret.
    \item \textbf{Lower Bound:} We establish a matching lower bound showing that LOCAL-UCB is minimax optimal. To the best of our knowledge, this is the first lower bound in a hybrid (offline-online) sequential decision-making setting.
    \item \textbf{Tractable online algorithm:} Finally, we present ProBALL-UCB, a practical and computationally efficient online algorithm with a slightly looser regret guarantee. This also illustrates a general algorithmic idea for integrating offline subspace estimation into optimistic algorithms.
    \item \textbf{Experiments:} We establish the efficacy of our algorithms outlined above through a simulation study and a demonstration on a real recommendation problem with the MovieLens-1M \cite{movielens} dataset.
    \item \textbf{Theoretical generality:} We are the first, to our knowledge, to prove a de Finetti theorem for decision processes. This establishes the generality of the latent bandit model.
\end{itemize}

\textbf{Related work.} There are three main threads of related work.
 
\begin{itemize}  
\item \textbf{Latent Bandits.} The line of work most relevant to us has been on latent bandits. The work of \citet{hong2020latent, hong2022thompsonsamplingmixtureprior} studies the latent bandit problem under finitely many states. However, they black-box the offline step and do not provide end-to-end guarantees, and their ideas do not extend to infinitely many states. Our work seeks to provide end-to-end guarantees for both the offline and online component under infinitely many latent states.
\item \textbf{Meta learning, multi-task learning and mixture learning.} A long line of work studies learning with multiple underlying tasks or models. For example, the work of \citet{vempala2004spectral, kakade2020meta, anandkumar2014tensordecompositionslearninglatent, tripuraneni2022provablemetalearninglinearrepresentations} study learning under latent variable or multi-task models in a supervised setting. The work of \citet{kausik2023learning, poor2022mixdyn} extend some of these ideas to unsupervised but purely offline learning in a time-series setting. On the other hand, work like \citet{yang2022nearlyminimaxalgorithmslinear, cella2022metarepresentationlearningcontextual} instead focuses on the purely online setting of learning the low-rank structure while simultaneously interacting with multiple finitely many bandit instances. Finally, \citet{zhou2024federatedofflinereinforcementlearning, lu2021powermultitaskrepresentationlearning} work with multiple underlying models in MDPs but in a purely offline and generative model setting respectively. Unlike these papers, we crucially combine the offline and online settings for sequential data and study the problem of using offline data to accelerate online learning in bandits.
\item \textbf{Hybrid (offline-online) RL.} Work in hybrid RL studies the use of offline data to accelerate online RL, first proposed by \citet{song2023hybrid}, with extensions to linear MDPs by \citet{wagenmaker2023leveraging, tan2024hybridreinforcementlearningbreaks}. \citet{cai2024transferlearningcontextualmultiarmed} studies the same problem for contextual bandits. However, all work so far assumes that the offline data is generated by a single model, and does not account for latent states. Our work explores a hybrid offline-online setting while also accounting for the offline data being generated by multiple underlying models.
\end{itemize}
 


\section{Linear Bandits With Latent Structure}

We will first introduce latent contextual bandits, and then specialize to our linear model later in this section. A latent contextual bandit is a decision process with contexts $\gX$, actions $\gA$ and an random latent state $\theta$ that is sampled independently at the beginning of each trajectory. Given a sequence of contexts and actions, the rewards at all steps are independent \textit{conditioned} on $\theta$, and depend on the latent state $\theta$, the context and the action. Since we often have access to an offline dataset of trajectories coming from different kinds of users or patients, it is important to account for a changing latent state $\theta$ between trajectories.

As we will see in Section~\ref{sec:de-finetti}, latent bandits are a powerful and general framework for encoding uncertainty in reward models. However, this generality is both a blessing and a curse. It is hard to design concrete algorithms without further assumptions on the structure and effect of the latent state $\theta$. We therefore focus on a linear structure here. We consider the natural generalization of the linear contextual bandit to the latent bandit setting, where we impose a linear structure on the effect of low-dimensional latent states. We further justify this by noting that in most application domains, it is reasonable to assume that a parsimonious, low-dimensional latent state affects the reward distribution. This motivates the following definition.

\begin{definition}
    A \textit{linear latent contextual bandit} is a linear bandit equipped with a feature map for context-action pairs $\phi: \gX \times \gA \to \Real^{d_A}$, a latent random variable $\btheta \in \Real^{d_K}$ with distribution $\gD_\theta$ and a map $\BU_\star: \Real^{d_K} \to \Real^{d_A}$ such that for any $H$ and context-action sequence $((x_1, a_1), \dots (x_H, a_H))$, the rewards $(Y_1, \dots Y_H)$ are independent conditioned on $\btheta$. Moreover, $Y_h \mid \btheta \sim \phi(x_h,a_h)^\top \bbeta + \epsilon$, where $\epsilon$ is subgaussian noise independent of all actions and all other observations, and $\bbeta = \BU_\star \btheta$. 
    \label{defn:latent-linear-bandit}
\end{definition}
 
Further, we note that WLOG $\BU_\star$ is unitary. This is because for any invertible map $A: \Real^{d_K} \to \Real^{d_K}$, the observation distribution does not change upon replacing $\btheta$ with $A\btheta$ and $\BU_\star$ with $\BU_\star A^{-1}$. One can see this as a generalization of the fact that with finitely many latent states, permuting the labels and rewards together does not change observations. 

Let us now assume that we have access to a dataset $\gD_{\off}$ of $N$ \textit{short} trajectories $\tau_n = ((x_{n,1}, a_{n,1}, r_{n,1}),$ $...,(x_{n,H}, a_{n,H}, r_{n,1}))$ of length $H$, collected by some behavior policy $\pi_b$. The trajectories are short in the sense that in most relevant domains, individual trajectories are not long enough to learn the underlying reward model. Each trajectory $\tau_n$ has a different $\bbeta_n = \BU_\star \btheta_n$. In online deployment, a single latent label $\btheta_\star$ is chosen and rewards are generated using $\bbeta_\star = \BU_\star \btheta_\star$. At each timestep $t$, an agent observes contexts $x_t$ and uses both the offline data and the online data at time $t$ to execute a policy $\pi_t$. Define the optimal action at time $t$ by $a^\star_t := \max_a \phi(x_t, a_t)^\top \bbeta_\star$ We tackle the problem of minimizing the \textit{frequentist} regret in linear latent contextual bandits, given by
\begin{align*}
    \Reg_T := \textstyle{\sum_{t=1}^T \phi(x_t, a^\star_t)^\top \bbeta_\star -\E_{a \sim \pi_t}[\phi(x_t, a)^\top \bbeta_\star].}
\end{align*}
For example, in medical applications, data from short randomized controlled trials can be used to help an agent suggest treatment decisions for a new patient online. In this case, we would like the algorithm to administer the correct treatments for \textit{each} patient. This means that the \textit{frequentist} regret is the relevant performance metric here, and not the Bayesian regret over some prior. Additionally, any worst-case bound on the frequentist regret is a bound on the Bayesian regret for arbitrary priors.

\textbf{Challenges with latent bandits.} Despite the linear assumption, and the dimension reduction obtained in the common case when $d_K \ll d_A$, significant challenges remain. First, the value of the latent state $\btheta$ and the map $\BU_\star$ are both unknown a priori, making it hard to leverage the low-dimensional structure of the problem. Second, a good choice of dimension $d_K$ is itself unknown a priori, and must be determined from data in a principled manner. Third, even if we learn the low-dimensional structure, our learning will be \textit{approximate}, and the online procedure must account for this uncertainty. In the following sections, we will provide a method to estimate and use latent subspaces given offline data that allows us to overcome these challenges.

\paragraph{Additional Notation.} We use $\BV$ to denote regularized design matrices given by $\mu \BI + \sum_{(x,a)} \phi(x,a)\phi(x,a)^\top$. We define $\BD_{n,i} = I - \mu \BV_{n,i}^{-1}$ and $\overline{\BD}_{N,i} = \frac{1}{n} \sum_{r=1}^n \BD_{n,i}$. Denote by $\hat \beta_{n,1}, \hat \beta_{n,2}$ independent estimates of $\beta_n$ from $\tau_n$. Let $\BM_n = \frac{1}{2}(\hat \bbeta_{n,1} \hat \bbeta_{n,2}^\top + \hat \bbeta_{n,2} \hat \bbeta_{n,1}^\top)$ and $\BBM_N \gets \frac{1}{N}\sum_{n=1}^N \BM_n$.
 
\section{Estimating Latent Subspaces Offline}\label{sec:subspace-est}

Although we do not have access to the values of the latent states $\btheta$ or to the map $\BU_\star$, we can still extract useful information from data. To that effect, recall that we have access to a dataset $\gD_{\off}$ of $N$ trajectories $\tau_n = ((x_{n,h}, a_{n,h}, r_{n,h}))_{h=1}^H$ of length $H$, collected by some behavior policy $\pi_b$.

\paragraph{How can offline data help us in online deployment?} To minimize the regret, one must learn the reward parameter $\bbeta_\star$ online. However, it is much easier to search among all latent states $\btheta \in \Real^{d_K}$ than to search among all possible reward parameters $\bbeta \in \Real^{d_A}$ since typically, $d_K \ll d_A$. So, it will help to learn some projection matrix $\hat{\BU}^\top \approx \BU_*^\top : \R^{d_A} \to \R^{d_K}$ offline so that for any estimate $\hat\bbeta_t$ of $\bbeta_\star$, $\hat \BU^\top \hat \bbeta_t$ is an estimate of $\hat\btheta_t \in \Real^{d_K}$. This amounts to \textit{learning a subspace} of the feature space from logged bandit data. 
We therefore provide a method for Subspace estimation from Offline Latent bandit Data (SOLD) in Algorithm~\ref{alg:sold}. Recall that since the learnt subspace is approximate, we also need to compute the uncertainty over the subspace to get a subspace confidence set that we can use online.
\begin{algorithm}[htbp!]
\centering
	\caption{Subspace estimation from Offline Latent bandit Data (SOLD)}
    \label{alg:thresh-bandit}
	\begin{algorithmic}[1]
	    \STATE \textbf{Input:} Dataset $\gD_{\off}$ of collected trajectories $\tau_n = ((x_{n,1}, a_{n,1}, r_{n,1}),...,(x_{n,H}, a_{n,H}, r_{n,1}))$ under a behavior policy $\pi_b$, dimension of latent subspace $d_K$.
        \STATE \textbf{Divide} each $\tau_n$ into odd and even steps, giving trajectory halves $\tau_{n,1}$ and $\tau_{n,2}$.
        \STATE \textbf{Estimate} reward parameters $\hat \bbeta_{n,i} \gets \BV_{n,i}^{-1} \bb_{n,i}$, where $\BV_{n,i} \gets \mu \BI + \sum_{(x,a,r) \in \tau_{n,i}} \phi(x,a)\phi(x,a)^\top$ and $\bb_{n,i} \gets \sum_{(x,a,r) \in \tau_{n,i}} \phi(x,a) r $ for $i = 1,2$.
        \STATE \textbf{Compute} $\BM_n \gets \frac{1}{2}(\hat \bbeta_{n,1} \hat \bbeta_{n,2}^\top + \hat \bbeta_{n,2} \hat \bbeta_{n,1}^\top)$ and compute $\BBM_N \gets \frac{1}{N}\sum_{n=1}^N \BM_n$.
        \STATE \textbf{Compute} $\overline{\BD}_{N,i} \gets \frac{1}{N} \sum_{n=1}^N (I - \mu\BV_{n,i}^{-1})$, $i=1,2$.
        \STATE \textbf{Obtain} $\hat \BU$, the top $d_K$ eigenvectors of $\overline{\BD}_{N,1}^{-1} {\overline{\BM}_N} {\overline{\BD}_{N,2}^{-1}}$.
		\RETURN Projection matrix $\hat \BU \hat \BU^\top$, $\Delta_{\off}$ as in Theorem~\ref{thm:delta-off}
	\end{algorithmic}
    \label{alg:sold}
\end{algorithm}

\paragraph{On trajectory splitting and corrections.} To extract the $d_K$-dimensional subspace, we aim to estimate $\E[\bbeta\bbeta^\top] = \BU_\star\E[\btheta\btheta^\top]\BU_\star^\top$, which has the same $d_K$-dimensional span as $\BU_\star$. This cannot be achieved by using a single estimator $\hat \bbeta_n$ for each trajectory $\tau_n$ and averaging the outer products $\hat \bbeta_n \hat \bbeta_n^\top$ across all $n$. That is because the per-reward noise $\epsilon$ will be shared by both copies of $\hat \bbeta_n$, and so the variance of $\epsilon$ will make $\E[\hat \bbeta_n \hat \bbeta_n^\top]$ full rank. We therefore split each trajectory $\tau_n$ to obtain two independent estimators $\hat\bbeta_{n,1}, \hat\bbeta_{n,2}$, compute the outer products $\hat\bbeta_{n,1}^\top \hat\bbeta_{n,2}$, and obtain the top $d_K$ eigenvectors of the mean outer product across trajectories. 

However, there is a further wrinkle here. We cannot simply take the top $d_K$ eigenvectors of the mean outer product $\BBM_N$. One can compute that $\E[\BBM_N] = \E[\BD_{n,1} \bbeta_{n} \bbeta_{n}^\top \BD_{n,2}] = \E[\BD_{n,1} \BU_\star \btheta_{n}  \btheta_{n}^\top \BU_\star^\top \BD_{n,2}]$. To separate $\E[\bbeta_n\bbeta_n^\top]$ from this, we need $\BD_{n,1}, \bbeta_n, \BD_{n,2}$ to be independent. If $\pi_b$ does not use $\btheta$ and contexts are generated independently of each other and of $\btheta$, then this is satisfied. Intuitively, we need the offline trajectories to be non-adaptive. In fact, we show in the lemma below that if any of these three conditions is violated, then it is in fact impossible to determine the latent subspace $\BU^\star$ using \textit{any} method, even with infinitely many infinitely long trajectories. 

\begin{restatable}[Contexts, $\btheta$, and $\pi_b$ cannot be dependent]{lemma}{NecStochUnconf}
For each of these conditions:
\begin{enumerate}  
    \item Contexts in a trajectory are dependent but do not depend on $\btheta$, and $\pi_b$ also does not use $\btheta$,
    \item Contexts are generated independently using $\btheta$, while $\pi_b$ does not use $\btheta$,
    \item Contexts are generated independently without using $\btheta$, while $\pi_b$ uses $\btheta$,
\end{enumerate}
there exist two different linear latent contextual bandits with orthogonal latent subspaces satisfying the condition, and a behavior policy $\pi_b$ so that the offline data distributions are indistinguishable and cover all $(x,a)$ pairs with probability at least $1/4$. Since the latent subspaces are orthogonal, an action that gives the maximum reward on one latent bandit gives reward $0$ on the other.
\end{restatable}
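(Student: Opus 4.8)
The plan is to construct, for each of the three conditions, an explicit pair of linear latent contextual bandits that share the same offline data distribution but have orthogonal latent subspaces. The guiding principle is that in each case, the dependence structure allows the "information" in $\BD_{n,1}, \bbeta_n, \BD_{n,2}$ to be correlated in a way that mimics a rank-one expectation, and one can engineer two different factorizations $\BU_\star\E[\btheta\btheta^\top]\BU_\star^\top$ of the \emph{same} observable quantity. Concretely, I would work in $\Real^2$ with $d_K = 1$, and let the two candidate subspaces be $\mathspan(e_1)$ and $\mathspan(e_2)$. The action set (or context-action feature set) should be chosen so that $\phi$ hits both $e_1$ and $e_2$ directions with probability $\ge 1/4$, which simultaneously gives the coverage claim and the final sentence: since the subspaces are orthogonal, $\phi(x,a)^\top \bbeta_\star$ attains its max on one bandit exactly where it equals $0$ on the other.

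For condition (3) — $\pi_b$ uses $\btheta$, contexts are $\btheta$-independent — the cleanest construction: let $\btheta$ be $\pm 1$ valued and have the behavior policy, \emph{when the true subspace is $e_1$}, deterministically select the action whose feature is $e_1$ on one value of $\btheta$ and the action with feature $e_2$ on the other; mirror this for the $e_2$-subspace bandit. Then the observed reward distribution (action, reward) is a symmetric mixture that is identical across the two bandits, because the reward magnitude $|\btheta|$ is the same and the sign of $\btheta$ is perfectly confounded with which feature direction gets played. For condition (2) — contexts generated using $\btheta$ but i.i.d., $\pi_b$ blind to $\btheta$ — I would instead encode $\btheta$ into \emph{which context appears}: arrange that context $x^{(1)}$ only arises when $\btheta$ favors direction $e_1$ and context $x^{(2)}$ only when it favors $e_2$, with the feature map sending $(x^{(i)}, a)$ into a fixed direction; again the joint law of $(x,a,r)$ matches across the two bandits. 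For condition (1) — contexts dependent within a trajectory but $\btheta$-blind, $\pi_b$ $\btheta$-blind — the confounding has to come from temporal correlation within the trajectory: use a context process where an early context "reveals" a coordinate so that the split estimators $\hat\bbeta_{n,1}, \hat\bbeta_{n,2}$ become correlated in a way that makes $\BBM_N$ coincide for both bandits; here I expect to need $H \ge 2$ and a carefully chosen two-state context chain whose stationary behavior is symmetric but whose transitions carry direction information.

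The key steps, in order: (i) fix $d_A = 2$, $d_K = 1$, and a finite action/feature set realizing both $e_1, e_2$ with probability $\ge 1/4$; (ii) for each of the three conditions, specify $\gD_\theta$, the context-generating process, and $\pi_b$ for both the "$e_1$-bandit" and the "$e_2$-bandit"; (iii) verify the stated dependence condition holds in each construction; (iv) compute the induced law of a trajectory $\tau_n = ((x_h,a_h,r_h))_{h=1}^H$ in both bandits and check they are literally equal (a short symmetry/substitution argument, not a concentration bound — the point is exact indistinguishability even with infinite data); (v) read off the orthogonality consequence for rewards. The main obstacle is step (iii)–(iv) for condition (1): temporal dependence alone is a weaker lever than $\btheta$-dependent policies or contexts, so the construction must be cleverer — I would look for a context Markov chain on two states such that the chain is reversible with uniform stationary distribution (killing any marginal signal), yet the \emph{transition direction} correlates with $\btheta$ in a sign-confounded manner. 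If a clean two-state example is elusive, a fallback is to let the within-trajectory dependence be a shared latent "nuisance" coordinate orthogonal to everything that inflates the cross-term exactly as a $\btheta$-dependent bias would, and then match the two bandits by choosing $\gD_\theta$ and the nuisance law to produce identical $\BBM_N$.
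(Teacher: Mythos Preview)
Your high-level plan (work in $\Real^2$ with $d_K=1$, pick two orthogonal one-dimensional subspaces, exhibit matching data laws, read off the reward consequence) is right, but the specific choice of subspaces $\mathspan(e_1)$ and $\mathspan(e_2)$ breaks all three constructions. With those subspaces, the mean reward for any action whose feature is along $e_1$ is $\pm c$ in the first bandit but identically $0$ in the second; the marginal law of a single $(a,r)$ observation already separates the two models, so no amount of confounding through $\pi_b$ or the context process can make the offline distributions match while retaining the required $1/4$-coverage. In particular, your condition-(3) construction (play $e_1$ on one value of $\btheta$, $e_2$ on the other, ``mirror'' for the other bandit) produces $(e_1,+1)/(e_2,0)$ in one model and $(e_1,0)/(e_2,-1)$ in the other, which are visibly different.

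The paper's key idea is to rotate the subspaces to the diagonals: take $\bbeta \in \{\pm(e_1+e_2)\}$ in the first bandit and $\bbeta \in \{\pm(e_1-e_2)\}$ in the second, with features lying along the axes $\pm e_1, \pm e_2$. Then the mean reward for any axis-aligned feature is $\pm 1$ in \emph{both} bandits; the only discriminating information is the \emph{within-trajectory sign correlation} between $e_1$-rewards and $e_2$-rewards (same sign in bandit~1, opposite in bandit~2). Each confounding mechanism then has a one-line job: prevent any single trajectory from observing that correlation. This also means you have misdiagnosed the difficulty of condition~(1): rather than a Markov chain or a nuisance latent, the paper simply makes the contexts within a trajectory perfectly dependent by fixing them all to a single value (all $x$ or all $y$, each with probability $1/2$), so every trajectory sees only one axis direction and the correlation is never observable. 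Conditions~(2) and~(3) are handled analogously by letting the context (resp.\ the policy) pick out the feature directions with positive inner product against $\bbeta$, which again hides the cross-direction correlation while keeping marginals symmetric.
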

To estimate the latent subspace, one is thus forced to make the following assumption.
\begin{assumption}[Unconfounded Offline Actions]\label{assum:unconf-offline-actions}
    The offline behavior policy $\pi_b$ does not use $\btheta$ to choose actions, and contexts $x_{n,h}$ are stochastic and generated independently of each other and of $\btheta$. 
\end{assumption}
This is satisfied when the offline data comes from randomized controlled trials or A/B testing, which are common sources of offline datasets. Even if this is not satisfied, Algorithm~\ref{alg:sold} can learn a good subspace whenever $\overline{\BD}_{N,1}^{-1} {\overline{\BM}_N} {\overline{\BD}_{N,2}^{-1}}$ has eigenspace close to the span of $\BU_\star$, e.g. when high-reward actions contribute heavily to $\BD_{n,i}$. This can happen if offline trajectories were collected to maximize rewards. 

Returning to our scrutiny of $\overline{\BM}_N$, let the covariance matrix of $\btheta$ be $\Lambda$ and let its mean be $\mu_\theta$. Then we have that $\E[\BBM_N] = \E[\BD_{n,1} \bbeta_{n}  \bbeta_{n}^\top \BD_{n,2}] = \E[\BD_{n,1}]\BU_\star(\Lambda + \mu_\theta \mu_\theta^\top) \BU_\star^\top\E[\BD_{n,2}]$. So, we still cannot merely consider the top $d_K$ eigenvectors of $\BBM_N$ without accounting for $\BD_{n,1}$.  
Intuitively, $\BD_{n,1}$ captures the distortion in reward estimation caused by regularization in ridge regression\footnote{This is not unique to regularization. Pseudo-inverses cause an analogous problem of distortion caused by unseen actions.}. We therefore construct correction matrices $\overline{\BD}_{N,i}$ and use them to "neutralize" the distortion from regularization. In particular, $\overline{\BD}_{N,1}^{-1} {\overline{\BM}_N} {\overline{\BD}_{N,2}^{-1}}$ is an estimator for $\BU_\star(\Lambda + \mu_\theta \mu_\theta^\top) \BU_\star^\top$. Crucially, this allows us to aggregate information across many trajectories to overcome the challenge of learning from short trajectories. We can now take the top $d_K$ eigenvectors of $\overline{\BD}_{N,1}^{-1} {\overline{\BM}_N} {\overline{\BD}_{N,2}^{-1}}$ to estimate the subspace determined by $\BU_\star$. To give guarantees, we must make a coverage assumption. Unlike in standard offline RL, where only coverage along actions is needed, we also need coverage along latent states.

\begin{assumption}[Boundedness and Coverage]\label{assum:coverage}
    Rewards $|r_{n,h}| \leq R$ \footnote{$R$-bounded rewards are automatically $R$-subgaussian. We can easily extend our results to more general subgaussian rewards, but stick to bounded rewards for simplicity of proofs.} for all $n,h$, $\|\phi(x,a)\|_2 \leq 1$ and $\|\bbeta\|_2 \leq R$. Also, $\lambda_A := \min_{i=1,2} \lambda_{\min}(\E[\BD_{n,i}]) >0$ and $\lambda_{\theta} := \frac{1}{R^2}\lambda_{\min}(\Lambda) > 0$.
\end{assumption}

Intuitively, $\lambda_A$ measures coverage along actions, while $\lambda_{\theta}$ measures coverage along latent states $\btheta$. Both must be non-zero to expect satisfactory estimation of the subspace. We can then use confidence bounds for $\BBM_{N}$ and $\BBD_{N,i}$ to give a data-dependent confidence bounds $\Delta_{\off}$ for the projection matrix $\hat \BU \hat \BU^\top$, as in Theorem~\ref{thm:delta-off} below. In one instantiation, Propositions~\ref{prop:conf-bound-bbm} and \ref{prop:conf-bound-d} in Appendix \ref{app:sold} derive simple data-dependent bounds for $\BBM_N$ and $\BBD_{N,i}$ respectively. Under this choice, we control the growth of $\Delta_{\off}$ in terms of the unknown problem parameters at the end of Theorem~\ref{thm:delta-off}.

\begin{restatable}[Computing and Bounding $\Delta_{\off}$]{theorem}{DeltaOff}\label{thm:delta-off}
    Let $\|\BBM_N  - \E[\BM_1]\|_2 \leq \Delta_M$ and $\|\BBD_{N,i}  - \E[\BD_{n,i}]\|_2 \leq \Delta_D$ for $i=1,2$ with probability $1-\delta/3$ each. Then, with probability $1-\delta$, $\|\hat \BU \hat \BU^\top - \BU_\star \BU_\star^\top\|_2 \leq \Delta_{\off}$, where for $B_D = \|\overline{\BD}_N^{-1}\|_2$ and $\hat \lambda := \lambda_{d_K}(\BBM_N) - \lambda_{d_K+1}(\BBM_N)$,  
     
    \begin{align*}
        \Delta_{\off} &= \frac{2\sqrt{2d_K}}{\hat \lambda}\left(\frac{B_D^3(2-B_D\Delta_D)}{(1-B_D\Delta_D)^2}(R^2+\Delta_M)\Delta_D \right.\\
        & \left.\qquad + \left(\frac{B_D}{1-B_D\Delta_D}\right)^2\Delta_M\right).
    \end{align*}
    Obtaining $\Delta_M$ and $\Delta_D$ from Propositions~\ref{prop:conf-bound-bbm} and \ref{prop:conf-bound-d}, $\Delta_{\off} = \widetilde O({\frac{1}{\lambda_\theta\lambda_A^3}N^{-1/2}}\sqrt{d_Kd_A\log(d_A/\delta)})$.
\end{restatable}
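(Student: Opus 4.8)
The plan is to bound the perturbation of the top-$d_K$ eigenspace of the estimator $\overline{\BD}_{N,1}^{-1}\overline{\BM}_N\overline{\BD}_{N,2}^{-1}$ relative to that of its population counterpart $\BU_\star(\Lambda + \mu_\theta\mu_\theta^\top)\BU_\star^\top$ by combining (i) a matrix perturbation bound propagating the errors $\Delta_M$, $\Delta_D$ through the product-of-inverses form, and (ii) a Davis--Kahan-type $\sin\Theta$ theorem to convert this into an eigenspace (projection-matrix) bound; finally I would plug in the explicit $\Delta_M, \Delta_D$ from Propositions~\ref{prop:conf-bound-bbm} and~\ref{prop:conf-bound-d} and track the dependence on the problem parameters.

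I would proceed in the following order. First, I would set up notation: write $\widehat{S} := \overline{\BD}_{N,1}^{-1}\overline{\BM}_N\overline{\BD}_{N,2}^{-1}$ for the empirical matrix and $S := \E[\BD_{n,1}]^{-1}\E[\BM_1]\E[\BD_{n,2}]^{-1} = \BU_\star(\Lambda + \mu_\theta\mu_\theta^\top)\BU_\star^\top$ for the population target (using the independence granted by Assumption~\ref{assum:unconf-offline-actions} to factor the expectation, as derived in the main text). Second, I would bound $\|\widehat{S} - S\|_2$. The key algebraic step is to control $\|\overline{\BD}_{N,i}^{-1} - \E[\BD_{n,i}]^{-1}\|_2$ via the standard identity $A^{-1} - B^{-1} = A^{-1}(B-A)B^{-1}$ together with a Neumann-series argument: since $\|\overline{\BD}_{N,i} - \E[\BD_{n,i}]\|_2 \le \Delta_D$ and $\|\overline{\BD}_{N,i}^{-1}\|_2 \le B_D$, one gets $\|\overline{\BD}_{N,i}^{-1}\|_2 \le B_D/(1 - B_D\Delta_D)$ on the good event, hence $\|\overline{\BD}_{N,i}^{-1} - \E[\BD_{n,i}]^{-1}\|_2 \le B_D^2\Delta_D/(1 - B_D\Delta_D)$. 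Then a three-term telescoping decomposition of $\widehat{S} - S$ (perturb one factor at a time), combined with $\|\overline{\BM}_N\|_2 \le R^2 + \Delta_M$ and $\|\E[\BM_1]\|_2 \le R^2$, yields a bound of the shape $\|\widehat{S}-S\|_2 \le \frac{B_D^3(2 - B_D\Delta_D)}{(1-B_D\Delta_D)^2}(R^2+\Delta_M)\Delta_D + \left(\frac{B_D}{1-B_D\Delta_D}\right)^2\Delta_M$, which is exactly the parenthesized quantity in the theorem. Third, I would apply the Davis--Kahan $\sin\Theta$ theorem (in the form bounding $\|\hat\BU\hat\BU^\top - \BU_\star\BU_\star^\top\|_2$, or its Frobenius version giving the $\sqrt{2d_K}$ factor) with eigengap $\hat\lambda = \lambda_{d_K}(\overline{\BM}_N) - \lambda_{d_K+1}(\overline{\BM}_N)$ in the denominator; using the empirical gap is legitimate after a Weyl-type adjustment, and the factor $2\sqrt{2d_K}$ absorbs the conversion from operator-norm perturbation to Frobenius-norm eigenspace distance over a $d_K$-dimensional subspace. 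Fourth, for the final asymptotic claim I would substitute $\Delta_M = \widetilde O(R^2\sqrt{d_Kd_A\log(d_A/\delta)/N})$ and $\Delta_D = \widetilde O(\sqrt{d_A\log(d_A/\delta)/N})$ from the cited propositions, note $B_D \le 1/\lambda_A$ and $\hat\lambda \gtrsim \lambda_\theta R^2$ (from Assumption~\ref{assum:coverage}, up to the perturbation which is lower order for large $N$), and collect powers: the $B_D^3$ factor gives $\lambda_A^{-3}$, the eigengap gives $\lambda_\theta^{-1}$, and the $\Delta_M$-type numerator dominates, producing $\widetilde O\!\left(\frac{1}{\lambda_\theta\lambda_A^3}N^{-1/2}\sqrt{d_Kd_A\log(d_A/\delta)}\right)$.

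I expect the main obstacle to be the eigenspace-perturbation step rather than the algebra of the product bound. The matrix $\widehat{S}$ is not symmetric a priori (it is a product of a symmetric matrix with two generally-distinct symmetric inverses), so one must either symmetrize carefully or invoke a version of Davis--Kahan for the relevant non-normal perturbation; also the population matrix $S$ has rank exactly $d_K$, so $\lambda_{d_K+1}(S) = 0$ and the gap is entirely carried by $\lambda_{d_K}(S) = R^2\lambda_\theta$ (after scaling), which is why the coverage assumption $\lambda_\theta > 0$ is indispensable — quantifying how much the empirical gap $\hat\lambda$ can shrink relative to this, uniformly on the good event, is the delicate point. A secondary subtlety is ensuring the union bound over the three events (for $\Delta_M$ and the two $\Delta_D$'s, each at level $\delta/3$) is correctly assembled to give the claimed probability $1-\delta$, and that $B_D\Delta_D < 1$ holds on that event so the Neumann series converges — this is where an implicit "$N$ large enough" is needed, and I would state it as such.
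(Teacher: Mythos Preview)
Your proposal is correct and matches the paper's proof essentially step for step: the paper's Lemma~\ref{lem:conf-bound-dmd} carries out exactly your three-term telescoping with the inverse-perturbation bound (citing inequality (1.1) of \citet{wei2005perturb}, which gives $\|\overline{\BD}_{N,i}^{-1} - \E[\BD_{n,i}]^{-1}\|_2 \le B_D^2\Delta_D/(1-B_D\Delta_D)$ and hence $\|\E[\BD_{n,i}]^{-1}\|_2 \le B_D/(1-B_D\Delta_D)$ --- note $B_D$ is the \emph{empirical} quantity by definition, so it is the population inverse that needs bounding, the reverse of what you wrote), then applies the Davis--Kahan variant of \citet{yu2014useful} and plugs in the propositions. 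Two small bookkeeping corrections for your asymptotic substitution: the $\sqrt{d_K}$ factor enters via the $2\sqrt{2d_K}$ Davis--Kahan prefactor rather than through $\Delta_M$ (Proposition~\ref{prop:conf-bound-bbm} yields $\Delta_M = O(R^2\sqrt{d_A\log(d_A/\delta)/N})$ with no $d_K$), and Proposition~\ref{prop:conf-bound-d} gives $\Delta_D = O(\sqrt{\log(d_A/\delta)/N})$ with $d_A$ only inside the logarithm --- the polynomial $\sqrt{d_A}$ in the final bound comes entirely from $\Delta_M$.
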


\paragraph{Estimating $d_K$ offline.} As our estimator $\overline{\BD}_{N,1}^{-1} {\overline{\BM}_N} {\overline{\BD}_{N,2}^{-1}}$ is approximately rank-$d_K$, the number of nonzero eigenvalues of the estimator is a principled heuristic for determining $d_K$. 
 
\paragraph{Insufficiency of PCA and PMF for subspace estimation.} Naively performing PCA on the raw rewards or on single reward estimates $\hat \beta_n$ can lead to erroneous subspaces -- as while the PCA target is linear-algebraically similar to $\BBM_N$, it is statistically different. The PCA target (e.g. $\E[\hat \beta_n \hat \beta_n^\top]$) is typically full rank due to the variance of the per-reward noise $\epsilon$. On the other hand, PMF \cite{salakhutdinov2007pmf} offers neither confidence bounds on the estimated subspace, nor a principled method for determining $d_K$.

\section{Offline Data Sharpens Online Optimism}\label{sec:local-ucb}

Here, we motivate and describe LOCAL-UCB, a natural algorithm that accelerates LinUCB with offline data. The core idea is \textbf{sharpening optimism} by being optimistic over the intersection of two confidence sets -- one obtained using offline and online data and another purely from online data. 

We geometrically motivate our update rule here, and illustrate it in Figure~\ref{fig:geometric-motivation}. After any $t$ steps, we can construct a $d_K$-dimensional confidence ellipsoid for every subspace in the subspace confidence set obtained from SOLD. The union of all these ellipsoids gives us our "offline confidence set"\footnote{The set is not only dependent on offline data, since online data is used to construct the $d_K$-dimensional ellipsoids.}, called $\gC^t_{\off}(\bbeta)$. The usual $d_A$-dimensional ellipsoid forms our "online confidence set." We call this $\gC^t_{\on}(\bbeta)$. Since the true parameter lies in both confidence sets with high probability, being optimistic over their intersection allows us to sharpen or "further localize" optimism. Even though the offline confidence set uses both offline and online data, it will never shrink to a point due to the frozen subspace confidence set. So, we need the intersection of both sets to be sharply optimistic. This is the intuition behind LOCAL-UCB.


We formalize this intuition in Algorithm~\ref{alg:local-ucb} by formulating the sharpened optimism as an optimization problem in step 4. The first two constraints represent the low dimensional confidence ellipsoids in the subspace spanned by a given $\BU$, while the next two merely represent the usual high dimensional ellipsoid. The remaining constraints let $\BU$ range over our subspace confidence set.

\begin{algorithm}[tbp!]
\centering
	\caption{Latent Offline subspace Constraints for Accelerating Linear UCB (LOCAL-UCB)}
    \label{alg:local-ucb}
	\begin{algorithmic}[1]
	    \STATE \textbf{Input:} Projection matrix $\hat \BU \hat \BU^\top$, confidence bound $\Delta_{\off}$ from an offline uncertainty-aware method, e.g. SOLD.
        \STATE \textbf{Initialize} $\BV_1 \gets \BI_{d_A}$, $\bb_1 \gets 0$, $\alpha_t$
        \FOR{$t= 1,\dots T$}
            \STATE \textbf{Play} action $a_t$ and receive reward $r_t$ according to:
            \vspace*{-3mm}
            \begin{gather*}
                \textstyle{a_t, \widetilde \bbeta_t, \widetilde \BU_t \gets \arg\max_{a, \bbeta, \BU} \phi(x_t, a)^\top \bbeta} \text{ such that }\\
                \hat \bbeta_{1,t} \gets \BU (\BU^T \BV_{t} \BU)^{-1} \BU^\top \bb_t, \| \BU^\top (\bbeta - \hat \bbeta_{1,t})\|_{(\BU^\top \BV_t \BU)^{-1}} \leq \alpha_{1,t} \\
                \textstyle{\hat \bbeta_{2,t} \gets \BV_{t}^{-1} \bb_t, \| (\bbeta - \hat \bbeta_{2,t})\|_{\BV_t^{-1}} \leq \alpha_{2,t}}\\
                \|\bbeta\|_2 \leq R, \BU^\top \BU = \BI_{d_K}, \BU\BU^\top \bbeta = \bbeta, \\
                \|\hat \BU^\top \BU\|_F \geq \sqrt{d_K - \Delta_{\off}^2/2}
            \end{gather*}
            \vspace*{-6mm}
            \STATE \textbf{Compute} $\bb_{t+1} \gets \bb_t + \phi(x_t, a)r_t$, $\BV_{t+1} \gets \BV_t + \phi(x_t, a)\phi(x_t, a)^\top$, $\alpha_{t+1}$
        \ENDFOR
	\end{algorithmic}
\end{algorithm}

We provide the following guarantee for LOCAL-UCB. Notice that our guarantee shows that for enough offline data with $N \gg T$, the effective dimension of the problem is $d_K$. It increases to $d_A$ as $T$ gets closer to $N$. The quality of the offline data is reflected in the coverage constants $\lambda_\theta$ and $\lambda_{A}$. 

\begin{restatable}[LOCAL-UCB Regret]{theorem}{LocalUCB}\label{thm:local-ucb}
    Under Assumptions \ref{assum:unconf-offline-actions} and \ref{assum:coverage}, if $\alpha_{1,t} = R\sqrt{\mu} + CR\sqrt{d_K\log(2T/\delta)}$ and $\alpha_{2,t} = R\sqrt{\mu} + CR\sqrt{d_A\log(2T/\delta)}$ for a universal constant $C$, then with probability at least $1-\delta$ over offline data and online rewards, LOCAL-UCB has regret $\Reg_T$ bounded by
 
    \begin{align*}
        O\textstyle{\left(\min\left(Rd_A\sqrt{T}, Rd_K\sqrt{T}\left(1 + \frac{1}{\lambda_\theta\lambda_A^3}\sqrt{\frac{Td_A}{d_KN}}\right)\right)\right)}.
    \end{align*}
 
\end{restatable}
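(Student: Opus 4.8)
The plan is to prove the two arguments of the $\min$ separately. The bound $O(Rd_A\sqrt T)$ (up to $\log$ factors) is the easy one: by construction the feasible set of the program in step~4 is contained in the ordinary $d_A$-dimensional LinUCB confidence set $\gC^t_{\on}(\bbeta)$ — it is the intersection of that set with extra constraints — and on a high-probability event $\bbeta_\star$ together with $a^\star_t$ is feasible, so optimism holds and the standard elliptical-potential analysis of LinUCB applies verbatim. All of the work then goes into the $O\big(Rd_K\sqrt T(1+\tfrac{1}{\lambda_\theta\lambda_A^3}\sqrt{Td_A/d_KN})\big)$ bound.

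I would first fix a good event combining three ingredients, each holding with probability $1-\delta/3$: (i) the SOLD guarantee $\|\hat\BU\hat\BU^\top-\BU_\star\BU_\star^\top\|_2\le\Delta_{\off}$ from Theorem~\ref{thm:delta-off}, with $\Delta_{\off}=\widetilde O(\tfrac{1}{\lambda_\theta\lambda_A^3}\sqrt{d_Kd_A/N})$; (ii) the usual self-normalized bound for the full ridge estimator $\hat\bbeta_{2,t}$, giving validity of the $\alpha_{2,t}=\widetilde O(R\sqrt{d_A})$ ellipsoid uniformly in $t$; and (iii) the key new ingredient, a self-normalized bound for the ridge estimator \emph{restricted to} $\mathrm{range}(\BU_\star)$. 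Since $\bbeta_\star=\BU_\star\btheta_\star$ lies exactly in that subspace, setting $\psi_s:=\BU_\star^\top\phi(x_s,a_s)$ turns the reward model into a genuine $d_K$-dimensional regression $r_s=\psi_s^\top\btheta_\star+\epsilon_s$ with design matrix $\BU_\star^\top\BV_t\BU_\star$, so Abbasi-Yadkori's inequality yields exactly the stated width $\alpha_{1,t}=R\sqrt\mu+CR\sqrt{d_K\log(2T/\delta)}$ — with the crucial $\sqrt{d_K}$ rather than $\sqrt{d_A}$. These bounds are uniform in $t$, so no further union bound over $t$ is needed.

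On this event I would establish optimism by exhibiting a feasible point. Take $\BU_\star$ in the rotation best-aligned with $\hat\BU$; then $(\bbeta=\bbeta_\star,\ \BU=\BU_\star,\ a=a^\star_t)$ satisfies every constraint of step~4: the two ellipsoid constraints by (ii)--(iii), $\BU_\star^\top\BU_\star=\BI_{d_K}$ and $\BU_\star\BU_\star^\top\bbeta_\star=\bbeta_\star$ by Definition~\ref{defn:latent-linear-bandit}, $\|\bbeta_\star\|_2\le R$ by Assumption~\ref{assum:coverage}, and the Frobenius constraint because $\|\hat\BU\hat\BU^\top-\BU_\star\BU_\star^\top\|_F^2=2(d_K-\|\hat\BU^\top\BU_\star\|_F^2)$, which with (i) gives $\|\hat\BU^\top\BU_\star\|_F\ge\sqrt{d_K-\Delta_{\off}^2/2}$ (the $\sqrt{2d_K}$ factor in Theorem~\ref{thm:delta-off} already absorbs the conversion between the two-norm and the Frobenius bound needed here). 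Feasibility and maximality in step~4 then give $\phi(x_t,a_t)^\top\widetilde\bbeta_t\ge\phi(x_t,a^\star_t)^\top\bbeta_\star$, so the instantaneous regret is at most $\phi(x_t,a_t)^\top(\widetilde\bbeta_t-\bbeta_\star)$.

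To control $\sum_{t}\phi(x_t,a_t)^\top(\widetilde\bbeta_t-\bbeta_\star)$ I would split $\phi_t:=\phi(x_t,a_t)=\hat\BU\hat\BU^\top\phi_t+(\BI-\hat\BU\hat\BU^\top)\phi_t$. For the orthogonal piece, $\widetilde\bbeta_t\in\mathrm{range}(\widetilde\BU_t)$ with $\widetilde\BU_t$ within $O(\Delta_{\off})$ of $\hat\BU$ (by the Frobenius constraint) and $\bbeta_\star\in\mathrm{range}(\BU_\star)$ within $\Delta_{\off}$ of $\mathrm{range}(\hat\BU)$ (by (i)), so both are $O(\Delta_{\off}R)$-close to $\mathrm{range}(\hat\BU)$ and this piece contributes $O(\Delta_{\off}R)$ per step, i.e. $O(\Delta_{\off}RT)=\widetilde O\big(\tfrac{RT}{\lambda_\theta\lambda_A^3}\sqrt{d_Kd_A/N}\big)$ in total. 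For the parallel piece, write it as $(\hat\BU^\top\phi_t)^\top\hat\BU^\top(\widetilde\bbeta_t-\bbeta_\star)$, transfer the two ellipsoid constraints (phrased in $\widetilde\BU_t$- and $\BU_\star$-coordinates) into $\hat\BU$-coordinates using perturbation bounds for $(\BU^\top\BV_t\BU)^{-1}$ over subspaces within $\Delta_{\off}$ of $\hat\BU$, obtaining $\|\hat\BU^\top(\widetilde\bbeta_t-\bbeta_\star)\|_{\hat\BU^\top\BV_t\hat\BU}\le 2\alpha_{1,t}+O(\Delta_{\off}\,\mathrm{poly}(R,d_A))$; Cauchy--Schwarz bounds this piece by that quantity times $\|\hat\BU^\top\phi_t\|_{(\hat\BU^\top\BV_t\hat\BU)^{-1}}$, and since $\hat\BU^\top\BV_t\hat\BU=\BI_{d_K}+\sum_{s<t}(\hat\BU^\top\phi_s)(\hat\BU^\top\phi_s)^\top$ and $\|\hat\BU^\top\phi_t\|_{(\hat\BU^\top\BV_t\hat\BU)^{-1}}\le\|\phi_t\|_{\BV_t^{-1}}\le1$, the $d_K$-dimensional elliptical potential lemma gives $\sum_t\|\hat\BU^\top\phi_t\|_{(\hat\BU^\top\BV_t\hat\BU)^{-1}}=O(\sqrt{d_KT\log T})$, hence an $O(Rd_K\sqrt T\log T)$ contribution (the perturbation terms again fold into the $\Delta_{\off}RT$ scale). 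Adding the two pieces and rewriting $\Delta_{\off}RT$ as $Rd_K\sqrt T\cdot\tfrac{1}{\lambda_\theta\lambda_A^3}\sqrt{Td_A/d_KN}$, then taking the minimum with the $O(Rd_A\sqrt T)$ bound of the first paragraph, yields the theorem. The main obstacle is exactly this perturbation bookkeeping in the last paragraph: the constraints are stated relative to each chosen subspace $\widetilde\BU_t$ and its own regularized design matrix, whereas the potential argument needs a single fixed $\hat\BU$, so one must verify that every cross term is genuinely $O(\Delta_{\off}\cdot\mathrm{poly})$ and does not contaminate the leading $d_K\sqrt T$ term; a secondary subtlety is pinning down the rotation aligning $\BU_\star$ with $\hat\BU$ so the Frobenius feasibility constraint holds with the stated constant.
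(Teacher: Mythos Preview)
Your plan is correct and matches the paper's structure: verify that $(\bbeta_\star,\BU_\star,a_t^\star)$ is feasible on a single $1-\delta$ event, deduce optimism, and then bound the instantaneous regret separately via the $d_A$- and the $d_K$-ellipsoid constraints, taking the minimum. The one real difference is the decomposition you use for the low-dimensional branch. You split $\phi(x_t,a_t)$ through the \emph{estimated} projector $\hat\BU\hat\BU^\top$; the paper instead uses the algorithm's constraint $\widetilde\BU_t\widetilde\BU_t^\top\widetilde\bbeta_t=\widetilde\bbeta_t$ together with $\BU_\star\BU_\star^\top\bbeta_\star=\bbeta_\star$ to write
\[
\phi(x_t,a_t)^\top(\widetilde\bbeta_t-\bbeta_\star)=\phi(x_t,a_t)^\top(\widetilde\BU_t\widetilde\BU_t^\top-\BU_\star\BU_\star^\top)\widetilde\bbeta_t+\phi(x_t,a_t)^\top\BU_\star\BU_\star^\top(\widetilde\bbeta_t-\bbeta_\star),
\]
bounds the first piece by $O(R\Delta_{\off})$ via the triangle inequality through $\hat\BU$, and runs Cauchy--Schwarz and the elliptical-potential lemma for the second piece entirely in $\BU_\star$-coordinates with the \emph{fixed} design $\BU_\star^\top\BV_t\BU_\star$. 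That choice is more economical than yours: $\bbeta_\star$ lives exactly in $\mathrm{range}(\BU_\star)$, so the $\bbeta_\star$-side of the $d_K$-ellipsoid bound is native and the design matrix never has to be perturbed. By routing through $\hat\BU$ you force yourself to transfer \emph{both} endpoints and the design simultaneously, which is precisely the ``perturbation bookkeeping'' you flag as the main obstacle; projecting through $\BU_\star$ as the paper does removes roughly half of it (only the $\widetilde\bbeta_t$ side, whose constraint is stated in $\widetilde\BU_t$-coordinates, still needs a transfer). Either route lands on the same bound.
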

However, the subspace constraint $\|\hat \BU^\top \BU\|_F \geq \sqrt{d_K - \Delta_{\off}^2/2}$ is \textit{nonconvex}. In fact, we lower bound a convex function in the constraint, making us search for $\widetilde{\BU}_t$ over a complicated star-shaped set. So, it is unclear if LOCAL-UCB can be made computationally efficient.

\section{Lower Bound}

We now establish that LOCAL-UCB is in fact minimax optimal. While we provide a full statement and proof of our lower bound in Appendix~\ref{app:lower-bounds}, we provide an informal version here. Much like how we generate families of reward parameters in lower bound proofs for purely online regret, we are now generating a family of tuples of latent bandits (for the offline data) and reward parameters represented in the latent bandit (for the online interaction). 

\begin{theorem}
    There exists a family of tuples $(F, \bbeta)$, where $F$ is a latent bandit with a rank $d_K$ latent subspace and $\bbeta$ is a reward parameter in its support, so that for any offline behavior policy $\pi_b$ and any learner, $(i)$ $\lambda_\theta$ is uniformly bounded for all $F$, $(ii)$ there exists a $(F, \bbeta)$ such that the regret $\Reg(T, \bbeta)$ of the learner under offline data from $\pi_b$ and $F$ and online reward parameter $\bbeta$ is bounded below by
    \begin{align*}
        \Omega\left(\min\left(d_A\sqrt{T}, d_K\sqrt{T}\left(1 + \sqrt{\frac{d_AT}{d_KN}} \right)\right)\right)
    \end{align*}
\end{theorem}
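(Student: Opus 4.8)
The plan is to construct a hard family of instances by combining two classical lower-bound mechanisms and letting the "effective dimension" interpolate between $d_K$ and $d_A$ exactly as the upper bound does. The regret bound is a minimum of two terms, so it suffices to exhibit \emph{one} family witnessing each term and argue the learner cannot beat both simultaneously --- actually, since the lower bound is a minimum, it is enough to show that for every learner there is an instance in a \emph{single} family on which the regret is $\Omega$ of the minimum. I would proceed as follows.

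\textbf{Step 1: The $d_A\sqrt T$ regime (small or useless offline data).} Fix a single latent subspace $\BU_\star$ and take $\gD_\theta$ to be (nearly) isotropic on a $d_K$-dimensional set, so that $\lambda_\theta$ is bounded below by an absolute constant for every instance in the family, as required by $(i)$. Embed the standard contextual-bandit lower bound (à la \citet{abbasi2011improved}, or the packing argument of Chu et al.) \emph{inside} the $d_A$-dimensional ambient space but in a way that the offline behavior policy $\pi_b$ genuinely cannot localize the online parameter: choose the offline contexts/actions so that the component of $\bbeta_\star$ relevant online is orthogonal to (or only weakly excited by) the offline design. Concretely, reserve a block of coordinates that the offline data never probes; within that block run the $2^{\Theta(d_A)}$-packing hard instance. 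This forces $\Omega(d_A\sqrt T)$ regardless of $N$, matching the first branch.

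\textbf{Step 2: The $d_K\sqrt T(1+\sqrt{d_AT/(d_KN)})$ regime.} Here the offline data \emph{does} help localize the subspace, but only up to the estimation error $\Delta_{\off} \asymp \frac{1}{\lambda_\theta\lambda_A^3}\sqrt{d_Kd_A/N}$ from Theorem~\ref{thm:delta-off}. The idea is that within a $\Delta_{\off}$-neighborhood of the estimated subspace, the adversary can still hide a full $d_A$-dimensional perturbation of magnitude $\Theta(\Delta_{\off})$, and \emph{inside} the $d_K$-dimensional subspace the adversary hides the usual $d_K$-dimensional hard instance. Splitting the horizon, the online learner pays $\Omega(d_K\sqrt T)$ for the in-subspace uncertainty and an additional $\Omega(\Delta_{\off}\cdot d_A\sqrt T / \sqrt{d_A})=\Omega(\sqrt{d_A}\,\Delta_{\off}\sqrt T)$-type term for the out-of-subspace slack; plugging in $\Delta_{\off}$ and bookkeeping gives the $d_K\sqrt T\cdot\sqrt{d_AT/(d_KN)}$ cross term. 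The cleanest way to make this rigorous is an information-theoretic (Assouad / Fano) argument over the joint law of (offline dataset, online rewards): I would lower bound the mutual information the offline data of size $N$ can carry about the hidden subspace direction by $O(N\lambda_A^{-?}\cdots)$, invert to get the residual uncertainty, and then feed that residual uncertainty into a second Assouad argument over the online rounds.

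\textbf{Step 3: Combining.} Take the family to be the union of the two constructions above, with the adversary free to choose which branch is active; since the learner must do well against both, its worst-case regret over the family is $\Omega(\min(\cdot,\cdot))$, and the $\lambda_\theta$-uniformity from $(i)$ is preserved because both constructions used the same near-isotropic $\gD_\theta$. I would then verify that the constructed $F$'s indeed have rank exactly $d_K$ latent subspaces and that $\bbeta$ lies in the support, and that the coverage constant $\lambda_A$ induced by the chosen $\pi_b$ is a fixed constant (or is tracked explicitly so the $1/(\lambda_\theta\lambda_A^3)$ dependence can be discussed as in the upper bound).

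The main obstacle, I expect, is \textbf{Step 2}: faithfully coupling the offline information constraint to the online regret lower bound. One must show that $N$ samples from the \emph{non-adaptive} behavior policy cannot shrink the uncertainty about the relevant subspace direction below the $\Delta_{\off}$ rate --- in particular that the per-trajectory estimators $\hat\bbeta_{n,i}$ really do have variance scaling like $1/\lambda_A$ and that averaging $N$ of them cannot do better than $\sqrt{d_Kd_A/N}$ in operator norm --- and then, \emph{simultaneously}, that this residual uncertainty translates into an $\Omega(\sqrt{d_AT/(d_KN)})$ multiplicative inflation of the $d_K\sqrt T$ online regret rather than merely an additive lower-order term. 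Getting the two Assouad hypercubes (offline subspace bits, online reward bits) to be jointly ``hard'' without their hardness cancelling, and matching the exact exponents on $d_A, d_K, N, T$, is the delicate part; the rest is a standard change-of-measure / KL bookkeeping exercise.
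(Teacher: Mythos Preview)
Your high-level picture is right --- two scales of hardness (in-subspace vs.\ out-of-subspace), joint offline+online information argument --- but the concrete mechanism you propose has two real gaps, and you are missing the key construction that makes the paper's proof work.

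\textbf{Gap in Step 1.} You write ``choose the offline contexts/actions so that the component of $\bbeta_\star$ relevant online is orthogonal to the offline design.'' But the theorem quantifies over \emph{all} behavior policies $\pi_b$: the family must be fixed first, and the bound must hold whatever $\pi_b$ the offline data came from. You cannot engineer the offline design to be uninformative. The paper's device is entirely different: it builds the \emph{latent bandit} $F_{\bbeta}$ as the uniform distribution over all $2^{d_K-1}$ sign-flips of the first $d_K-1$ coordinates of $\bbeta$. Then if $\bbeta$ and $\bbeta'$ differ only in an in-subspace coordinate, $F_{\bbeta}=F_{\bbeta'}$ as latent bandits, so the offline data distributions are \emph{identical for any} $\pi_b$ and the offline KL contribution is exactly zero. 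This is the trick you are missing, and without it Step~1 does not go through.

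\textbf{Gap in Step 2.} Your two-stage plan (bound the mutual information the offline data carries about the subspace, ``invert'' to a residual uncertainty $\Delta_{\off}$, then run a second online Assouad) is not a valid lower-bound argument: a learner need not estimate the subspace at all to achieve low regret, so a bound on subspace-estimation error does not by itself force online regret. The paper instead runs a \emph{single} Assouad argument over a two-scale hypercube $\gB=\{\pm\Delta_{\inside}\}^{d_K-1}\times\{0\}\times\{\pm\Delta_{\out}\}^{d_A-d_K}$ with $\Delta_{\inside}\asymp\sqrt{d_K/T}$ and $\Delta_{\out}\asymp\sqrt{d_K/N}$. For each coordinate flip it bounds the KL between the \emph{joint} laws of (offline data, online interaction): in-subspace flips contribute zero offline KL by the $F_{\bbeta}=F_{\bbeta'}$ trick above, and out-of-subspace flips contribute at most $\Delta_{\out}^2\sum_{r,h}A_{r,h,i}^2\le \Delta_{\out}^2 NH$ regardless of $\pi_b$. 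Stopping-time control of the online KL (as in Theorem~24.2 of Lattimore--Szepesv\'ari) and the usual averaging over the hypercube then give both pieces of the bound simultaneously. Your ``two hypercubes'' intuition is close, but it should be one product hypercube with two magnitudes, analyzed in one pass; and the paper handles the three regimes of $d_KT/((d_A-d_K)N)$ separately rather than by taking a union of families.
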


\section{Practical Optimism with ProBALL-UCB}

\begin{figure}[tbp!]
    \centering
    \includegraphics[width=\linewidth]{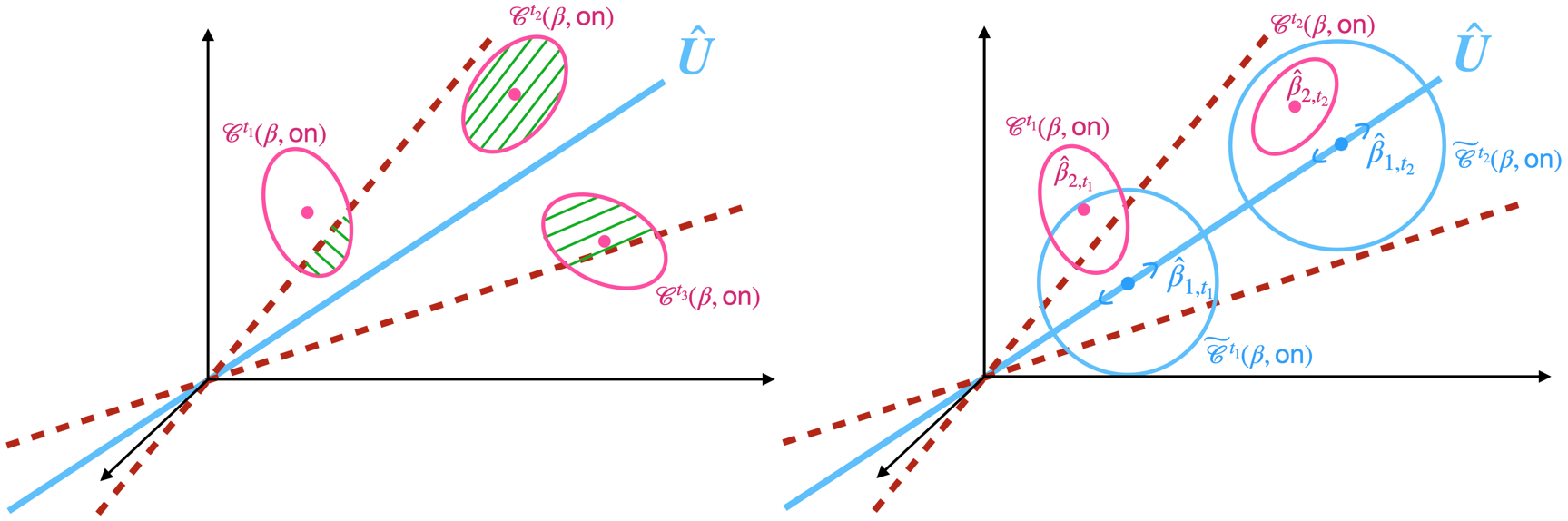}
    \caption{\textbf{Left:} Geometric interpretations of LOCAL-UCB. Showing $\gC^{t}_{\on}(\bbeta) \cap \gC^{t}_{\off}(\bbeta)$ in green for three timepoints $t=t_1, t_2, t_3$. The dotted lines delineate the subspace confidence set. \textbf{Right:} Geometric interpretation of ProBALL-UCB. $\gC^{t_1}_{\on}(\bbeta) \not\subset \widetilde \gC^{t_1}_{\off}(\bbeta)$, so we continue to use projections; but by time $t_2,$ $\gC_{\on}^{t_2}(\bbeta) \subset \widetilde \gC_{\off}^{t_2}(\bbeta)$, so we stop using projections.}
    \label{fig:geometric-motivation}
     
\end{figure}
While LOCAL-UCB is minimax-optimal, it is not computationally efficient due to the non-convex constraint discussed in Section~\ref{sec:local-ucb}. We address this by introducing ProBALL-UCB (Algorithm~\ref{alg:proball-ucb}), a practical and computationally efficient algorithm. In this section, we first sketch the algorithm and then describe how it can be geometrically motivated as a relaxation of LOCAL-UCB.

ProBALL-UCB works in the subspace estimated by SOLD until the online confidence set is small enough. The algorithm maintains a low-dimensional confidence set, a high-dimensional confidence set, and swaps between them to achieve acceleration. Once the cumulative error of using the low-dimensional confidence set ($\approx \Delta_{\off}T$ in ProBALL-UCB) exceeds the cumulative error of using the high-dimensional confidence set ($\approx d_A\sqrt{T}$), we stop using the former. This is instantiated with LinUCB, but the same idea can be immediately applied to other algorithms like SupLinUCB or Bayesian algorithms like Thompson sampling. 


We geometrically motivate our update rule here as a relaxation of LOCAL-UCB, and illustrate it in Figure~\ref{fig:geometric-motivation}, like in Section~\ref{sec:local-ucb}. 
We go through three stages of simplification over LOCAL-UCB, which suprisingly only leads to a minor degradation in provable guarantees.
 
\begin{itemize}  
\item Cruder offline confidence sets are used. We take the subspace estimated by SOLD, compute a point estimate for $\beta_\star$ within the subspace, and construct a ball that contains the LOCAL-UCB offline confidence set. The online confidence set is still the standard $d_A$-dimensional ellipsoid.
\item We wait for the offline confidence ball to contain the online confidence set, instead of taking intersections.
\item We use a computable proxy for this subset condition instead of explicitly checking it.
\end{itemize}
 

\begin{algorithm}[htbp!]
\centering
	\caption{Projection and Bonuses for Accelerating Latent bandit Linear UCB (ProBALL-UCB)}
    \label{alg:proball-ucb}
	\begin{algorithmic}[1]
	    \STATE \textbf{Input:} Projection matrix $\hat \BU \hat \BU^\top$, confidence bound $\Delta_{\off}$. Hyperparameters $\alpha_{1,t}, \alpha_{2,t}, \tau, \tau'$.
        \STATE \textbf{Initialize} $\BV_1 \gets I$, $\bb_{1} \gets 0$, $\BC_t \gets 0$
        \FOR{$t= 1,\dots T$}
            \IF{$ \Delta_{\off} \tau\sqrt{t} + \Delta_{\off}\tau'\sqrt{{d_K}\textstyle{\sum}_{s=1}^t \kappa_s^2/t} \leq d_A$}
            \STATE \textbf{Compute} $\hat \bbeta_{1,t} \gets \hat \BU (\hat \BU^\top \BV_{t} \hat \BU)^{-1} \hat \BU^\top \bb_t$
            \STATE \textbf{Play} $a_t \gets \arg\max_{a} \phi(x_t, a)^\top \hat \BU \hat \BU^\top \hat \bbeta_{1,t} 
            + \alpha_{1,t}\Vert\phi(x_t, a)^\top \hat \BU\Vert_{(\hat \BU^\top \BV_t \hat \BU)^{-1}}$
            \ELSE 
            \STATE \textbf{Compute} $\hat \bbeta_{2,t} \gets \BV_{t}^{-1} \bb_t$
            \STATE \textbf{Play} $a_t \gets \arg\max_{a} \phi(x_t, a)^\top \hat \bbeta_{2,t} + \alpha_{2,t}\left\Vert\phi(x_t, a)\right\Vert_{\BV_t^{-1}}$
            \ENDIF
            \STATE \textbf{Observe} reward $r_t$ and update $\bb_{t+1} \gets \bb_t + \phi(x_t, a)r_t$, $\BV_{t+1} \gets \BV_t + \phi(x_t, a)\phi(x_t, a)^\top$
            \STATE \textbf{Update} $\BC_{t+1} \gets \BC_t + \hat \BU^\top \phi(x_t, a_t)\phi(x_t, a_t)^\top $, $\kappa_{t+1} \gets \|\BC_{t+1}\|_{(\hat \BU^\top \BV_{t+1} \hat \BU)^{-1}}$
        \ENDFOR
	\end{algorithmic}
\end{algorithm}
As a final note before the regret bound, there is a technical challenge with analyzing ProBALL-UCB. Since $\hat \bbeta_{1,t}$ lies in $\hat \BU$ but $\bbeta_\star$ might not, the $d_K$-dimensional confidence ellipsoid bound no longer applies. We therefore prove our own confidence ellipsoid bound in Lemma~\ref{lem:self-norm-conc}, to bypass this issue.
 
\begin{restatable}[Regret for ProBALL-UCB]{theorem}{ProBALLUCB}\label{thm:proball-ucb}
    Let $\alpha_{1,t} = R\sqrt{\mu} + \tau'R\Delta_{\off}\kappa_t + CR\sqrt{d_K\log(T/\delta)}$ and let $\alpha_{2,t} = R\sqrt{\mu} + CR\sqrt{d_A\log(T/\delta)}$. Let $S$ be the first timestep when Algorithm~\ref{alg:proball-ucb} does not play Line 6 and let $S=T$ if no such timestep exists. For $\tau = \tau' = 1$ we have that 
    \begin{align*}
     \Reg_T &= \textstyle{\widetilde O\left(\min\left(\Reg_{on,T}, \Reg_{hyb, T}\right)\right)}.
\end{align*}
where $\Reg_{on,T} = Rd_A\sqrt{T}$ and $\Reg_{hyb,T} = Rd_K\sqrt{T}\left(1 + \frac{1}{\lambda_A^3\lambda_{\theta}}\left(\sqrt{\frac{d_AT}{d_KN}} + \sqrt{\frac{d_A}{SN}\sum_{t=1}^{S}\kappa_t^2}\right)\right)$
In the worst case, $\kappa_t = O(t)$ and so $\frac{1}{S}\sum_{t=1}^{S}\kappa_t^2 = O(T^2)$, but if all features $\phi(x_t, a_t)$ lie in the span of $\hat \BU$ for $t \leq S$, then $\frac{1}{S}\sum_{t=1}^{S}\kappa_t^2 = O(T)$.
\end{restatable}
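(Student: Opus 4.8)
The plan is to split the horizon into the rounds $P_1$ on which Algorithm~\ref{alg:proball-ucb} plays the low-dimensional optimistic rule (Line 6) and the rounds $P_2$ on which it plays vanilla LinUCB (Line 9), bound the regret incurred on each set by an optimism argument, and then combine the two estimates via the switching test in Line 4 to obtain the minimum of $\Reg_{on,T}$ and $\Reg_{hyb,T}$. Throughout I would condition on the intersection of the offline event $\{\|\hat \BU\hat \BU^\top-\BU_\star\BU_\star^\top\|_2\le\Delta_{\off}\}$ from Theorem~\ref{thm:delta-off} and the two online self-normalized events, which together hold with probability $1-O(\delta)$ under Assumptions~\ref{assum:unconf-offline-actions} and~\ref{assum:coverage}. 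The key preliminary is the confidence ellipsoid of Lemma~\ref{lem:self-norm-conc}: since $\bbeta_\star=\BU_\star\BU_\star^\top\bbeta_\star$ need not lie in the column space of $\hat \BU$, I would write each reward as $r_s=\phi(x_s,a_s)^\top\hat \BU\hat \BU^\top\bbeta_\star+\phi(x_s,a_s)^\top(\BI-\hat \BU\hat \BU^\top)\bbeta_\star+\epsilon_s$ and expand $\hat \BU^\top\bb_t$, so that with $\theta_\star:=\hat \BU^\top\bbeta_\star$ one gets $\hat \BU^\top\hat\bbeta_{1,t}-\theta_\star=-\mu(\hat \BU^\top\BV_t\hat \BU)^{-1}\theta_\star+(\hat \BU^\top\BV_t\hat \BU)^{-1}\BC_t(\BI-\hat \BU\hat \BU^\top)\bbeta_\star+(\hat \BU^\top\BV_t\hat \BU)^{-1}\sum_s(\hat \BU^\top\phi(x_s,a_s))\epsilon_s$, with $\BC_t$ exactly the matrix maintained in Line 12. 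Taking the $\hat \BU^\top\BV_t\hat \BU$-norm and bounding term by term — the ridge term by $R\sqrt\mu$; the bias term by $\kappa_t\|(\BI-\hat \BU\hat \BU^\top)\bbeta_\star\|_2\le\kappa_t\Delta_{\off}R$ using $\|(\BI-\hat \BU\hat \BU^\top)\BU_\star\BU_\star^\top\|_2\le\|\hat \BU\hat \BU^\top-\BU_\star\BU_\star^\top\|_2$ and Theorem~\ref{thm:delta-off}; the martingale term by $CR\sqrt{d_K\log(T/\delta)}$ via the $d_K$-dimensional self-normalized inequality — yields $\|\theta_\star-\hat \BU^\top\hat\bbeta_{1,t}\|_{\hat \BU^\top\BV_t\hat \BU}\le\alpha_{1,t}$ for all $t$, which is Lemma~\ref{lem:self-norm-conc}.

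\textbf{Per-round regret and summation.} For $t\in P_1$ the usual optimism argument, together with $|\phi(x_t,a)^\top(\BI-\hat \BU\hat \BU^\top)\bbeta_\star|\le\Delta_{\off}R$ applied at both $a^\star_t$ and the played $a_t$, gives instantaneous regret at most $2\alpha_{1,t}\|\hat \BU^\top\phi(x_t,a_t)\|_{(\hat \BU^\top\BV_t\hat \BU)^{-1}}+2\Delta_{\off}R$, and for $t\in P_2$ the standard $2\alpha_{2,t}\|\phi(x_t,a_t)\|_{\BV_t^{-1}}$. Since $\BV_{t+1}=\BV_t+\phi(x_t,a_t)\phi(x_t,a_t)^\top$ on every round (regardless of phase), the elliptical-potential sums $\sum_t\|\phi(x_t,a_t)\|^2_{\BV_t^{-1}}=O(d_A\log T)$ and $\sum_t\|\hat \BU^\top\phi(x_t,a_t)\|^2_{(\hat \BU^\top\BV_t\hat \BU)^{-1}}=O(d_K\log T)$ both hold with $\mu=1$; hence Cauchy--Schwarz over $P_2$ gives $\widetilde O(Rd_A\sqrt T)$, the $R\sqrt\mu$ and $CR\sqrt{d_K\log(T/\delta)}$ parts of $\alpha_{1,t}$ summed over $P_1$ give $\widetilde O(Rd_K\sqrt T)$, the $\kappa_t$-part of $\alpha_{1,t}$ gives $R\Delta_{\off}\sqrt{\sum_{t\in P_1}\kappa_t^2}\cdot\widetilde O(\sqrt{d_K})$ by a second Cauchy--Schwarz, and the explicit bias accumulates to $2\Delta_{\off}R|P_1|$.

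\textbf{Extracting the minimum.} Plugging $\Delta_{\off}=\widetilde O(\lambda_\theta^{-1}\lambda_A^{-3}\sqrt{d_Kd_A/N})$ from Theorem~\ref{thm:delta-off} and using $|P_1|\le S$, $\sum_{t\in P_1}\kappa_t^2\le\sum_{t=1}^S\kappa_t^2$ turns the two $P_1$ bias terms into (up to log factors) $Rd_K\sqrt T\,\lambda_A^{-3}\lambda_\theta^{-1}(\sqrt{d_AT/d_KN}+\sqrt{(d_A/SN)\sum_{t\le S}\kappa_t^2})$, so the total is $\widetilde O(\Reg_{hyb,T})$. For the $\Reg_{on,T}$ branch I would instead exploit that the Line 4 test held at the last round $t^\star:=\max P_1$, i.e.\ (for $\tau=\tau'=1$) $\Delta_{\off}\sqrt{t^\star}\le d_A$ and $\Delta_{\off}\sqrt{d_K\sum_{s\le t^\star}\kappa_s^2/t^\star}\le d_A$: the first forces $2\Delta_{\off}R|P_1|\le 2Rd_A\sqrt{t^\star}$ and the second forces $R\Delta_{\off}\sqrt{d_K\sum_{t\in P_1}\kappa_t^2}\le Rd_A\sqrt{t^\star}$, both $\widetilde O(Rd_A\sqrt T)$; together with the $\widetilde O(Rd_K\sqrt T)$ remaining $P_1$ terms and the $\widetilde O(Rd_A\sqrt T)$ $P_2$ terms this is $\widetilde O(\Reg_{on,T})$, and taking the smaller of the two bounds finishes the claim. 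The last two sentences of the statement are then elementary: $\|\BC_t\|_{(\hat \BU^\top\BV_t\hat \BU)^{-1}}\le\|\BC_t\|_2/\sqrt\mu\le t/\sqrt\mu$ always, so $\tfrac1S\sum_{t\le S}\kappa_t^2=O(T^2)$; and if every played feature up to time $S$ lies in the span of $\hat \BU$ then $\BC_t=(\hat \BU^\top\BV_t\hat \BU-\mu\BI)\hat \BU^\top$, and $(\BA-\mu\BI)\BA^{-1}(\BA-\mu\BI)\preceq\BA$ gives $\kappa_t\le\sqrt{\mu+t}=O(\sqrt t)$, hence $\tfrac1S\sum_{t\le S}\kappa_t^2=O(T)$.

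\textbf{Main obstacle.} The crux is the confidence ellipsoid Lemma~\ref{lem:self-norm-conc}: building a valid, data-dependent confidence set for the projected estimator $\hat\bbeta_{1,t}$ even though the target $\bbeta_\star$ leaves the span of $\hat \BU$ by $\Delta_{\off}$, in particular cleanly separating the genuinely random part (to which the $d_K$-dimensional self-normalized bound applies) from the deterministic misspecification bias and showing that this bias is amplified by exactly the computable quantity $\kappa_t$ that the algorithm already tracks. A secondary nuisance is that $P_1$ and $P_2$ need not be contiguous; this is handled by carrying out every potential-function sum over the full horizon and only invoking $|P_1|\le\max P_1\le S$ and $\sum_{t\in P_1}\kappa_t^2\le\sum_{t\le S}\kappa_t^2$ at the end.
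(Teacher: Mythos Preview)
Your proposal is correct and follows essentially the same route as the paper: derive the misspecified confidence ellipsoid (Lemma~\ref{lem:self-norm-conc}) by separating the ridge, the $\kappa_t\Delta_{\off}R$ bias, and the $d_K$-dimensional self-normalized martingale terms; establish per-round optimism on each phase; sum via Cauchy--Schwarz and the elliptical-potential lemma in dimensions $d_K$ and $d_A$; and then invoke the Line-4 switching inequality to cap the two bias contributions at $Rd_A\sqrt{t}$ for the $\Reg_{on,T}$ branch. One small slip worth flagging: the inequality $\max P_1\le S$ in your last paragraph is not true in general since the Line-4 test is not monotone in $t$ and can flip back after the first failure at time $S$; the paper's own proof sidesteps this by implicitly treating the phases as contiguous, and your $t^\star=\max P_1$ argument for the $\Reg_{on,T}$ side already goes through without that claim (for $\Reg_{hyb,T}$ one simply uses $|P_1|\le T$ for the linear-in-$|P_1|$ term).
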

While the regret bound looks weaker in the worst case, we emphasize that the "good case" in Theorem~\ref{thm:proball-ucb} is quite common. As an illustrative example, if the feature set $\gF_t = \{\phi(x_t,a) \mid a \in \gA\}$ is an $\ell_2$ ball, then the maximization problem in Step 6 will always choose $a_t$ with $\phi(x_t, a_t)$ in the span of $\hat \BU$. This can also approximately hold if the features are roughly isotropic or close to the span of $\hat \BU$.

\section{Experiments}
\label{sec:experiments}

%

\begin{figure*}[htbp!]
    \centering
    \includegraphics[width=\textwidth/\real{4.1}]{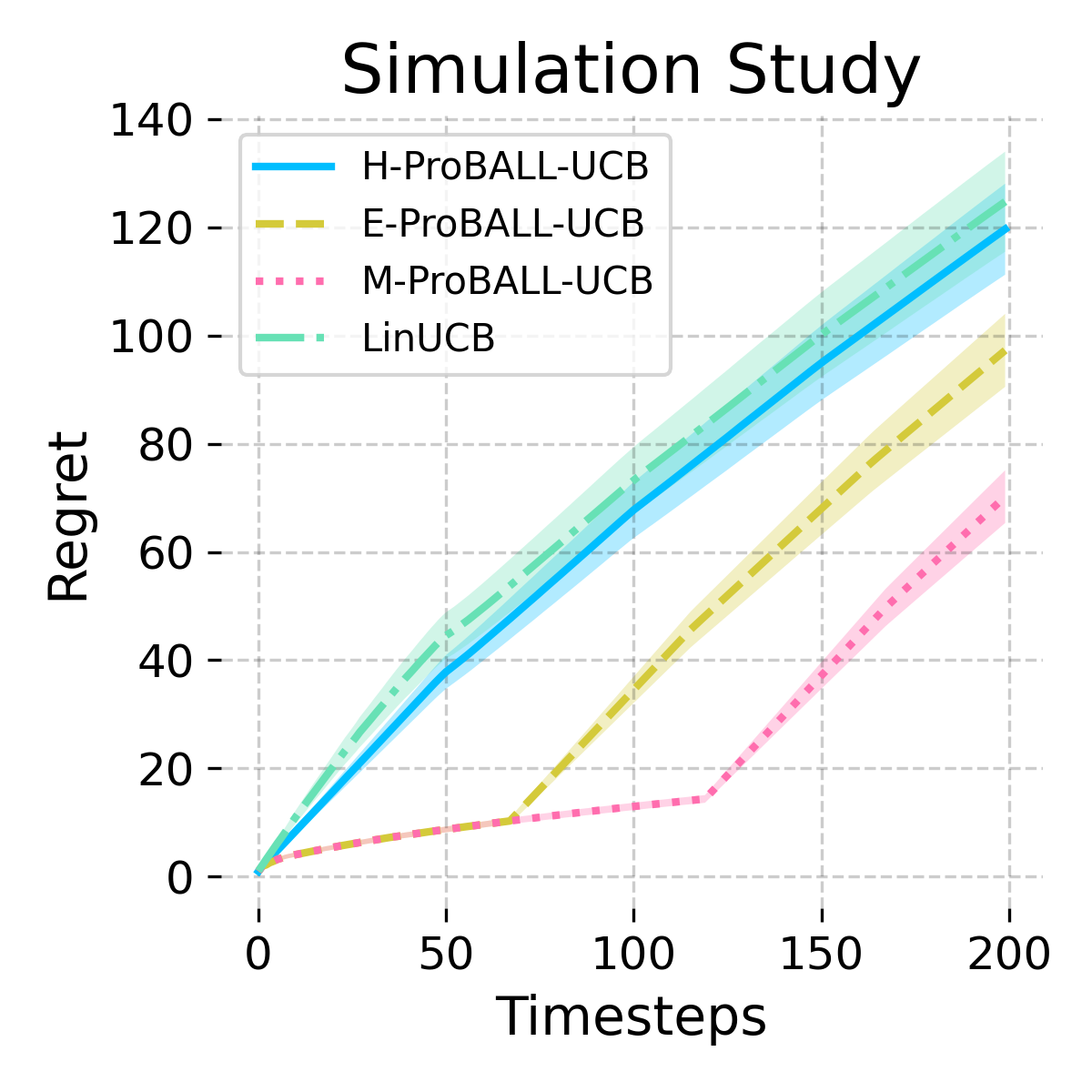}
    \includegraphics[width=\textwidth/\real{4.1}]{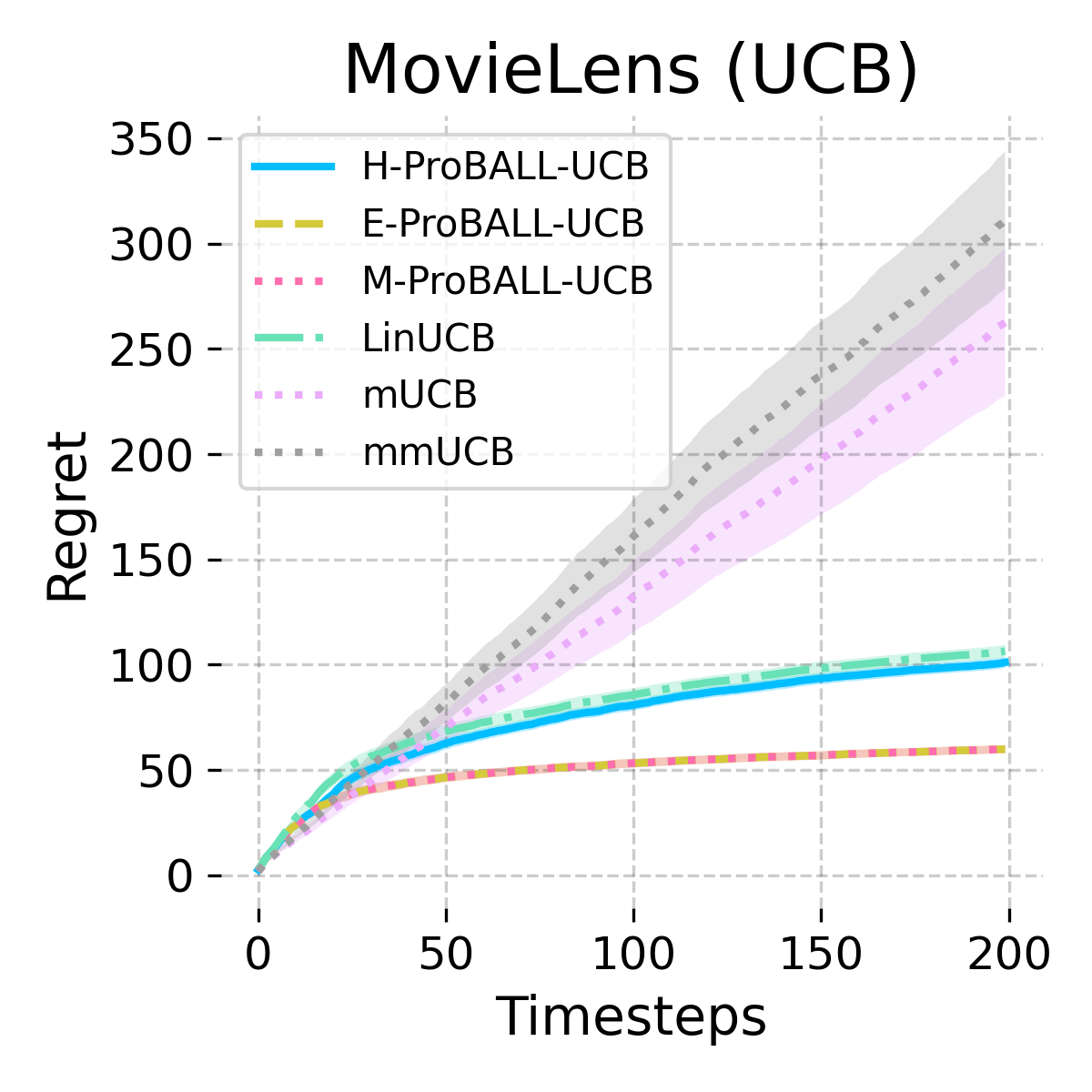}
    \includegraphics[width=\textwidth/\real{4.1}]{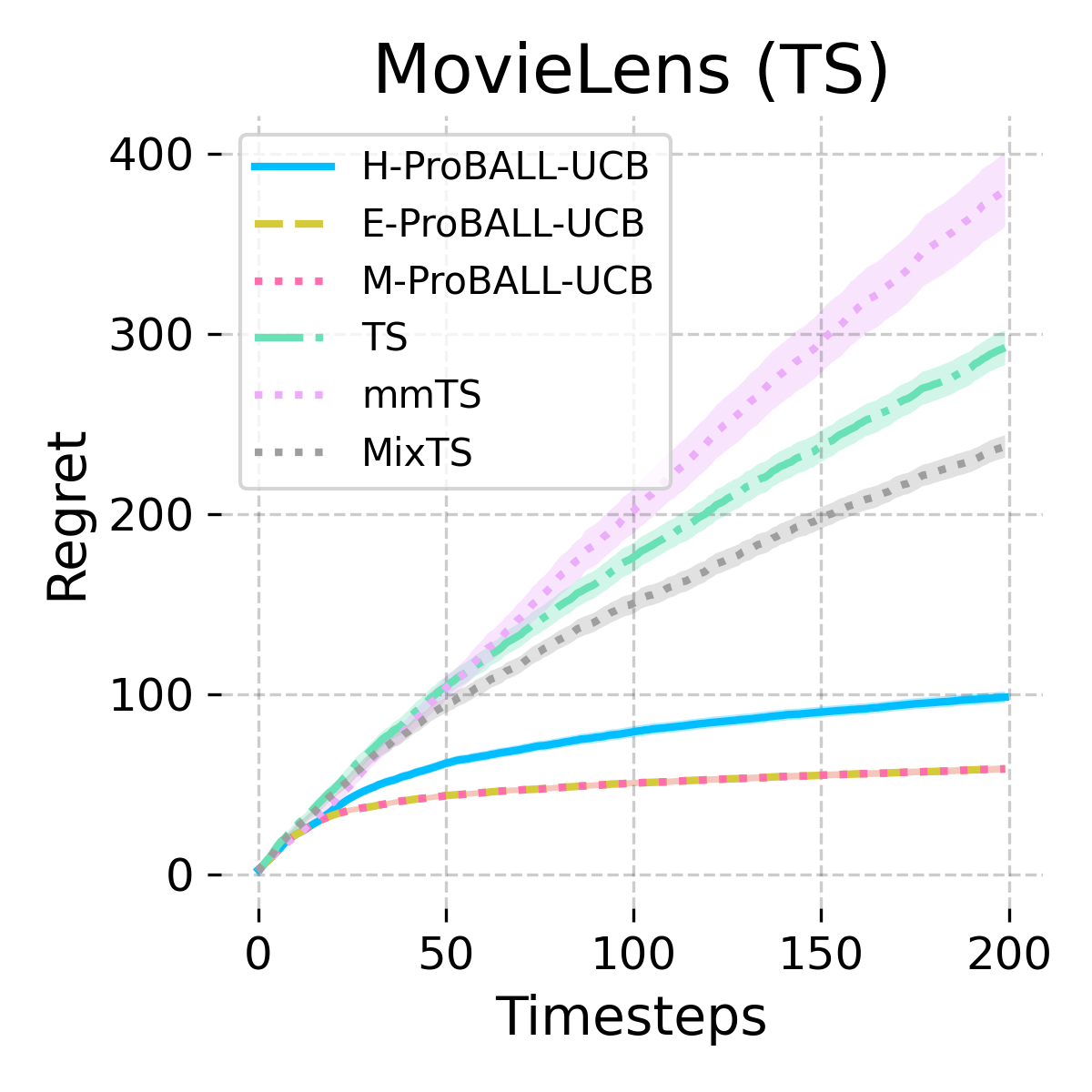}
    \includegraphics[width=\textwidth/\real{4.1}]{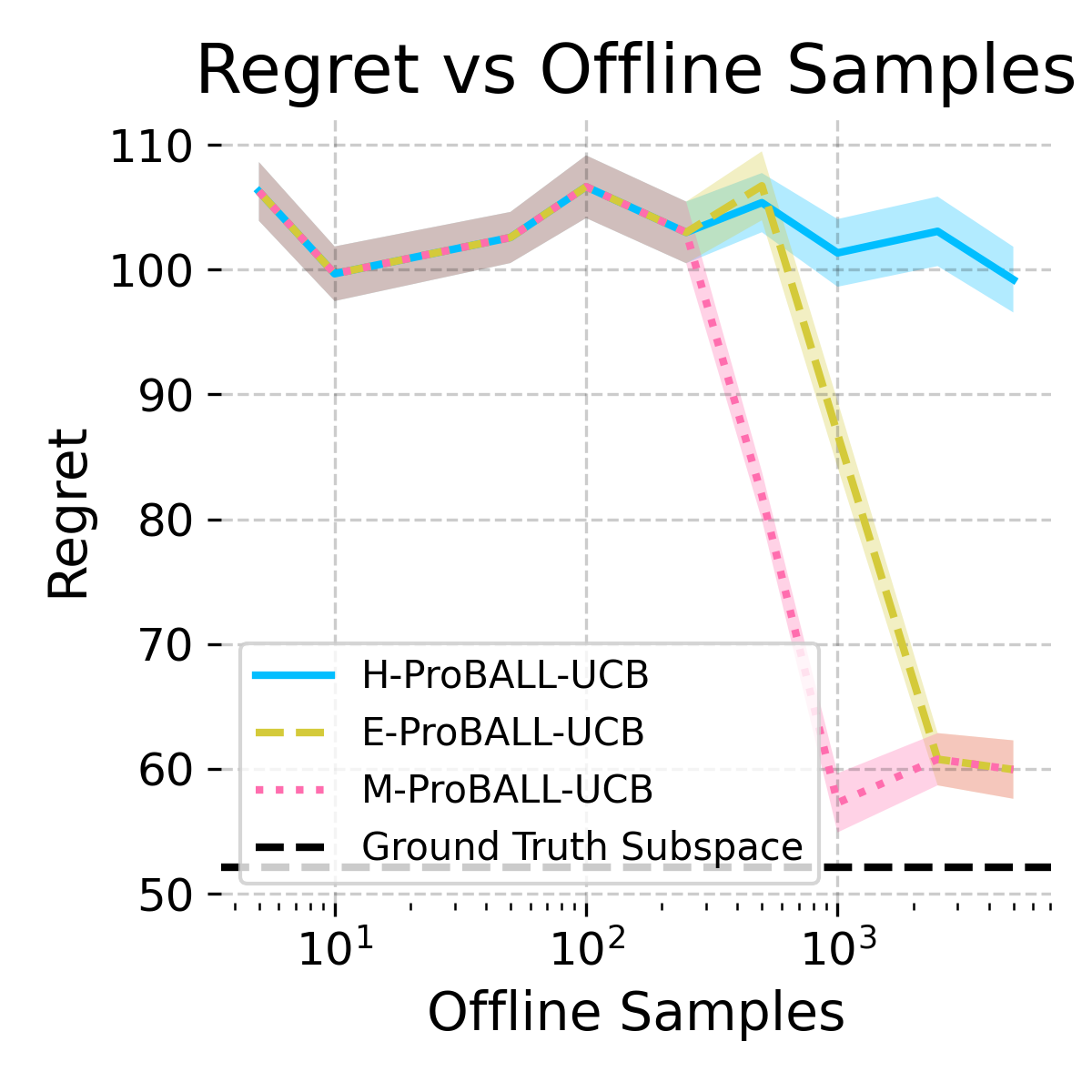}
    \caption{Left to Right. \textbf{First:} Simulation study comparison of ProBALL-UCB against LinUCB for $\tau=5$. \textbf{Second/Third:} Comparison of ProBALL-UCB initialized with SOLD against $\{$LinUCB, mUCB, and mmUCB, TS, mmTS, and MixTS$\}$, for $\tau=0.1$ and various confidence bound constructions. ProBALL-UCB outperforms all other algorithms, and approaches the performance of LinUCB when Hoeffding confidence sets are used. \textbf{Fourth}: ProBALL-UCB regret on MovieLens against offline samples used in SOLD, compared to LinUCB on ground-truth low-dimensional features. Here, $\tau=0.1,T=200$. As the number of offline samples increases, SOLD recovers a low-rank subspace almost as good as ground-truth. The shaded area in each sub-figure depicts $1$-s.e. confidence intervals over $30$ trials with fresh $\btheta$, accounting for the variation in frequentist regret for changing $\btheta$.}
    \label{fig:regret-movies}
     
\end{figure*}

We now establish the practical efficacy of SOLD (Algorithm \ref{alg:sold}) and ProBALL-UCB (Algorithm \ref{alg:proball-ucb}) for linear latent contextual bandits through a series of numerical experiments. We perform a simulation study and a demonstration using real-life data. While specific details of the experiments and many ablation studies are in Appendix~\ref{app:experiments}, we sketch our experiments and discuss key observations in this section.

In all experiments, we obtain confidence bounds $\Delta_{\off}$ using three different concentration inequalities -- (1) Hoeffding as in Proposition \ref{prop:conf-bound-d} (H-ProBALL), (2) empirical Bernstein as in Proposition \ref{prop:conf-bound-bbm} (E-ProBALL), and (3) the martingale Bernstein concentration inequalities of \cite{ian2023betting} (M-ProBALL). We use a simpler expression for $\Delta_{\off}$, set $\tau'=0$, and choose a suitable value of the hyperparameter $\tau$ to adjust for overly conservative $\Delta_{\off}$\footnote{Namely, we set $\Delta_D=0$ in $\Delta_{\off}$. Also, Lemma~\ref{lem:self-norm-conc} and~
some thought reveal that choosing $\tau'=0$ recovers the "good" version of ProBALL-UCB guarantees, if features are isotropic enough.}. We later vary $\tau$ in ablation experiments to demonstrate that our results are not a consequence of our choice of hyperparameters. Finally, for the MovieLens experiments, we additionally design a natural Thompson sampling version of ProBALL-UCB to highlight the applicability of the ProBALL idea called ProBALL-TS in Appendix \ref{app:algorithms}. All experiments for ProBALL-TS are in Appendix~\ref{app:proball-ts}.

\textbf{Simulation study.} We first perform a simulation study on a latent linear bandit with $d_A=50$ and $d_K=2$, with $5000$ trajectories generated offline. Further details are provided in Appendix~\ref{app:experiments}, and the results are presented in Figure \ref{fig:regret-movies}. Note that ProBALL-UCB (Algorithm~\ref{alg:proball-ucb}) performs no worse than LinUCB, no matter what we choose for $\tau$ and $\Delta_{\off}$. However, we see a clear benefit from using tighter confidence bounds -- as $\Delta_{\off}$ gets smaller, Algorithm~\ref{alg:proball-ucb} chooses to utilize the projected estimate $\hat\bbeta_{1,t}$ more often, resulting in better performance. Note that the kinks in the regret curves correspond to points where Algorithm~\ref{alg:proball-ucb} switches over to the higher dimensional optimism in step 7.

\textbf{MovieLens dataset.} In line with \cite{hong2020latent}, we assess the performance of our algorithms on real data using the MovieLens dataset. Like them, we filter the dataset to include only movies rated by at least $200$ users and vice versa, and apply probabilistic matrix factorization (PMF) to the rating matrix to generate ground truth user preferences for online experiments. Applying PMF gives $d_K = 18$ and we choose $d_A = 200$, generating 5000 trajectories offline. For baselines, we reproduce the methods of \cite{hong2020latent, hong2022thompsonsamplingmixtureprior} and implement LinUCB with canonical hyperparameter choices.

We initialize ProBALL-UCB with a subspace estimated with an unregularized variant of SOLD (see Appendix \ref{app:algorithms}) that uses pseudo-inverses instead of inverses. This is due to difficulties in finding an appropriate regularization parameter for this large, noisy, and high-dimensional dataset. Figure \ref{fig:regret-movies} depicts the result of this experiment. Once again, ProBALL-UCB performs no worse than LinUCB, no matter what we choose for $\tau$ and $\Delta_{\off}$, and the benefit of using tighter confidence bounds remains. With $\tau=0.5$, ProBALL-UCB with martingale Bernstein confidence bands stops using the projected estimates at around timestep $70$, but still continues to outperform LinUCB. Although mUCB and mmUCB perform slightly better than ProBALL-UCB and LinUCB at the beginning, the model misspecification incurred by discretizing the features into $d_K$ clusters ensures that it typically suffers linear regret in this scenario. The lower initial performance of ProBALL-UCB and Lin-UCB is a consequence of their higher initial exploration.

\textbf{Ablation study.} While the end-to-end performance of ProBALL-UCB significantly improves over existing algorithms, we also address questions about various components of our method in Appendix \ref{app:experiments}. First, we study the effect of varying the hyperparameter $\tau$ and note that our method stably outperforms existing methods at all reasonable values of $\tau$. Second, we compare different combinations of algorithms in our figures above, side by side. Finally, we evaluate the effect of offline data by plotting the online regret against the number of offline samples.

\section{How General Are Latent Bandits?}\label{sec:de-finetti}

While have established that latent bandits are a powerful framework for accounting for uncertainty in reward models, the extent of their generality is unclear. Are there other stateless decision processes that generalize over latent bandits? We cap off our contributions by establishing the generality of latent bandits. In this section, we show a de Finetti theorem for decision processes, demonstrating that \textit{every} "coherent" and "exchangeable" stateless (contextual) decision process is a latent (contextual) bandit. We first define a stateless decision process at a high level of generality.\footnote{\cite{liu2023definition} work with a much more restrictive notion of a generalized bandit and use the original de Finetti theorem in~some of their lemmas. See Appendix~\ref{subapp:de-finetti-disc} for a discussion.} 
\begin{definition}
    A \textit{stateless decision process} (SDP) with action set $\gA$ is a probability space $(\Omega, \gG, \prob)$ with a family of random maps $\gF_H: \Omega \to (\gA^H \to \Real^H)$ for $H \in \N \cup \{\infty \}$.
\end{definition}
That is, given a sequence of actions $(a_1, \dots a_H)$, an SDP generates a random sequence of rewards $(Y_1, \dots Y_H)$. As such, we abuse notation to denote by $\gF_H(a_1, \dots a_H)$ the random variable $\omega \mapsto \gF_H(\omega)(a_1, \dots a_H)$. Without any coherence between $\gF_H$ across $H$, a stateless process can behave arbitrarily for different horizons $H$. We present a natural coherence condition below, essentially requiring that a given action sequence should produce consistent rewards.
\begin{definition}
\label{defn:coherence}
A stateless decision process is \textit{coherent} if for any $h \leq k \leq H, H' \in \N \cup \{\infty\}$ and for any two action sequences $\tau, \tau'$ of lengths $H$ and $H'$ sharing the same actions $(a_h, \dots a_k)$ from index $h$ to $k$, with $\gF_{H}(\tau) = (Y_1, \dots Y_{H})$ and $\gF_{H'}(\tau') = (Y_1', \dots Y_{H'}')$, we have $(Y_h, \dots Y_k) = (Y_h', \dots Y_k')$, viewed as functions of $\Omega$.
\end{definition}
It is natural to require equality in value and not just in distribution, since after taking an extra action, the \textit{values} of past rewards stay the same, not just their \textit{distribution}. For example, if we pull 10 different jackpot levers and then pull a new one, the previous 10 outcomes stay the same in value, not just in distribution. We also give a natural definition for exchangeability of a stateless decision process -- namely that exchanging any two rewards should lead to the distribution obtained by exchanging the corresponding actions.
\begin{definition}
    A stateless decision process is \textit{exchangeable} if for any permutation $\pi: [H] \to [H]$ and $\gF_H(a_1, \dots a_H) = (Y_1, \dots Y_H)$, we have $\gF_h(a_{\pi(1)}, \dots a_{\pi(H)}) \sim (Y_{\pi(1)}, \dots Y_{\pi(H)})$.
\end{definition}
Finally, a latent bandit is an SDP that behaves like a bandit \textit{conditioned} on a random latent state $F$ that determines the reward distribution. As $F$ determines a distribution, it is a random measure-valued function on $\gA$.

\begin{definition}
    A latent bandit is a stateless decision process equipped with a random measure-valued function $F: \Omega \to (\gA \to \gP(\Real))$ so that for any $H$ and action sequence $(a_1, \dots a_H)$, the rewards $(Y_1, \dots Y_H) := \gF_H(a_1, \dots a_H)$ are independent conditioned on $F$. Moreover, the conditional distribution $\gL[Y_h \mid F] = F(a_h)$ for all $h\leq H$.\footnote{We abuse notation twice here. First, we write $F(a_h) := (\omega \mapsto F(\omega)(a_h))$. Second, as the regular conditional distribution $\gL[Y_h \mid F]$ is a kernel that maps from $\Omega \times \gB \to \Real$, we view $F(a_h)$ as its curried map $(\omega, B) \mapsto F(a_h)(\omega)(B)$. A discussion of issues (measurability and well-definedness) is in Appendix \ref{app:definetti}.}
\end{definition}
While exchangeability and coherence are reasonable conditions on an SDP and are clearly satisfied by latent bandits, it is a-priori unclear if they are sufficient to ensure that the SDP is a latent bandit. As only exchanging rewards from the \textit{same action} preserves the distribution, standard de Finetti proof ideas do not immediately apply. After all, it is possible that an SDP could be cleverly designed to choose rewards adaptively across time and satisfy these properties. Reassuringly, no such counterexamples exist, guaranteed by the following theorem.
\begin{restatable}[De Finetti Theorem for Stateless Decision Processes]{theorem}{DeFinetti}\label{thm:definetti}
    Every exchangeable and coherent stateless decision process is a latent bandit.
\end{restatable}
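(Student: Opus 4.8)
Let me think about how to prove that every exchangeable and coherent SDP is a latent bandit.

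The classical de Finetti theorem says: an exchangeable sequence of random variables is conditionally i.i.d. given some random measure (equivalently, given the tail sigma-algebra, or the exchangeable sigma-algebra). The challenge here is that we don't have a single exchangeable sequence — we have a family of reward sequences indexed by action sequences, and exchangeability only lets us permute rewards *together with* their actions. So rewards from *different* actions aren't exchangeable with each other, and we can't directly invoke de Finetti on the whole reward stream.

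The key idea: reduce to classical de Finetti one action at a time. Fix an action $a \in \gA$. Consider the infinite action sequence $(a, a, a, \dots)$ and the resulting reward sequence $\gF_\infty(a,a,\dots) = (Y_1^a, Y_2^a, \dots)$. By exchangeability applied to permutations that fix this constant sequence, $(Y_1^a, Y_2^a, \dots)$ is an exchangeable sequence of real random variables. Classical de Finetti then gives a random probability measure $F(a)$ (measurable w.r.t. the exchangeable / tail sigma-algebra of that sequence) such that the $Y_i^a$ are i.i.d. $F(a)$ conditioned on $F(a)$. Do this for every $a$; coherence is what lets us say that the rewards obtained from playing $a$ inside *any* action sequence have the same joint behavior as the rewards from the constant sequence $(a,a,\dots)$, so $F(a)$ is well-defined independent of context.

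The plan, in order, would be:

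\begin{enumerate}
    \item \textbf{Define the random kernel $F$.} For each $a \in \gA$, apply classical de Finetti to the exchangeable sequence $(Y_h^a)_{h \geq 1} := \gF_\infty(a,a,\dots)$ to obtain a random measure $F(a) = \wlim_n \frac{1}{n}\sum_{h=1}^n \delta_{Y_h^a}$ (the empirical measures converge a.s.). Check measurability in $\omega$ and, if $\gA$ is equipped with structure, joint measurability in $(a, \omega)$ — or simply work with $\gA$ countable / treat $F$ as a family of random measures indexed by $\gA$ as the definition literally allows.
    \item \textbf{Use coherence to make $F(a)$ context-free.} Show that for any action sequence $\tau$ with $a_h = a$ at positions $h \in S$, the rewards $(Y_h)_{h \in S}$ have the same conditional law given $F$ as a subcollection of the constant-sequence rewards. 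Coherence gives pointwise equality of these rewards with those in a long sequence where we can rearrange $a$'s to be consecutive; then exchangeability lets us permute them into the "prefix" positions of an all-$a$ block.
    \item \textbf{Prove conditional independence across positions and actions.} This is the crux. Given an arbitrary action sequence $(a_1, \dots, a_H)$ with rewards $(Y_1, \dots, Y_H)$, show $(Y_1, \dots, Y_H)$ are independent conditioned on $F$, with $\gL[Y_h \mid F] = F(a_h)$. The idea: embed this length-$H$ sequence as a prefix of a much longer sequence built by concatenating, for each distinct action appearing, a long run of that action; use coherence to preserve the values of $Y_1, \dots, Y_H$; use exchangeability to move each $Y_h$ into the run of its own action; then apply the conditional-i.i.d. structure within each run. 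Passing $H' \to \infty$ and using a martingale/reverse-martingale argument (the exchangeable sigma-algebra is generated by the empirical measures) gives that $F$ as constructed above is exactly the conditioning random measure, and that rewards from *distinct* runs are conditionally independent because de Finetti's conditional law factorizes over the tail sigma-algebra of the whole concatenated sequence.
    \item \textbf{Identify the conditioning sigma-algebra and conclude.} Verify that $\sigma(F) = \sigma(F(a) : a \in \gA)$ serves as the latent state, that the conditional distributions match $F(a_h)$, and assemble the pieces into the statement of Definition (latent bandit).
\end{enumerate}

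\textbf{Main obstacle.} I expect Step 3 to be the hard part — specifically, proving *joint* conditional independence of rewards across *different* actions, conditioned on the single random object $F$. Exchangeability only directly controls permutations of (action, reward) pairs, so a priori the reward $Y$ from action $a$ at time $1$ could be correlated, even given $F$, with the reward from action $b$ at time $2$ (an adaptive adversary is exactly the worry raised in the paragraph before the theorem). The resolution must be to realize an arbitrary finite action sequence as a permuted sub-sequence of a long concatenation of constant runs, transport values via coherence, and then invoke the fact that in classical de Finetti the conditional law given the exchangeable sigma-algebra makes *all* coordinates conditionally i.i.d. — so coordinates lying in different runs are conditionally independent, and the per-run conditional laws are precisely the $F(a)$'s, with consistency between "the exchangeable sigma-algebra of the big sequence" and "$\sigma(F(a):a\in\gA)$" established via a reverse-martingale convergence argument on empirical measures. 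Handling measurability/regular-conditional-distribution technicalities (flagged in the paper's own footnote and Appendix) when $\gA$ is uncountable is a secondary but real nuisance; restricting attention to the countable case first and then a monotone-class/extension argument is the natural route.
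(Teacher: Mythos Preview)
Your overall architecture matches the paper closely: construct $F$ by applying classical de Finetti to the constant-action sequence $(a,a,a,\dots)$ for each $a$, use coherence to identify $Y_{\tau,h}$ (for any $\tau$ with $a_h = a$) with the $h$-th reward of that constant sequence, and then assemble $F = (F(a))_{a \in \gA}$ as the latent state. Steps 1, 2, and 4 of your plan are essentially what the paper does.

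The gap is in Step 3, and you correctly flag it as the main obstacle, but your proposed resolution does not close it. You suggest embedding $(a_1,\dots,a_H)$ into a long concatenation of constant runs and then ``invok[ing] the fact that in classical de Finetti the conditional law given the exchangeable sigma-algebra makes \emph{all} coordinates conditionally i.i.d. --- so coordinates lying in different runs are conditionally independent.'' But the concatenated sequence is \emph{not} exchangeable as a whole (you can only permute within runs, not across them), so there is no single ``exchangeable sigma-algebra of the big sequence'' to condition on, and classical de Finetti gives you nothing about cross-run independence. That cross-action conditional independence is precisely the statement to be proved; your paragraph asserts it rather than deriving it.

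The paper's device for this step is different and worth knowing: it proceeds by \textbf{induction on $H$}. To go from $l-1$ to $l$, extend $\tau$ to $\tau' = (a_1,\dots,a_{l-1},a_l,a_l,a_l,\dots)$. By coherence $Y_{\tau,h} = Y_{\tau',h}$ for $h \le l$, and by exchangeability the joint law of $(Y_{\tau',1},\dots,Y_{\tau',l-1},Y_{\tau',l+j})$ is the same for every $j \ge 0$. Hence for any bounded $F$-measurable $U$,
\[
\E\Big[U \prod_{h=1}^{l} f_h(Y_{\tau,h})\Big]
= \E\Big[U \Big(\prod_{h=1}^{l-1} f_h(Y_{\tau',h})\Big)\cdot \tfrac{1}{n}\sum_{j=0}^{n-1} f_l(Y_{\tau',l+j})\Big].
\]
Letting $n \to \infty$ and applying classical de Finetti to the constant-$a_l$ tail replaces the empirical average by $\E[f_l(Y_{\tau,l}) \mid F]$, which is $F$-measurable and can be absorbed into $U$; then the induction hypothesis for length $l-1$ finishes. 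This one-action-at-a-time trick is exactly what avoids ever needing a de Finetti statement for a non-exchangeable concatenated sequence.
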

We show in Lemma~\ref{lem:coh-equality} in Appendix~\ref{app:additional-lemmas} that coherence is not a consequence of exchangeability -- it is a necessary condition for being a latent bandit. Finally, we analogously consider contexts and define "transition-agnostic contextual decision processes" (TACDPs) in Appendix~\ref{app:definetti}. We define coherence and coherence and exchangeability for TACDPs, and define latent \textit{contextual} bandits by simply replacing $\gA$ with $\gX \times \gA$ in the definitions above. We then show an analogous de Finetti theorem, as a corollary of our proof of Theorem~\ref{thm:definetti}. See Appendix \ref{app:tacdps} for more details.

Finally, note that this section is faithful to the rest of this paper. A linear latent contextual bandit is a latent contextual bandit where the random measure-valued function $F: \Omega \to ((\gX \times \gA) \to \gP(\Real))$ is defined by setting $F(\omega)(x, a)$ to be the distribution given by $\phi(x,a)^\top \BU_\star\btheta(\omega) + \epsilon$, for any $\omega \in \Omega$ and $(x, a) \in (\gX \times \gA)$.

\section{Discussion, Limitations and Further Work}
In this paper, we have addressed the problem of leveraging offline data to accelerate online learning in linear latent bandits. Our work has a few limitations. First, while ProBALL-UCB is practical and computationally efficient, it has a slightly weaker worse-case guarantee than LOCAL-UCB. Second, when the data is noisy, it can be hard to tune the regularization $\mu$. We use a pseudoinverse-based version of SOLD in such a case (Appendix \ref{app:algorithms}), implemented in our code. Third, the offline uncertainty sets computed using $\Delta_{\off}$ can be overly conservative, and a discount hyperparameter $\tau$ must be fine-tuned online.

Despite these limitations, our work enjoys strong theoretical guarantees and convincing empirical performance. We hope that this method opens the door for developing other efficient and scalable algorithms for sequential decision-making with continuous latent states. One can use ideas presented in this paper to design similar algorithms for MDPs, linear MDPs, and RL or bandits with general function approximation. 

\bibliography{ref}
\bibliographystyle{apalike}


\newpage

\newpage

\appendix
\onecolumn
\tableofcontents

\newpage
\section{Additional Lemmas and Discussion}\label{app:additional-lemmas}

\subsection{Broader Impacts}\label{subapp:broader-impacts}
Often, protected groups with private identities have temporal behavior correlated with their private identities. Latent state estimation methods have the potential to identify such private identities online, and must be used with care. Use of such methods should comply with the relevant privacy and data protection acts, and corporations with access to a large customer base as well as governments should be mindful of the impact of the use of latent state methods on their customers and citizens respectively.

\subsection{Coherence with Equality is Necessary}
\begin{restatable}[Coherence with Equality is Necessary]{lemma}{CoherenceEquality}\label{lem:coh-equality}
    There exists a stateless decision process that is exchangeable and not coherent but satisfies the following property: for any two action sequences $\tau, \tau'$ of lengths $H$ and $H'$ sharing the same actions $(a_h, \dots a_k)$ from index $h$ to $k$, with $\gF_{H}(\tau) = (Y_1, \dots Y_{H})$ and $\gF_{H'}(\tau') = (Y_1', \dots Y_{H'}')$, we have $(Y_h, \dots Y_k) \sim (Y_h', \dots Y_k')$. Additionally, since the SDP is not coherent, it is not a latent bandit.
\end{restatable}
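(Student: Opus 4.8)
The plan is to exhibit an explicit SDP whose reward at every index is a \emph{horizon-dependent refresh} of one and the same identically distributed coin, so that the distribution of the rewards agrees across horizons while the reward \emph{values} do not. Concretely, I would let $\Omega = \{-1,+1\}^{\N \cup \{\infty\}}$ carry the product $\sigma$-algebra and the product of uniform (Rademacher) measures, write $\omega = (Z_H(\omega))_{H \in \N \cup \{\infty\}}$ for the coordinate maps (so the $Z_H$ are i.i.d.\ Rademacher), fix an arbitrary action set $\gA$, and define each $\gF_H$ to ignore its input: $\gF_H(\omega)(a_1,\dots,a_H) = (Z_H(\omega),\dots,Z_H(\omega)) \in \R^H$ for every $(a_1,\dots,a_H)\in\gA^H$ and every $H \in \N\cup\{\infty\}$ (with the obvious countably infinite constant vector when $H=\infty$). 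This is a well-defined SDP.

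I would then check the three required properties in turn. \emph{Exchangeability}: for any $H$ and any permutation $\pi$ of $[H]$, both $\gF_H(a_{\pi(1)},\dots,a_{\pi(H)})$ and the permuted reward vector $(Y_{\pi(1)},\dots,Y_{\pi(H)})$ equal $(Z_H,\dots,Z_H)$, so they coincide even in value, hence in distribution. \emph{Distributional weakening of coherence}: if $\tau,\tau'$ have lengths $H,H'$ and share the actions $(a_h,\dots,a_k)$, the reward subsequences at indices $h,\dots,k$ are $(Z_H,\dots,Z_H)$ and $(Z_{H'},\dots,Z_{H'})$, each a length-$(k-h+1)$ constant vector; since $Z_H$ and $Z_{H'}$ have the same distribution, these two random vectors have the same law. \emph{Failure of coherence}: take $\tau=(a)$ of length $1$ and $\tau'=(a,a)$ of length $2$ for a fixed $a\in\gA$; they share $a$ at index $1$, but the corresponding rewards are $Y_1 = Z_1$ and $Y_1' = Z_2$, which are not equal as functions on $\Omega$ (they are independent and nonconstant). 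Hence the SDP is exchangeable, satisfies the stated distributional weakening of coherence, and is not coherent.

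For the last clause I would invoke that coherence is a necessary property of any latent bandit: at a fixed realization of the latent measure $F$ and a fixed action, the reward is a determined value, so appending further actions to an action sequence cannot alter the values of earlier rewards --- this is exactly the consistency that is exploited when constructing $F$ in the proof of Theorem~\ref{thm:definetti}. Since the SDP above is not coherent, it is not a latent bandit, and in particular coherence does not follow from exchangeability. I do not expect a genuine obstacle, as the construction is elementary; the only points needing a little care are phrasing the ``latent bandits are coherent'' step so that it is consistent with the paper's definitions (equality of reward \emph{values}, not merely of conditional laws), and verifying the distributional-coherence condition uniformly over all pairs of horizons, including the case $H'=\infty$.
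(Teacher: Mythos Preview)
Your construction is correct and takes a different route from the paper. The paper builds an SDP on $\gA=\{0,1\}$ with a single $\theta\sim Ber(1/2)$, where the \emph{first action} decides whether subsequent rewards equal $\theta$ or $1-\theta$; you instead attach an independent Rademacher $Z_H$ to each horizon and return the constant vector $(Z_H,\dots,Z_H)$, ignoring actions entirely. Your approach makes exchangeability immediate (permuting a constant vector is the identity), whereas the paper's action-dependent rewards require a genuine check---one that in fact appears to fail: for $(a_1,a_2)=(1,0)$ and the swap $\pi$, one has $\gF_2(0,1)=(\theta,\theta)$ supported on the diagonal while $(Y_2,Y_1)=(1-\theta,\theta)$ is supported off it, so the two laws differ. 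Your horizon-indexed refresh sidesteps this and also makes the distributional-coherence check a one-liner. The one place to tighten is exactly the step you already flag: the paper's definition of a latent bandit pins down only the conditional \emph{law} $\gL[Y_h\mid F]=F(a_h)$, not the reward \emph{value}, so ``the reward is a determined value at a fixed $F$'' is not literally correct. For your SDP the conclusion still holds: if it were a latent bandit, conditional independence of the constant block $(Z_H,\dots,Z_H)$ for any $H\ge 2$ forces $Z_H$ to be $\sigma(F)$-measurable and hence $F(a)=\delta_{Z_H}$ a.s.\ for every $a$, which cannot hold simultaneously for two distinct horizons since $Z_2$ and $Z_3$ are independent nonconstant variables. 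Replacing your informal sentence with that two-line argument completes the proof; the paper itself is equally brief on this implication.
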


\begin{proof}
    Consider an SDP with action set $\gA = \{0,1\}$ equipped with a random variable $\theta \in \{0, 1\}$ distributed as $Ber(1/2)$. If the first action is $0$, then all rewards are given by $\theta$. If the first action is $1$, then the first reward $X_1 = \theta$ while all future rewards are $1-\theta$.

    Notice now that for any timestep $h$, its rewards are $Ber(1/2)$. We note that this process is clearly exchangeable. Moreover, this process is coherent in a weaker sense -- for any two action sequences $\tau, \tau'$ of lengths $H$ and $H'$ sharing the same actions $(a_h, \dots a_k)$ from index $h$ to $k$, with $\gF_{H}(\tau) = (Y_1, \dots Y_{H})$ and $\gF_{H'}(\tau') = (Y_1', \dots Y_{H'}')$, we have $(Y_h, \dots Y_k) \sim (Y_h', \dots Y_k')$. That is, the reward subsequences are only equal \textit{in distribution}. This is easy to see, since if $a_1$ is not included in the action subsequence or if $a_1 = 0$, the reward subsequence is always given by $(X, \dots X)$ for $X \sim Ber(1/2)$. If $a_1 = 1$ and $a_1$ is in the action subsequence, then the reward subsequence is always given by $(-X, X,  \dots X)$ for $X \sim Ber(1/2)$.

    However, the reward subsequences are not equal \textit{in value}. If the action subsequence is inside two action sequences $\tau = (a_1, \dots a_H)$ and $\tau' = (a'_1, \dots a'_H)$ with $a_1 = 0$ and $a_1'=1$, then their rewards are negatives of each other. So, this SDP is not coherent. This means that this SDP is not a latent bandit.
\end{proof}

\subsection{Contexts, Latent State and Behavior Policy cannot be Dependent}
\NecStochUnconf*

\begin{proof}

Let $e_1, e_2$ be the standard basis of $\R^2$. For all these examples, our two latent bandits will satisfy the following:
\begin{itemize}
        \item \textbf{Latent contextual bandit $1$:} Let $\btheta$ take values $e_1+e_2$ and $-e_1-e_2$ with probability $1/2$ each.
        \item \textbf{Latent contextual bandit $2$:} Let $\btheta$ take values $e_1-e_2$ and $e_2-e_1$ with probability $1/2$ each.
    \end{itemize}

\subsubsection{Contexts cannot be dependent on each other}
We consider two actions, so $\gA = \{0,1\}$
    We have two contexts $x, y$ so that $\phi(x,0) = e_1$, $\phi(x,1) = 2e_1$ while $\phi(y,0) = e_2$, $\phi(y,1) = 2e_2$. We design them to be dependent so that any trajectory either only sees $x$ or only sees $y$. However, $x$ and $y$ are both seen with probability $1/2$. Pick $\pi_b$ so that it takes each action with probability $1/2$. Now note that the mean reward of $x, 0$ in either latent bandit takes values $\pm 1$ with probability $1/2$. So, the offline data distributions are also indistinguishable and every context-action pair is seen with probability at least $1/4$.
\subsubsection{Contexts and latent state cannot be dependent}
We consider two actions, so $\gA = \{0,1\}$
Again, consider two contexts $x, y$ so that $\phi(x,0) = e_1$, $\phi(x,1) = 2e_1$ while $\phi(y,0) = e_2$, $\phi(y,1) = 2e_2$. Let us say that for either latent bandit and for any $\btheta$, the context distribution is a Dirac-$\delta$ over the context whose features have a positive dot product with $\btheta$. Then, note that either context is seen in both latent bandits with probability $1/2$. Again, let $\pi_b$ take either action with probability $1/2$. So, the offline data distributions are indistinguishable and every context-action pair is seen with probability at least $1/4$.
\subsubsection{Latent state and behavior policy cannot be dependent}
We consider four actions, so that $\gA = \{0,1,2,3\}$. Let there be a single context $x$ with $\phi(x,0) = e_1, \phi(x,1) = e_2, \phi(x,2) = -e_1, \phi(x,3) = -e_2$. For any latent state $\btheta$ in either bandit, let $\pi_b$ be uniform over the actions that have a positive dot product with $\btheta$. It is then easy to see that the offline data distributions are indistinguishable and every context-action pair is seen with probability at least $1/4$.
\end{proof}

\newpage
\section{A de Finetti Theorem for decision processes}

\subsection{Discussion on \cite{liu2023definition}'s smaller class of generalized bandits}\label{subapp:de-finetti-disc}
We prove the de Finetti theorem for a very general formulation of decision processes. However, past work \citep{liu2023definition} has studied a simpler generalization of bandits, namely a stochastic process valued in $\R^{\gA}$. A sample point in this space is a \textit{sequence of functions} in $\gA \to \R$, which rules out the possibility of adaptivity. In contrast, a sample point in our space is a \textit{function of sequences}, which subsumes all sample points of their space, but allows for adaptivity. 

\cite{liu2023definition} are able to use the original de Finetti theorem on their random process directly, but work with a much more restrictive kind of decision process. We show that even when considering much more general stateless decision processes, latent bandits are the "right" objects produced by a de Finetti theorem for stateless decision processes.

\subsection{Proof of the De Finetti Theorem for Stateless Decision Processes}
\label{app:definetti}

\paragraph{A note on $F$, measurability and well-definedness in the definition of a latent bandit.} Recall that $F$ in the definition of a latent bandit needs to be a measurable map $\Omega \to (\gA \to \gP(\Real))$. To define measurability for the output space of functions $(\gA \to \gP(\Real))$, we endow $\gP(\Real)$ with the topology of weak convergence, endow the space $\gP(\Real)^\gA$ of maps $\gA \to \gP(\Real)$ with the topology of point-wise convergence, and require $F$ to be measurable w.r.t. the induced Borel $\sigma$-algebra. We also recall two abuses of notation in the definition of a latent bandit. First, we abuse notation to define the random measure $F(a) := (\omega \mapsto F(\omega)(a))$. Second, we also abuse notation to conflate $F(a)$ with the curried map $\kappa_a(\omega, B) := F(a)(\omega)(B)$, which is a map $\Omega \times \gB \to [0,1]$. This map $\kappa_a$ will turn out to be a kernel by the construction of $F$. Equating $F(a)$ to a regular conditional distribution in the definition of a latent bandit \textit{requires} that $\kappa_a$ be a kernel.

We recall our de Finetti theorem for stateless decision processes here.
\DeFinetti*
\begin{proof}
Consider an exchangeable and coherent stateless decision process. We will establish that there is a latent bandit with the same reward distribution as this process for any sequence of actions.

For any sequence $\tau = (a_1, \dots a_H)$, denote by $(Y_{\tau,1}, \dots Y_{\tau,H}) := \gF_H(a_1, \dots a_H)$. We intend to establish that there is a random measure-valued function $F$ such that $(Y_{\tau,1}, \dots Y_{\tau,H})$ are independent given $F$ and $\gL[Y_{\tau, h} \mid F] = F(a_h)$ for all $h \leq H$ almost surely. Since conditional independence is a property of finite subsets of a set of random variables, it suffices to show this for finite $H$. The version for $H = \infty$ will immediately follow by coherence.

First recall that regular conditional distributions $\gL[Y \mid F]$ are almost surely unique under if the $\sigma$-algebra of the output space of $Y$ is countably generated. Since the Borel $\sigma$-algebra on $\Real$ is countably generated, this is true for our case. We also recall for the rusty reader that conditioning on a random variable is the same as conditioning on the induced $\sigma$-algebra in the domain.

\subsection{Constructing $F$}\label{subapp:constructing}

Fix any finite trajectory $\tau = (a_1, \dots a_H)$ and index $h$. 

\textbf{The trick:} Consider the infinite sequence $\tau_\infty := (a_h, a_h, \dots)$. By coherence, $Y_{\tau, h} = Y_{\tau_\infty, h}$. Now $\tau_\infty$ is a sequence where exchanging any finite set of rewards preserves the reward distribution, since all actions are identical. Since $\Real$ is locally compact, we can apply the usual de Finetti representation theorem (Theorem 12.26 in \cite{Klenke2008ProbabilityTA}) to conclude that:
\begin{itemize}
    \item The random measure $\Xi_{a_h} = \wlim_{n \to \infty} \frac{1}{n}\sum_{j=1}^n \delta_{Y_{\tau_\infty, j}}$ is well defined. Here $\wlim$ is the weak limit of measures.
    \item The regular conditional distribution $\gL[Y_{\tau_\infty, j} \mid \Xi_{a_h}] = \Xi_{a_h}$ for all $j$.
\end{itemize}
Since $Y_{\tau, h} = Y_{\tau_\infty, h}$, we conclude that the conditional distribution $\gL[Y_{\tau, h} \mid \Xi_{a_h}] = \gL[Y_{\tau_\infty, h} \mid \Xi_{a_h}] = \Xi_{a_h}$.

\textbf{Constructing $F$:} For every action, consider such a random measure $\Xi_{a_h}$ and define a random measure-valued function $F: \Omega \to (\gA \to \gP(\Real))$ in the following manner: for any $\omega \in \Omega$, define $F(\omega)(a) := \Xi_{a_h}(\omega)$. We know that for any $a$, $\Xi_{a_h}$ is measurable w.r.t. the topology of weak convergence on $\gP(\Real)$. It is now tedious but straightforward to verify that $F$ is measurable w.r.t. the Borel $\sigma$-algebra generated by the topology of pointwise convergence on $\gP(\Real)^\gA$. 

\subsection{Establishing that $\gL[Y_{\tau, h} \mid F] = F(a_h)$ almost surely for all $h$}

Again, fix any finite trajectory $\tau = (a_1, \dots a_H)$ and index $h$. Recall that we abuse notation to denote by $F(a)$ the random-measure $\omega \mapsto F(\omega)(a)$. In particular, for any measurable set $B \subset \Real$, we conflate $F(a)(\omega)(B) = F(\omega)(a)(B) = F(a)(\omega, B)$, where the last equality holds since $F$ is a regular conditional distribution. Note that $F(a_h) = \Xi_{a_h}$ by the construction of $F$. Since $\Xi_{a_h} = F(a_h)$, we have $\gL[Y_{\tau,h} \mid F(a_h)] = F(a_h)$. Thus, it suffices to show that $\gL[Y_{\tau, h} \mid F] = \gL[Y_{\tau, h} \mid F(a_h)]$. 
\begin{lemma}
    $\gL[Y_{\tau, h} \mid F] = \gL[Y_{\tau, h} \mid F(a_h)]$ almost surely.
\end{lemma}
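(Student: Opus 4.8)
The plan is to show that conditioning on the full random function $F$ gives no more information about $Y_{\tau,h}$ than conditioning on the single coordinate $F(a_h)$, by exhibiting the conditional distribution $\gL[Y_{\tau,h}\mid F(a_h)]$ as a version of $\gL[Y_{\tau,h}\mid F]$. Since regular conditional distributions valued in $\gP(\Real)$ are almost surely unique (the Borel $\sigma$-algebra on $\Real$ is countably generated), it suffices to verify the defining integral identity: for every bounded measurable $g:\Real\to\Real$ and every event $E$ in the $\sigma$-algebra $\sigma(F)$ generated by $F$,
\begin{align*}
    \E\bigl[g(Y_{\tau,h})\,\mathbf{1}_E\bigr] = \E\Bigl[\Bigl(\textstyle\int g\,dF(a_h)\Bigr)\mathbf{1}_E\Bigr].
\end{align*}
Equivalently, writing $\Xi_{a_h}=F(a_h)$, I want to show that $\E[g(Y_{\tau,h})\mid F] = \int g\,d\Xi_{a_h}$ almost surely.

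The key observation is the ``trick'' already used in constructing $F$: by coherence, $Y_{\tau,h}=Y_{\tau_\infty,h}$ where $\tau_\infty=(a_h,a_h,\dots)$, and along $\tau_\infty$ the ordinary de Finetti theorem gives $\gL[Y_{\tau_\infty,j}\mid\Xi_{a_h}]=\Xi_{a_h}$ for every $j$, with $\Xi_{a_h}$ being the empirical weak limit $\wlim_n \frac1n\sum_{j=1}^n\delta_{Y_{\tau_\infty,j}}$. First I would reduce the claim to showing that $Y_{\tau,h}$ and $F$ are conditionally independent given $\Xi_{a_h}$; once that is in hand, $\gL[Y_{\tau,h}\mid F,\Xi_{a_h}]=\gL[Y_{\tau,h}\mid\Xi_{a_h}]=\Xi_{a_h}$, and since $\Xi_{a_h}=F(a_h)$ is itself $\sigma(F)$-measurable, the tower property collapses this to $\gL[Y_{\tau,h}\mid F]=\Xi_{a_h}=F(a_h)$, which is exactly the lemma. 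To get the conditional independence, the natural route is to note that $F$ is built coordinatewise from the family $\{\Xi_a\}_{a\in\gA}$, so $\sigma(F)=\sigma(\Xi_a: a\in\gA)$; thus it suffices to show $Y_{\tau,h}$ is conditionally independent of $\{\Xi_a : a\neq a_h\}$ given $\Xi_{a_h}$. For $a\neq a_h$, the random measure $\Xi_a=\wlim_n\frac1n\sum_{j}\delta_{Y_{\sigma_\infty,j}}$ for $\sigma_\infty=(a,a,\dots)$ is a tail-type functional of the rewards generated by the all-$a$ sequence; I would invoke coherence/exchangeability of the stateless decision process to argue that the joint law of $(Y_{\tau,h}, \Xi_{a_h})$ together with the rewards feeding into $\Xi_a$ factors appropriately — concretely, that $Y_{\tau,h}$ is a function of the ``$a_h$-block'' and $\Xi_a$ of the ``$a$-block'', and these blocks interact only through the exchangeable structure, so conditioning on $\Xi_{a_h}$ already captures all dependence $Y_{\tau,h}$ has on the $a_h$-block.

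The main obstacle I anticipate is exactly this last point: making rigorous the claim that $Y_{\tau,h}$ carries no information about the other coordinates $\Xi_a$ beyond what $\Xi_{a_h}$ already encodes. The subtlety is that the SDP is only assumed exchangeable under permutations of rewards from the \emph{same} action, so one cannot freely permute across actions, and an adversarially designed SDP might correlate $Y_{\tau,h}$ with behavior at other actions. The resolution should come from a ``stacking'' argument: consider the infinite action sequence obtained by listing $a_h$ infinitely often interleaved with blocks of each other action $a$, apply de Finetti-type reasoning to the full exchangeable structure to realize all the $\Xi_a$ simultaneously, and use that within the $a_h$-block the de Finetti directing measure is $\Xi_{a_h}$ and the block is conditionally i.i.d.\ given it — hence conditionally independent of everything measurable with respect to the directing measures of the other blocks. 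I would phrase this via a single application of de Finetti to a suitably constructed exchangeable array (or iterate the one-action de Finetti theorem), taking care with measurability of $F$ as a map into $\gP(\Real)^\gA$ with the product/pointwise-convergence $\sigma$-algebra, which was already noted to be tedious but routine. Once conditional independence given $\Xi_{a_h}$ is established, the rest is the short tower-property computation sketched above.
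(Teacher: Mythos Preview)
Your route via conditional independence is different from the paper's. The paper never establishes $Y_{\tau,h}\perp F\mid \Xi_{a_h}$; instead it verifies directly that the kernel $\kappa_{a_h}:=F(a_h)$ satisfies the defining integral identity of $\gL[Y_{\tau,h}\mid F]$. Concretely, for any $F$-measurable $A$ and Borel $B$ the paper writes
\[
\E[\ind_B(Y_{\tau,h})\ind_A]=\E[\ind_B(Y_{\tau_\infty,h})\ind_A]=\E[\ind_B(Y_{\tau_\infty,j})\ind_A]
\]
for every $j$, then Ces\`aro-averages over $j$, pushes the limit inside by dominated convergence, and identifies $\lim_n \tfrac{1}{n}\sum_{j}\ind_B(Y_{\tau_\infty,j})=\Xi_{a_h}(B)=\kappa_{a_h}(\cdot,B)$ via the classical de Finetti empirical-measure result. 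Uniqueness of regular conditional distributions then finishes the lemma. This buys a short, essentially computational proof that reuses the one-action de Finetti machinery already invoked in the construction of $F$.

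Your conditional-independence strategy is conceptually sound, but the obstacle you flag is genuine and your ``stacking'' sketch does not yet resolve it: arguing that $Y_{\tau,h}$ is conditionally independent of $\{\Xi_a:a\neq a_h\}$ given $\Xi_{a_h}$ requires controlling the \emph{joint} law across different constant-action sequences, and exchangeability as defined in the paper is stated per action sequence, so building the exchangeable array you need is nontrivial. The paper's approach sidesteps this by requiring only the weaker distributional identity $(Y_{\tau_\infty,h},\ind_A)\sim(Y_{\tau_\infty,j},\ind_A)$ for $F$-measurable $A$, rather than full conditional independence. That said, note that even this weaker step implicitly needs that permuting positions in the all-$a_h$ sequence does not disturb the joint law with the other $\Xi_a$'s, so a version of your worry reappears; the paper simply asserts this follows from coherence and exchangeability without further elaboration. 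The practical gain of the paper's argument is that it reduces everything to an empirical-limit computation along a single constant-action sequence, avoiding any need to simultaneously realize an exchangeable structure across all actions.
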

\begin{proof}
    First note that $F(a_h)$ is measurable w.r.t. $F$. For showing this, view the set of maps $\gP(\Real)^\gA$ as the product set $\prod_{a'} \gP(\Real)_{a'}$. Now merely note that $F^{-1}(E \times \prod_{a' \neq a} \gP(\Real)_{a'}) = (F(a_h))^{-1}(E)$ for any measurable subset $E \subset \gP(\Real)_a$. Hence, $F(a_h)$ is measurable w.r.t. $F$.
    
    Let $\gB$ be the Borel $\sigma$-algebra on $\Real$. Now recall that the regular conditional distribution $\gL[X \mid G]$ for a real-valued random variable $X$ is the almost surely unique kernel $\kappa_{G, X}: \Omega \times \gB \to \Real$ such that:
    \begin{itemize}
        \item $\omega \to \kappa_{G, X}(\omega, B)$ is $G$-measurable for any set $B \in \gB$
        \item $B \to \kappa_{G, X}(\omega, B)$ is a probability measure on $\Real$ for any sample point $\omega \in \Omega$.
        \item For any measurable set $B \subset \Real$ and any $G$-measurable set $A$, 
        $$\E[\ind_B(Y)\ind_A] = \int \ind_B(Y(\omega))\ind_A(\omega) d\prob(\omega) = \int \kappa_{G,X}(\omega, B) \ind_A(\omega) d\prob(\omega)$$
    \end{itemize}
    
    We will show our claim using the definition and a.s. uniqueness of the regular conditional distribution in our case. Consider any $F$-measurable set $A \subset \Omega$ and Borel-measurable set $B \subset \Real$, $B \in \gB$. Denote by $\kappa_F := \gL[Y_{\tau, h} \mid F]$ and by $\kappa_{a_h} :=  \gL[Y_{\tau, h} \mid F(a_h)] = F(a)$. Note that by the coherence and exchangeability in section~\ref{subapp:constructing},
$$\int \kappa_F(\omega, B) \ind_A(\omega) d\prob(\omega) = \E[\ind_B(Y_{\tau,h}) \ind_A] = \E[\ind_B(Y_{\tau_\infty, h})\ind_A] = \E[\ind_B(Y_{\tau_\infty, j})\ind_A] $$
for all $j$. Averaging all these equations and taking a limit, we get that
\begin{align*}
    \int \kappa_F(\omega, B) \ind_A(\omega) d\prob(\omega) = \E[\ind_B(Y_{\tau,h}) \ind_A] &= \E[\ind_B(Y_{\tau_\infty, h})\ind_A]\\
    &= \lim_{n\to\infty} \frac{1}{n}\sum_{j=1}^n \E[\ind_B(Y_{\tau_\infty, j})\ind_A]\\
    &= \E\left[\left(\lim_{n\to \infty} \frac{1}{n}\sum_{j=1}^n \ind_B(Y_{\tau_\infty, j}) \right)\ind_A\right]
\end{align*}
where the last equality holds by the dominated convergence theorem, if the limit exists. To establish that the limit exists and compute it, we apply the usual de Finetti theorem -- specificaly point $(i)$ of remark 12.27 in \cite{Klenke2008ProbabilityTA} with $f$ set to the identity map. This gives us that $\lim_{n\to \infty} \frac{1}{n}\sum_{j=1}^n \ind_B(Y_{\tau_\infty, j}) = \E[\ind_B(Y_{\tau_\infty, h}) \mid \Xi_{a_h}] = \Xi_{a_h}(B)$, where $\Xi_{a_h}(B)$ is the random variable $\omega \mapsto \Xi_{a_h}(\omega)(B)$. This in turn satisfies $\Xi_{a_h}(\omega)(B) = F(a_h)(\omega)(B) = \kappa_{a_h}(\omega, B)$. This establishes that for any $F$-measurable set $A$ and Borel set $B$,
$$\int \kappa_F(\omega, B) \ind_A(\omega) d\prob(\omega) = \int \kappa_{a_h}(\omega, B) \ind_A(\omega) d\prob(\omega)$$. In conclusion, $\kappa_{a_h}$ satisfies:
\begin{itemize}
        \item $F(a_h)(B) := \omega \to \kappa_{a_h}(\omega, B)$ is $F$-measurable for any set $B \in \gB$ since $F(a_h)(B)$ is $F(a_h)$ measurable by definition of $\kappa_{a_h}$ and $F(a_h)$ is $F$-measurable from above.
        \item $B \to \kappa_{a_h}(\omega, B)$ is a probability measure on $\Real$ for any sample point $\omega \in \Omega$ by definition of $\kappa_{a_h}$
        \item For any measurable set $B \subset \Real$ and any $F$-measurable set $A$, by the argument above,
        $$\E[\ind_B(Y_{\tau,h})\ind_A] = \int \kappa_{F}(\omega, B) \ind_A(\omega) d\prob(\omega) = \int \kappa_{a_h}(\omega, B) \ind_A(\omega) d\prob(\omega)$$
    \end{itemize}
By the definition as well as a.s. uniqueness of regular conditional distributions in our case, this establishes that $\gL[Y_{\tau, h} \mid F] = \kappa_F = \kappa_{a_h} = \E[\E[Y_{\tau, h} \mid F(a_h)]$ almost surely.

\end{proof}

\subsection{Establishing conditional independence of rewards}

This is the trickier to establish. It suffices to show that for any finite length $h$ trajectory $\tau = (a_1, \dots a_H)$ and any tuple $(f_1, \dots f_H)$ of bounded measurable functions, $\E\left[\prod_{h=1}^H f_h(Y_{\tau, h}) \mid F\right] = \prod_{h=1}^H \E[f_h(Y_{\tau, h}) \mid F]$. Equivalently, it suffices to show that for any bounded $F$-measurable real-valued random variable $U$, the following holds.
$$\E\left[U\prod_{h=1}^H f_h(Y_{\tau, h}) \right] = \E\left[U\prod_{h=1}^H \E[f_h(Y_{\tau, h}) \mid F]\right]$$

We will show this by induction on $H$. This clearly holds for $H=1$ by section~\ref{subapp:constructing} above. Now assume that this holds for $H = l-1$. 

\textbf{Replacing the last term, $f_l(Y_{\tau, l})$, by an empirical average:} Fix any trajectory $\tau = (a_1, \dots a_l)$. First consider the infinite trajectory $\tau' = (a_1, \dots a_{l-1}, a_l, a_l, a_l, \dots)$. This creates a random sequence of rewards $Y_{\tau', 1}, Y_{\tau', 2}, \dots$. Define $\tau_j'$ by switching indices $l$ and $l+j$ in $\tau'$ and considering the first $l$ actions. In particular, $\tau_j'$ gives the sequence of rewards $(Y_{\tau', 1}, \dots Y_{\tau', l-1}, Y_{\tau', l+j})$. 

First note that $\tau_0' = \tau$. By coherence, $(Y_{\tau, 1}, \dots Y_{\tau, l}) = (Y_{\tau', 1}, \dots Y_{\tau', l})$. By exchangeability and coherence, $(Y_{\tau', 1}, \dots Y_{\tau', l-1}, Y_{\tau', l}) \sim (Y_{\tau', 1}, \dots Y_{\tau', l-1}, Y_{\tau', l+j})$. This means that for any $j \geq 0$,
$$\E\left[U\prod_{h=1}^l f_h(Y_{\tau, h}) \right] = \E\left[U\left(\prod_{h=1}^{l-1} f_h(Y_{\tau', h})\right) f_l(Y_{\tau', l+j})\right]$$
We can then consider the average over all these equations for $j=0 \to n-1$ and get that for all $n \geq 1$,
$$\E\left[U\prod_{h=1}^l f_h(Y_{\tau, h}) \right] = \E\left[U\left(\prod_{h=1}^{l-1} f_h(Y_{\tau', h})\right) \left( \frac{1}{n}\sum_{j=0}^{n-1} f_l(Y_{\tau', l+j})\right)\right]$$
Taking limits and using the dominated convergence theorem, we get that
\begin{equation}\label{eqn:cond-indep-induction}
    \E\left[U\prod_{h=1}^l f_h(Y_{\tau, h}) \right] = \E\left[U\left(\prod_{h=1}^{l-1} f_h(Y_{\tau', h})\right) \left( \lim_{n \to \infty} \frac{1}{n}\sum_{j=0}^{n-1} f_l(Y_{\tau', l+j})\right)\right]
\end{equation}
if the limit on the right side exists.

\textbf{Showing that the empirical average is the conditional expectation:}
Again, consider a different infinite trajectory $\tau_l = (a_l, a_l, \dots )$. Again, by coherence, $Y_{\tau_l, l+j} = Y_{\tau', l+j}$ for all $j \geq 0$. From the usual de Finetti theorem, specifically point (i) of remark 12.27 in \cite{Klenke2008ProbabilityTA}, we have that $\frac{1}{m}\sum_{j=1}^m f_l(Y_{\tau_l, j}) \to \E[f_l(Y_{\tau_l, l}) \mid F]$. We can then observe that by general properties of convergence, the following holds.
\begin{align*}
    \E[f_l(Y_{\tau_l, l}) \mid F] &= \lim_{m \to \infty} \frac{1}{m}\sum_{j=1}^m f_l(Y_{\tau_l, j}) = \lim_{m \to \infty} \frac{1}{m-l}\sum_{j=l}^{m-1} f_l(Y_{\tau_l, j})\\
    &= \lim_{m\to \infty} \frac{1}{m-l}\sum_{j=l}^{m-1} f_l(Y_{\tau', j}) = \lim_{n\to\infty} \frac{1}{n}\sum_{j=0}^{n-1} f_l(Y_{\tau', l+j})
\end{align*}
We can combine this with equation~\ref{eqn:cond-indep-induction} to get that
\begin{align*}
    \E\left[U\prod_{h=1}^l f_h(Y_{\tau, h}) \right] = \E\left[U\left(\prod_{h=1}^{l-1} f_h(Y_{\tau', h})\right)\E[f_l(Y_{\tau_l, l}) \mid F]\right]
\end{align*}
Since both $U$ and $\E[f_l(Y_{\tau_l, l}) \mid F]$ are now $F$ measurable, we can apply the induction hypothesis to $\tau'$ truncated at $l-1$ and conclude that
\begin{align*}
     \E\left[U\prod_{h=1}^{l} f_h(Y_{\tau, h}) \right] = \E\left[U\E[f_l(Y_{\tau_l, l}) \mid F]\prod_{h=1}^{l-1} f_h(Y_{\tau, h}) \right] = \E\left[U\left(\prod_{h=1}^{l} \E[f_h(Y_{\tau', h}) \mid F] \right)\right]
\end{align*}
Thus, the induction step holds and the claim holds for all finite $H$. We discussed at the beginning of the section how this implies conditional independence for $H=\infty$ as well.
\end{proof}

\subsection{A de Finetti theorem for TACDPs}\label{app:tacdps}

We consider contexts and define contextual decision processes in a manner agnostic to context transitions. That is, the process only carries the data of how rewards are generated from given sequences of contexts and actions, while the context transitions themselves may be generated by a different process.
\begin{definition}
    A \textit{transition-agnostic contextual decision process} (TACDP) with action set $\gA$ and context set $\gX$ is a probability space $(\Omega, \gG, \prob)$ equipped with a family of random maps $\gF_H: \Omega \to ((\gX \times\gA)^H \to \Real^H)$ for $H \in \N$ as well as a map $\gF_\infty: \Omega \to ((\gX \times \gA)^\N \to \Real^\N)$.
\end{definition}
It is natural to isolate away context dynamics for processes like stochastic contextual bandits, in which actions do not affect the context transitions -- in contrast to transition-aware definitions in \cite{jiang2016contextual}. 

\begin{definition}\label{defn:coherence-contextual}
A TACDP is said to be \textit{coherent} if for any $h \leq k \leq H, H' \in \N \cup \{\infty\}$ and for any two context-action sequences $\tau, \tau'$ of lengths $H$ and $H'$ sharing the same context-action pairs $((x_h, a_h), \dots (x_k,a_k))$ from index $h$ to $k$, with $\gF_{H}(\tau) = (Y_1, \dots Y_{H})$ and $\gF_{H'}(\tau') = (Y_1', \dots Y_{H'}')$, we have $(Y_h, \dots Y_k) = (Y_h', \dots Y_k')$, viewed as functions of $\Omega$.
\end{definition}
\begin{definition}
    A stateless decision process is said to be exchangeable if for any permutation $\pi: [H] \to [H]$ and $\gF_H((x_1, a_1), \dots (x_H, a_H)) = (Y_1, \dots Y_H)$, we have $\gF_h((x_{\pi(1)}, a_{\pi(1)}), \dots (x_{\pi(H)}, a_{\pi(H)})) \sim (Y_{\pi(1)}, \dots Y_{\pi(H)})$.
\end{definition}

\begin{definition}
    A latent contextual bandit is a stateless decision process equipped with a random measure-valued function $F: \Omega \to ((\gX \times \gA) \to \gP(\Real))$ so that for any $H$ and context-action sequence $((x_1, a_1), \dots (x_H, a_H))$, the rewards $(Y_1, \dots Y_H) := \gF_H((x_1, a_1), \dots (x_H, a_H))$ are independent conditioned on $F$. Moreover, the conditional distribution $\gL[Y_h \mid F] = F((x_h, a_h))$ for all $h\leq H$.\footnote{We abuse notation twice here. First, we write $F((x_h, a_h)) := (\omega \mapsto F(\omega)((x_h, a_h)))$. Second, as the regular conditional distribution $\gL[Y_h \mid F]$ is a kernel that maps from $\Omega \times \gB \to \Real$, we view $F((x_h, a_h))$ as its curried map $(\omega, B) \mapsto F((x_h, a_h))(\omega)(B)$. A discussion of issues like measurability and well-definedness is in Appendix \ref{app:definetti}.}
\end{definition}

\begin{restatable}[De Finetti Theorem for Stateless Decision Processes]{theorem}{DeFinettiContextual}\label{thm:definetti-contextual}
    Every exchangeable and coherent TACDP is a latent contextual bandit.
\end{restatable}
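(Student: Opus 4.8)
The plan is to obtain Theorem~\ref{thm:definetti-contextual} as an essentially mechanical corollary of the proof of Theorem~\ref{thm:definetti} for stateless decision processes, by treating a context-action pair $(x,a) \in \gX \times \gA$ as a single ``generalized action'' living in the product alphabet $\gA' := \gX \times \gA$. Concretely, I would first observe that a TACDP with action set $\gA$ and context set $\gX$ is literally a stateless decision process with action set $\gA'$: the family of random maps $\gF_H : \Omega \to ((\gX \times \gA)^H \to \Real^H) = (\gA'^H \to \Real^H)$ is exactly the data of an SDP over $\gA'$. Moreover, Definition~\ref{defn:coherence-contextual} of coherence for a TACDP is word-for-word Definition~\ref{defn:coherence} applied to $\gA'$-valued sequences, and the contextual exchangeability definition is likewise the SDP exchangeability definition with $\gA$ replaced by $\gA'$. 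So an exchangeable and coherent TACDP \emph{is} an exchangeable and coherent SDP over $\gA'$.

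Next I would apply Theorem~\ref{thm:definetti} to this SDP over $\gA'$. That yields a random measure-valued function $F : \Omega \to (\gA' \to \gP(\Real)) = ((\gX \times \gA) \to \gP(\Real))$ such that for any finite or infinite sequence of generalized actions $((x_1,a_1), \dots, (x_H,a_H))$, the rewards $(Y_1, \dots, Y_H) := \gF_H((x_1,a_1),\dots,(x_H,a_H))$ are conditionally independent given $F$, with $\gL[Y_h \mid F] = F((x_h,a_h))$ for all $h \leq H$. But this is precisely the definition of a latent contextual bandit, so the theorem follows. The only thing to check carefully is that the measurability setup in Appendix~\ref{app:definetti} --- endowing $\gP(\Real)$ with the weak topology, $\gP(\Real)^{\gA'}$ with the product (pointwise-convergence) topology, and $F$ being Borel measurable for the induced $\sigma$-algebra --- transfers verbatim when $\gA'$ is a product space; since the construction of $F$ in Section~\ref{subapp:constructing} never uses any structure on the action alphabet beyond it being an index set, nothing changes.

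The one genuinely non-vacuous point, and the step I would flag as the main (mild) obstacle, is making sure no hidden use of the \emph{context} structure is needed: in particular, the de Finetti argument in Section~\ref{subapp:constructing} relies on the ``trick'' of forming the constant infinite sequence $\tau_\infty = (a_h, a_h, \dots)$ and invoking coherence to equate $Y_{\tau,h} = Y_{\tau_\infty,h}$. For TACDPs the analogous object is the constant sequence $((x_h,a_h),(x_h,a_h),\dots)$ of repeated context-action pairs, and the TACDP coherence axiom (Definition~\ref{defn:coherence-contextual}) is exactly what licenses that identification --- it is stated for context-action pairs precisely so that this repetition is legal within a single ``transition-agnostic'' process, even though repeating a context may not be dynamically plausible in an actual contextual bandit. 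This is fine because a TACDP by design only records how rewards depend on prescribed context-action sequences and places no constraint on which sequences are ``reachable.'' Having noted this, I would state the proof as: ``Identify $\gX \times \gA$ with a single action alphabet $\gA'$; then a coherent (resp.\ exchangeable) TACDP is a coherent (resp.\ exchangeable) SDP over $\gA'$, and a latent bandit over $\gA'$ is a latent contextual bandit; apply Theorem~\ref{thm:definetti}.'' I would also remark that the same identification upgrades Lemma~\ref{lem:coh-equality} to show contextual coherence with equality is not implied by contextual exchangeability.
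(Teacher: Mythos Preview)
Your proposal is correct and is exactly the paper's approach: the paper's own proof is a single sentence stating that the argument is verbatim that of Theorem~\ref{thm:definetti} after replacing $\gA$ with $\gX \times \gA$ and $a$ with $(x,a)$, which is precisely your identification of a TACDP with an SDP over $\gA' = \gX \times \gA$.
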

\begin{proof}
    The proof is verbatim the same as that for Theorem~\ref{thm:definetti} after merely replacing $\gA$ with $\gX \times \gA$ and $a$ with $(x,a)$.
\end{proof}
\newpage
\section{Proofs for SOLD}
\label{app:sold}

    Recall that $\mu_\theta:= \E[\btheta]$ and define $\mu_\beta := \E[\bbeta] = \BU_\star \mu_\theta$. Also recall that $\Lambda := \E[\btheta_n\btheta_n^\top]$. For a trajectory with index $n$, denote by $\bbeta_n := \BU_\star \btheta_n$. Denote by $X_n := [\phi(x_1, a_1), \dots \phi(x_H, a_H)]^\top$. Denote by $\eta_n := [\epsilon_{n,1}, \epsilon_{n,2} \dots \epsilon_{n,H}]^\top$ the vector of subgaussian noises with subgaussian parameter $\sigma^2$. Since rewards are bounded by $R$, we know that $\sigma^2 \leq R^2$. Denote by $X_{n,i}$ and $\eta_{n,i}$ the action matrix and noise vector corresponding to the trajectory halves $\tau_{n,i}$ for $i=1,2$. Since rewards are bounded by $R$, $\epsilon_{n,h}$ is subgaussian for all $h$ and so $\eta_{n,i}$ is $\sigma^2$-subgaussian for $i=1,2$.
    Also note that 
    \begin{align*}
        \hat \bbeta_{n,i} &= (\mu I + X_{n,i}^\top X_{n,i})^{-1}X_{n,i}^\top r_{n,i}\\
        &= (\mu I + X_{n,i}^\top X_{n,i})^{-1}X_{n,i}^\top (X_{n,i}\bbeta_{n} + \eta_{n,i})\\
        &= (I - \mu(\mu I + X_{n,i}^\top X_{n,i})^{-1})\bbeta_{n} + (\mu I + X_{n,i}^\top X_{n,i})^{-1}X_{n,i}^\top\eta_{n,i}\\
        &= (I - \mu(\mu I + X_{n,i}^\top X_{n,i})^{-1})\bbeta_{n} + (\mu I + X_{n,i}^\top X_{n,i})^{-1}X_{n,i}^\top\eta_{n,i}
    \end{align*}
    Note that $\hat \bbeta_{n,1}$ and $\hat \bbeta_{n,2}$ are identically distributed. Now recall that 
    $$\BM_n = \frac{1}{2}(\hat \bbeta_{n,1} \hat \bbeta_{n,2}^\top + \hat \bbeta_{n,2} \hat \bbeta_{n,1}^\top)$$
    are i.i.d. random matrices. Denote by $\BD_{n,i} := (I - \mu(\mu I + X_{n,i}^\top X_{n,i})^{-1})$ and denote by $u_i := (\mu I + X_{n,i}^\top X_{n,i})^{-1}X_{n,i}^\top\eta_{n,i}$. 

    \begin{lemma}\label{lem:bounding-m-em}
        If the per-reward noise is $\sigma^2$-subgaussian, then the following inequalities hold 
        \begin{align*}
            \|\BM_n\|_2 &\leq R^2\left(2 + \frac{H}{2\mu}\right)\\
            \|\E[\BM_n^2]\|_2 &\leq R^4 + \frac{\sigma^4(d_A+1)}{8\mu^2}
        \end{align*}
    \end{lemma}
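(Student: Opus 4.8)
The plan is to prove the two bounds separately, both starting from the decomposition $\hat \bbeta_{n,i} = \BD_{n,i}\bbeta_n + u_i$ recorded just above the statement, where $\BD_{n,i} = \BI - \mu(\mu\BI + X_{n,i}^\top X_{n,i})^{-1}$ and $u_i = (\mu\BI + X_{n,i}^\top X_{n,i})^{-1}X_{n,i}^\top\eta_{n,i}$. First I would record three elementary spectral facts, all obtained by optimizing a scalar function over the eigenvalues $\lambda \ge 0$ of $X_{n,i}^\top X_{n,i}$. (i) Since $X_{n,i}^\top X_{n,i}\succeq 0$, the eigenvalues of $\mu(\mu\BI + X_{n,i}^\top X_{n,i})^{-1}$ lie in $(0,1]$, so $\mathbf{0}\preceq \BD_{n,i}\preceq \BI$ and hence $\|\BD_{n,i}\bbeta_n\|_2 \le \|\bbeta_n\|_2 \le R$ by Assumption~\ref{assum:coverage}. (ii) $\|(\mu\BI + X_{n,i}^\top X_{n,i})^{-1}X_{n,i}^\top\|_2 = \max_{\lambda\ge 0}\sqrt{\lambda}/(\mu+\lambda) = 1/(2\sqrt{\mu})$, so $\|u_i\|_2 \le \|\eta_{n,i}\|_2/(2\sqrt{\mu})$. (iii) Since $\max_{\lambda\ge 0}\lambda/(\mu+\lambda)^2 = 1/(4\mu)$, conditioning on $X_{n,i}$ and using that the per-reward noise is $\sigma^2$-subgaussian and independent of actions (Definition~\ref{defn:latent-linear-bandit}), so $\E[\eta_{n,i}\eta_{n,i}^\top \mid X_{n,i}]\preceq \sigma^2\BI$, the conditional covariance $\Sigma_{n,i} := \E[u_iu_i^\top\mid X_{n,i}] = \sigma^2(\mu\BI+X_{n,i}^\top X_{n,i})^{-1}X_{n,i}^\top X_{n,i}(\mu\BI+X_{n,i}^\top X_{n,i})^{-1}$ satisfies $\Sigma_{n,i}\preceq \frac{\sigma^2}{4\mu}\BI$ and $\tr(\Sigma_{n,i}) \le \frac{\sigma^2 d_A}{4\mu}$.

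For the first bound, the symmetrized rank-two matrix $\BM_n = \tfrac12(\hat\bbeta_{n,1}\hat\bbeta_{n,2}^\top + \hat\bbeta_{n,2}\hat\bbeta_{n,1}^\top)$ has operator norm $\tfrac12(\|\hat\bbeta_{n,1}\|_2\|\hat\bbeta_{n,2}\|_2 + |\langle\hat\bbeta_{n,1},\hat\bbeta_{n,2}\rangle|) \le \|\hat\bbeta_{n,1}\|_2\|\hat\bbeta_{n,2}\|_2 \le \tfrac12(\|\hat\bbeta_{n,1}\|_2^2 + \|\hat\bbeta_{n,2}\|_2^2)$, so it suffices to bound $\|\hat\bbeta_{n,i}\|_2^2$. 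Using $\|\hat\bbeta_{n,i}\|_2^2 \le 2\|\BD_{n,i}\bbeta_n\|_2^2 + 2\|u_i\|_2^2 \le 2R^2 + \tfrac{1}{2\mu}\|\eta_{n,i}\|_2^2$ from (i) and (ii), together with a bound on $\|\eta_{n,i}\|_2^2$ of order $R^2H$ — each trajectory half has at most $H$ steps and, by the boundedness of rewards in Assumption~\ref{assum:coverage}, each noise entry is controlled by $R$ — gives $\|\hat\bbeta_{n,i}\|_2^2 \le R^2(2 + H/(2\mu))$ after the routine calculation, hence the claim.

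For the second bound I would write $a_{n,i} := \BD_{n,i}\bbeta_n$ so that $\BM_n = S + C_1 + C_2 + N$, with $S = \tfrac12(a_{n,1}a_{n,2}^\top + a_{n,2}a_{n,1}^\top)$, $C_1 = \tfrac12(a_{n,1}u_2^\top + u_2 a_{n,1}^\top)$, $C_2 = \tfrac12(u_1 a_{n,2}^\top + a_{n,2}u_1^\top)$, and $N = \tfrac12(u_1 u_2^\top + u_2 u_1^\top)$. Conditioning on $\F := \sigma(X_{n,1},X_{n,2},\bbeta_n)$, the noises $u_1,u_2$ are conditionally independent and conditionally mean zero (their defining noise vectors come from disjoint, independent step-sets), so in the expansion of $\E[\BM_n^2\mid\F]$ every product that is linear or bilinear in $(u_1,u_2)$ has zero conditional expectation, leaving $\E[\BM_n^2] = \E[S^2] + \E[C_1^2] + \E[C_2^2] + \E[N^2]$. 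Then $\|S^2\|_2 = \|S\|_2^2 \le (\|a_{n,1}\|_2\|a_{n,2}\|_2)^2 \le R^4$; for $N$ the fourth-moment identity $\E[(u_1u_2^\top+u_2u_1^\top)^2\mid\F] = \Sigma_{n,1}\Sigma_{n,2} + \Sigma_{n,2}\Sigma_{n,1} + \tr(\Sigma_{n,2})\Sigma_{n,1} + \tr(\Sigma_{n,1})\Sigma_{n,2}$ combined with fact (iii) yields $\|\E[N^2]\|_2 = O(\sigma^4 d_A/\mu^2)$; and the analogous computation for the signal-noise cross terms gives $\|\E[C_i^2]\|_2 = O(R^2\sigma^2 d_A/\mu)$, which is absorbed into the $R^4$ and $N^2$ contributions using $\sigma^2 \le R^2$. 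Collecting terms yields $\|\E[\BM_n^2]\|_2 \le R^4 + \frac{\sigma^4(d_A+1)}{8\mu^2}$.

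The main obstacle is the matrix bookkeeping in the last paragraph: correctly identifying which of the sixteen expanded terms survive the conditional expectation, evaluating the $\E[(uv^\top + vu^\top)^2]$-type expressions in terms of traces and operator norms of the conditional covariances $\Sigma_{n,i}$, and then getting the constants to collapse into the stated closed form (in particular checking that the signal-noise cross terms $\E[C_i^2]$ are genuinely dominated). Everything else reduces to the three scalar optimizations and submultiplicativity in the setup.
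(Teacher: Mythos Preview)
Your approach matches the paper's in structure and in essentially all details for the first inequality: the paper bounds $\|\hat\bbeta_{n,i}\|$ by $\|\BD_{n,i}\bbeta_n\|+\|u_i\|\le R + R\sqrt{H}/(2\sqrt\mu)$ and then applies $(a+b)^2\le 2a^2+2b^2$, which is your computation repackaged. Your spectral facts (i)--(iii) are exactly the ones the paper records.

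For the second inequality your decomposition $\BM_n = S + C_1 + C_2 + N$ and the identity $\E[\BM_n^2] = \E[S^2]+\E[C_1^2]+\E[C_2^2]+\E[N^2]$ are correct, and your fourth-moment formula for $\E[(u_1u_2^\top+u_2u_1^\top)^2\mid\F]$ is right. The genuine gap is the ``absorbed'' step. A direct computation gives
\[
\E[C_1^2\mid\F]=\tfrac14\bigl(a_{n,1}a_{n,1}^\top\Sigma_{n,2}+\Sigma_{n,2}a_{n,1}a_{n,1}^\top+\|a_{n,1}\|^2\Sigma_{n,2}+\tr(\Sigma_{n,2})\,a_{n,1}a_{n,1}^\top\bigr),
\]
and the trace term alone contributes $\tfrac{\sigma^2 d_A}{4\mu}\cdot R^2$ in operator norm. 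A quantity of order $R^2\sigma^2 d_A/\mu$ is \emph{not} dominated by $R^4+\sigma^4 d_A/\mu^2$ using only $\sigma^2\le R^2$: any AM--GM on $R^2\cdot(\sigma^2 d_A/\mu)$ produces an $R^4 d_A$ contribution, not $R^4$. So you cannot reach the stated closed form $R^4+\sigma^4(d_A+1)/(8\mu^2)$ from this route.

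It is worth noting that the paper's own expansion of $\E[\BM_n^2]$ lists only four pure-signal and four pure-noise terms and does not write down your $C_i^2$ contributions at all; you have in fact been more careful than the paper at precisely this step. The practical resolution is that the exact constant in the second display does not matter downstream: the bound is only used as a variance proxy in matrix Bernstein and then, with $\sigma^2\le R^2$ and $\mu$ treated as a constant, collapsed to $\Delta_M=\tilde O(R^2\sqrt{d_A/N})$; the additional $R^2\sigma^2 d_A/\mu$ term does not change that order.
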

    \begin{proof}
        We prove various bounds and assemble them.
        
        \textbf{Bounding $\|(\mu I + X_{n,i}^\top X_{n,i})^{-1}X_{n,i}^\top\|_2$:} Consider any $X$ with SVD $X = U^\top\Sigma V$. This means that
    \begin{align*}
        \|(\mu I + X^\top X)^{-1}X^\top\|_2 &= V^\top(\Sigma^2 + \mu)^{-1}\Sigma U \\
        &=\|V^\top(\Sigma^2 + \mu)^{-1}\Sigma\|_2 = \|(\Sigma^2 + \mu)^{-1}\Sigma\|_2\\
        &\leq \max_a \frac{a}{a^2+\mu}\\
        &= \frac{1}{2\sqrt{\mu}}
    \end{align*}
    We can now apply this to $X = X_{n, i}$ and conclude that $\|(\mu I + X_{n,i}^\top X_{n,i})^{-1}X_{n,i}^\top\|_2 \leq \frac{1}{2\sqrt{\mu}}$
 
    \textbf{$u_i$ are independent, $\frac{\sigma^2}{4\mu}$-subgaussian and $\frac{\sigma\sqrt{H}}{2\sqrt{\mu}}$-bounded:} We claim that $u_i$ are independent, $\frac{\sigma^2}{4\mu}$-subgaussian and $\|u_i\|_2^2 \leq \frac{\sigma\sqrt{H}}{2\sqrt{\mu}}$. Recall that $\eta_{n,i}$ are $\sigma^2$-subgaussian vectors and $\|(\mu I + X_{n,i}^\top X_{n,i})^{-1}X_{n,i}^\top\|_2 \leq \frac{1}{2\sqrt{\mu}}$. So, we have that $u_i = (\mu I + X_{n,i}^\top X_{n,i})^{-1}X_{n,i}^\top\eta_{n,i}$ is $\frac{\sigma^2}{4\mu}$-subgaussian. Also recall that $u_1$ and $u_2$ are independent since both contexts and reward-noise are generated independently at each timestep. Finally, since $|\epsilon_{n,h}| \leq R$, we also have that $\|\eta_{n,i}\|_2^2 \leq R^2H$, so $\|u_i\|_2^2 \leq \frac{R^2H}{4\mu}$ since $\|(\mu I + X_{n,i}^\top X_{n,i})^{-1}X_{n,i}^\top\|_2 \leq \frac{1}{2\sqrt{\mu}}$.

    \textbf{Bounding $\|\BM_n\|_2$:} Note that $\|\BD_{n,i}\|_2 \leq 1$, $\|\bbeta_2\|_n \leq R$ and $\|u_i\|_2 \leq \frac{\sigma\sqrt{H}}{2\sqrt{\mu}}$. So, we have that
    \begin{align*}
        \|\BM_n\|_2 \leq \left(R + \frac{R\sqrt{H}}{2\sqrt{\mu}}\right)^2 \leq R^2\left(2 + \frac{H}{2\mu}\right)
    \end{align*}

    \textbf{Bounding $\|\E[\BM_n^2]\|_2$:}
    We compute that
    \begin{align*}
        \E[\BM_n^2] &= \frac{1}{4}\Big(\E[\BD_{n,1}\bbeta_n\bbeta_n^\top \BD_{n,1}]\E[\bbeta_n^\top \BD_{n,2}^2\bbeta_n] + \E[\BD_{n,2}\bbeta_n\bbeta_n^\top \BD_{n,2}]\E[\bbeta_n^\top \BD_{n,1}^2\bbeta_n]\\
        & \qquad + \E[(\bbeta_n^\top \BD_{n,2}\BD_{n,1}\bbeta_n)\BD_{n,1}\bbeta_n\bbeta_n^\top \BD_{n,2}]] + \E[(\bbeta_n^\top \BD_{n,1}\BD_{n,2}\bbeta_n)\BD_{n,2}\bbeta_n\bbeta_n^\top \BD_{n,1}]]\\
        & \qquad \E[\|u_1\|_2^2]\E[u_2u_2^\top] + \E[\|u_2\|_2^2]\E[u_1u_1^\top] + \E[u_1 u_2^\top u_1 u_2^\top] + \E[u_2 u_1^\top u_2 u_1^\top]\Big)
    \end{align*}
Now since $\|\BD_{n,i}\|_2 \leq 1$ and $\|\beta_n\|_2 \leq R$, the norm of the first four terms is bounded by $R^4$. Now note that 
\begin{gather*}
    \|\E[u_i u_i^\top]\|_2 = \max_{\textbf{v}, \|\textbf{v}\|_2 \leq 1} \E[\textbf{v}^\top u_i u_i^\top \textbf{v}] = \max_{\textbf{v}, \|\textbf{v}\|_2 \leq 1} \E[(u_i^\top \textbf{v})^2] \leq \frac{\sigma^2}{2\mu}\\
    \|\E[\|u_i\|_2^2]\|_2 = \tr(\E[u_i u_i^\top]) \leq \|\E[u_i u_i^\top]\|_2d_A \leq \frac{\sigma^2d_A}{2\mu}\\
    \|\E[u_2 u_1^\top u_2 u_1^\top]\|_2 = \|\E[u_1 u_2^\top u_1 u_2^\top]\|_2 = \max_{\textbf{v}, \|\textbf{v}\|_2 \leq 1} \E[\textbf{v}^\top u_1 u_2^\top u_1 u_2^\top \textbf{v}]\\
    \max_{\textbf{v}, \|\textbf{v}\|_2 \leq 1} \E[\textbf{v}^\top u_1 u_2^\top u_1 u_2^\top \textbf{v}] = \max_{\textbf{v}, \|\textbf{v}\|_2 \leq 1} \tr(\textbf{v}^\top \E[u_1 u_1^\top] \E[u_2 u_2^\top] \textbf{v}] = \|\E[u_1 u_1^\top] \E[u_2 u_2^\top]\|_2 \leq \frac{\sigma^4}{4\mu^2}
\end{gather*}
Combining all of these, we get that
\begin{align*}
    \|\E[\BM_n^2]\|_2 \leq R^4 + \frac{\sigma^4(d_A+1)}{8\mu^2}
\end{align*}
    \end{proof}

\begin{restatable}[Confidence Bound for $\BBM_N$]{proposition}{ConfBoundM}\label{prop:conf-bound-bbm}
    With probability at least $1-\delta/2$, we have that $\|\BBM_N  - \E[\BM_1]\|_2 \leq \Delta_M$ with
    \begin{align*}
        \Delta_M &:= \sqrt{2\left\Vert \sum_{n=1}^N \BM_n^2\right\Vert_2\frac{\log(4d_A/\delta)}{N}} +2R^2\left(2 + \frac{H}{2\mu}\right)\left(\frac{2\log(4d_A/\delta)}{N}\right)^{3/4}\\
    &\qquad + 4R^2\left(2 + \frac{H}{2\mu}\right)\frac{\log(4d_A/\delta)}{3N}
    \end{align*}
\end{restatable}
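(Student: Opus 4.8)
The statement is a matrix empirical‑Bernstein bound for the i.i.d.\ average $\BBM_N=\tfrac1N\sum_{n=1}^N\BM_n$ of the symmetric $d_A\times d_A$ matrices $\BM_n$, so the plan is to combine the (non‑empirical) matrix Bernstein inequality with a variance‑replacement argument. The only facts I would draw on are exactly those furnished by Lemma~\ref{lem:bounding-m-em}: the almost‑sure bound $\|\BM_n\|_2\le b:=R^2(2+H/(2\mu))$ — which gives $\|\BM_n-\E[\BM_1]\|_2\le 2b$ and $\|\BM_n^2\|_2\le b^2$ — and, only for the closing $\widetilde O(\cdot)$ rate and not for the data‑dependent bound itself, the second‑moment bound $\|\E[\BM_1^2]\|_2\le R^4+\sigma^4(d_A+1)/(8\mu^2)$.

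First I would apply the matrix Bernstein inequality to the centered sum $S_N:=\sum_{n=1}^N(\BM_n-\E[\BM_1])$. Since $(\E[\BM_1])^2\succeq 0$ we have $\E[(\BM_n-\E[\BM_1])^2]\preceq\E[\BM_n^2]$, so the matrix‑variance statistic of $S_N$ is at most $Nv$ with $v:=\|\E[\BM_1^2]\|_2$, while each centered summand has operator norm at most $2b$. Inverting the tail bound at a confidence level $\asymp\delta$ and dividing by $N$ yields, on a high‑probability event,
\[
\|\BBM_N-\E[\BM_1]\|_2\;\le\;\sqrt{\tfrac{2v\log(4d_A/\delta)}{N}}\;+\;\tfrac{4b\log(4d_A/\delta)}{3N},
\]
which already produces the range term of $\Delta_M$; the remaining work is to express the unobservable $v$ through the data‑dependent quantity $\hat v_N:=\tfrac1N\big\|\sum_{n}\BM_n^2\big\|_2$.

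Second, since $\BM_1^2,\dots,\BM_N^2$ are i.i.d.\ PSD matrices bounded in norm by $b^2$, a further matrix Bernstein (or matrix Hoeffding) application to $\sum_n(\BM_n^2-\E[\BM_1^2])$ gives, on a high‑probability event, $\big\|\tfrac1N\sum_n\BM_n^2-\E[\BM_1^2]\big\|_2\lesssim b^2\sqrt{\log(d_A/\delta)/N}$, hence by the triangle inequality $v\le \hat v_N+O\!\big(b^2\sqrt{\log(d_A/\delta)/N}\big)$. Plugging this into the display above and using $\sqrt{a+c}\le\sqrt a+\sqrt c$ replaces $\sqrt{2v\log(4d_A/\delta)/N}$ by a leading term controlled by the empirical second‑moment quantity $\|\sum_n\BM_n^2\|_2$ (the first term of $\Delta_M$) plus a correction of order $b\,(\log(4d_A/\delta)/N)^{3/4}$ (the middle term of $\Delta_M$); a union bound over the two events, splitting the $\delta/2$ budget so that the surviving dimension factor stays $\log(4d_A/\delta)$, together with a collection of constants then gives the claimed bound. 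Finally, bounding $\|\sum_n\BM_n^2\|_2\le N\|\E[\BM_1^2]\|_2+o(N)$ and invoking the second‑moment estimate of Lemma~\ref{lem:bounding-m-em} yields $\Delta_M=\widetilde O\!\big(N^{-1/2}\sqrt{d_A\log(d_A/\delta)}\big)$, which is what is needed downstream in Theorem~\ref{thm:delta-off}.

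The main obstacle is the variance‑replacement step rather than the raw concentration: the operator norm is nonlinear and $\E[\BM_1^2]$ is not observed, so one must push a matrix concentration estimate through a square root while keeping the correction cleanly isolated as a $b(\log(d_A/\delta)/N)^{3/4}$ term instead of contaminating the leading $\sqrt{\text{empirical variance}}$ term, and one must allocate the confidence budget carefully across the two Bernstein applications. An alternative that packages all of this in a single step is to invoke a ready‑made empirical matrix Bernstein inequality with almost‑sure bound $L=b$, of which the proposition is then a direct instantiation.
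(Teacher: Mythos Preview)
Your proposal is correct and follows essentially the same approach as the paper: apply the (non-empirical) matrix Bernstein inequality to $\sum_n(\BM_n-\E[\BM_1])$ using $\|\E[\BM_1^2]\|_2$ as the variance proxy, then replace this unobservable variance by the empirical $\tfrac1N\|\sum_n\BM_n^2\|_2$ via a second matrix concentration bound (the paper uses matrix Hoeffding on the $\BM_n^2$), combine by a union bound, and absorb the variance-replacement error through $\sqrt{a+c}\le\sqrt a+\sqrt c$ to produce the $b\,(\log(4d_A/\delta)/N)^{3/4}$ middle term. Your explanation of the variance-replacement step is in fact more explicit than the paper's.
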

\begin{proof}
Now since $\|\BM_n^2\|_2 \leq \|\BM_n\|_2^2 \leq R^4\left(2 + \frac{H}{4\mu}\right)^2$, we have that $\|\BM_n^2 - \E[\BM_1^2]\|_2 \leq 2R^4\left(2 + \frac{H}{4\mu}\right)^2$ by the matrix Hoeffding bound, we have that with probability $1-\delta/2$,
\begin{align*}
    \left\Vert\frac{1}{N}\sum_{n=1}^N \BM_n^2 - \E[\BM_1^2]\right\Vert_2 \leq 4R^4\left(2 + \frac{H}{2\mu}\right)^2\sqrt{\frac{\log(d_A/\delta)}{N}}
\end{align*}
Further, by the matrix Bernstein inequality, we have that with probability $1-\delta/2$,
\begin{align*}
    \left\Vert\frac{1}{N}\sum_{n=1}^N \BM_n  - \E[\BM_1]\right\Vert_2 \leq \sqrt{2\|\E[\BM_1^2]\|_2\frac{\log(d_A/\delta)}{N}} + 4R^2\left(2 + \frac{H}{2\mu}\right)\frac{\log(d_A/\delta)}{3N}
\end{align*}
Combining the two results and using a union bound, we get that with probability $1-\delta/2$
\begin{align*}
    \|\BBM_N  - \E[\BM_1]\|_2 &= \left\Vert\frac{1}{N}\sum_{n=1}^N \BM_n  - \E[\BM_1]\right\Vert_2\\
    &\leq \sqrt{2\left\Vert \sum_{n=1}^N \BM_n^2\right\Vert_2\frac{\log(2d_A/\delta)}{N}} +2R^2\left(2 + \frac{H}{2\mu}\right)\left(\frac{2\log(2d_A/\delta)}{N}\right)^{3/4}\\
    &\qquad + 4R^2\left(2 + \frac{H}{2\mu}\right)\frac{\log(d_A/\delta)}{3N}
\end{align*}

\end{proof}

\begin{restatable}[Confidence Bound for $\overline{\BD}_{N,i}$]{proposition}{ConfBoundD}\label{prop:conf-bound-d}
    With probability $1-\delta/4$, for $i=1,2$, we have that $\|\BBD_{N,i}  - \E[\BD_{n,i}]\|_2 = \left\Vert\frac{1}{N}\sum_{n=1}^N \BD_{n,i}  - \E[\BD_{n,i}]\right\Vert_2 \leq \Delta_D$ with
    \begin{align*}
    \Delta_D \leq \sqrt{\frac{8\log(4d_A/\delta)}{N}}
\end{align*}
\end{restatable}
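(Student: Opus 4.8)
The plan is to treat Proposition~\ref{prop:conf-bound-d} as a direct application of a matrix Hoeffding inequality to an i.i.d.\ sum of symmetric matrices; essentially all of the work is in verifying the hypotheses of that inequality, after which the stated $\Delta_D$ falls out by inverting the tail bound.

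First I would record the structural facts about the summands. Fix $i\in\{1,2\}$ and recall $\BD_{n,i} = I - \mu\BV_{n,i}^{-1}$ with $\BV_{n,i} = \mu I + X_{n,i}^\top X_{n,i}$, where $X_{n,i}$ is the feature matrix of the half-trajectory $\tau_{n,i}$. Under Assumption~\ref{assum:unconf-offline-actions} the offline policy ignores $\btheta$ and all contexts are drawn independently, so distinct trajectories are i.i.d.; hence $\{\BD_{n,i}\}_{n=1}^N$ is an i.i.d.\ family of symmetric matrices for each fixed $i$. Moreover, since $\BV_{n,i}\succeq\mu I\succ 0$ we get $0\prec \mu\BV_{n,i}^{-1}\preceq I$, hence $0\preceq \BD_{n,i}\preceq I$ and in particular $\|\BD_{n,i}\|_2\le 1$. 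Setting $Z_n := \BD_{n,i} - \E[\BD_{n,i}]$, the $Z_n$ are i.i.d., symmetric, mean zero, and (since both $\BD_{n,i}$ and $\E[\BD_{n,i}]$ have spectra in $[0,1]$) satisfy $-I\preceq Z_n\preceq I$, so $Z_n^2\preceq I$ almost surely.

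Next I would invoke the matrix Hoeffding inequality (Tropp) with a.s.\ dominating matrices $A_n = I$, so the matrix variance proxy is $\sigma^2 = \bigl\|\sum_{n=1}^N A_n^2\bigr\|_2 = N$. This gives $\prob\!\bigl(\lambda_{\max}(\sum_{n=1}^N Z_n)\ge s\bigr)\le d_A e^{-s^2/(8N)}$ for all $s\ge 0$; applying it also to $-\sum_n Z_n$ and rescaling by $s = Nt$ yields $\prob\!\bigl(\|\BBD_{N,i} - \E[\BD_{n,i}]\|_2\ge t\bigr)\le 2d_A e^{-Nt^2/8}$. Equating the right-hand side with the failure probability budgeted to this event and solving for $t$ gives $\Delta_D$ of order $\sqrt{\log(d_A/\delta)/N}$; keeping the constant $8$ in the exponent produces exactly $\Delta_D = \sqrt{8\log(4d_A/\delta)/N}$, with a further union bound over $i=1,2$ (and, in Theorem~\ref{thm:delta-off}, jointly with the bound for $\BBM_N$) absorbed into the usual splitting of $\delta$.

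There is no serious obstacle here. The one point worth care is the operator-norm bound $\|\BD_{n,i}\|_2\le 1$: it is the PSD ordering $0\preceq I-\mu\BV_{n,i}^{-1}\preceq I$ forced by ridge regularization that lets us run matrix Hoeffding with a dimension-free variance proxy, in contrast to the subgaussian (but not operator-norm-bounded) noise term that complicates the $\BBM_N$ bound in Proposition~\ref{prop:conf-bound-bbm}. The genuinely load-bearing hypothesis is Assumption~\ref{assum:unconf-offline-actions}, which is exactly what makes the $\BD_{n,i}$ i.i.d.\ across $n$ rather than correlated through adaptively chosen or $\btheta$-dependent contexts; the only remaining care is bookkeeping the split of $\delta$ over the two tails, the two indices $i$, and the downstream $\BBM_N$ event.
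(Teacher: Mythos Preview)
Your proposal is correct and matches the paper's approach exactly: the paper's proof is the single line ``Since $\|\BD_{n,i}\|_2 \leq 1$, this immediately follows by the matrix Hoeffding inequality.'' Your write-up simply unpacks the hypotheses (PSD ordering $0\preceq \BD_{n,i}\preceq I$ from ridge regularization, i.i.d.\ across $n$ via Assumption~\ref{assum:unconf-offline-actions}) and the $\delta$-bookkeeping that the paper leaves implicit.
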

\begin{proof}
    Since $\|\BD_{n,i}\|_2 \leq 1$, this immediately follows by the matrix Hoeffding inequality.
\end{proof}

\begin{restatable}[Confidence Bound for $\BBD_{N,1}^{-1}\BBM_N\BBD_{N,2}^{-1}$]{lemma}{ConfBoundDMD}\label{lem:conf-bound-dmd}
    We have that with probability $1-\delta$
    \begin{align*}
        \|\BBD_{N,1}^{-1}\BBM_N\BBD_{N,2}^{-1} - \E[\BD_{N,1}]^{-1}\E[\BM_{N,1}]\E[\BD_{N,2}]^{-1}\|_2 &\leq \left(\frac{B_D^3(2-B_D\Delta_D)}{(1-B_D\Delta_D)^2}\right)(R^2+\Delta_M)\Delta_D\\
        &\qquad + \left(\frac{B_D}{1-B_D\Delta_D}\right)^2\Delta_M
    \end{align*}
    where $B_D = \max_{i=1,2} \|\overline{\BD}_{N,i}^{-1}\|_2$.
\end{restatable}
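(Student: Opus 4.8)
The plan is to reduce the claim to a deterministic matrix-perturbation estimate that is then applied on the high-probability event where the ingredient concentration bounds hold. First I would take a union bound: by Proposition~\ref{prop:conf-bound-bbm} together with Proposition~\ref{prop:conf-bound-d} (applied once for each $i$), all of
\[
\|\BBM_N - \E[\BM_1]\|_2 \le \Delta_M, \qquad \|\BBD_{N,i} - \E[\BD_{n,i}]\|_2 \le \Delta_D \ \ (i=1,2)
\]
hold simultaneously with probability at least $1-\delta$, and the rest of the argument works on this event. Abbreviate $M := \E[\BM_1] = \E[\BBM_N]$, $\hat M := \BBM_N$, $D_i := \E[\BD_{n,i}]$, $\hat D_i := \BBD_{N,i}$, and $E_i := \hat D_i - D_i$, so that $\|E_i\|_2 \le \Delta_D$ and $\|\hat M - M\|_2 \le \Delta_M$.

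Next I would assemble the auxiliary norm bounds. From the form $\E[\BBM_N] = \E[\BD_{n,1}]\,\E[\bbeta_n\bbeta_n^\top]\,\E[\BD_{n,2}]$ of the computation in Section~\ref{sec:subspace-est}, together with $0 \preceq \BD_{n,i} \preceq I$ (hence $\|\BD_{n,i}\|_2 \le 1$) and $\|\bbeta_n\|_2 \le R$ from Assumption~\ref{assum:coverage}, I get $\|M\|_2 \le R^2$, and therefore $\|\hat M\|_2 \le R^2 + \Delta_M$. To handle the inverses, I would write $D_i = \hat D_i(I - \hat D_i^{-1}E_i)$; since $\|\hat D_i^{-1}E_i\|_2 \le B_D\Delta_D$, a Neumann series gives $\|D_i^{-1}\|_2 \le B_D/(1 - B_D\Delta_D)$ (this is the regime $B_D\Delta_D < 1$; see the last paragraph), and the resolvent identity $\hat D_i^{-1} - D_i^{-1} = \hat D_i^{-1}(D_i - \hat D_i)D_i^{-1}$ yields $\|\hat D_i^{-1} - D_i^{-1}\|_2 \le B_D^2\Delta_D/(1 - B_D\Delta_D)$.

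The core step is the telescoping decomposition
\begin{align*}
\hat D_1^{-1}\hat M\hat D_2^{-1} - D_1^{-1}M D_2^{-1}
&= (\hat D_1^{-1} - D_1^{-1})\,\hat M\,\hat D_2^{-1} \\
&\quad + D_1^{-1}\,\hat M\,(\hat D_2^{-1} - D_2^{-1}) + D_1^{-1}(\hat M - M)\,D_2^{-1}.
\end{align*}
Bounding each summand with submultiplicativity of $\|\cdot\|_2$ and substituting the estimates above, the first term is at most $\tfrac{B_D^3\Delta_D}{1 - B_D\Delta_D}(R^2+\Delta_M)$, the second at most $\tfrac{B_D^3\Delta_D}{(1 - B_D\Delta_D)^2}(R^2+\Delta_M)$, and the third at most $\bigl(\tfrac{B_D}{1 - B_D\Delta_D}\bigr)^2\Delta_M$. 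Summing the first two and using the algebraic identity $\tfrac{1}{1-x} + \tfrac{1}{(1-x)^2} = \tfrac{2-x}{(1-x)^2}$ with $x = B_D\Delta_D$ collapses them into $\tfrac{B_D^3(2 - B_D\Delta_D)}{(1 - B_D\Delta_D)^2}(R^2+\Delta_M)\Delta_D$, which together with the third term is exactly the stated bound.

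The argument is largely mechanical, so I expect the main obstacle to be bookkeeping: making sure that each of $B_D$, $B_D/(1-B_D\Delta_D)$, and $B_D^2\Delta_D/(1-B_D\Delta_D)$ multiplies on the correct side so that the algebraic collapse yields precisely the claimed constant. I would also flag that the bound is only non-vacuous when $B_D\Delta_D < 1$, which legitimizes the Neumann-series step; this holds for $N$ large because $\Delta_D = \widetilde O(N^{-1/2})$ by Proposition~\ref{prop:conf-bound-d} while $B_D$ stays near $\lambda_A^{-1}$.
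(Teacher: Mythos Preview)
Your proposal is correct and mirrors the paper's proof almost exactly: the paper uses the same union bound, the same bounds $\|M\|_2 \le R^2$, $\|\hat M\|_2 \le R^2+\Delta_M$, $\|D_i^{-1}\|_2 \le B_D/(1-B_D\Delta_D)$, $\|\hat D_i^{-1}-D_i^{-1}\|_2 \le B_D^2\Delta_D/(1-B_D\Delta_D)$ (there obtained by citing a perturbation inequality from \citet{wei2005perturb} rather than your Neumann/resolvent argument), and an equivalent three-term telescoping decomposition. The only cosmetic difference is the order in which the telescoping peels off $\hat D_1^{-1}$ versus $\hat D_2^{-1}$, which yields the same three summands and the same final constant.
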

\begin{proof}
    For brevity, just for this proof, we define $\BD_i = \E[\BBD_{N,i}]$ for $i=1,2$ and by $\BM := \E[\BM_{1}] = \E[\BBM_{N}]$. By a union bound, Propositions~\ref{prop:conf-bound-bbm} and \ref{prop:conf-bound-d} hold with probability at least $1-\delta$. The statements in the rest of this proof thus hold with probability at least $1-\delta$. Now note that
    \begin{align*}
        \|\BBD_{N,1}^{-1}\BBM_{N,1}\BBD_{N,2}^{-1}-\BD_1^{-1}\BM\BD_2^{-1}\|_2 &\leq \|\BBD_{N,1}^{-1}\|_2\|\BBM_{N,1}\|_2\|\BBD_{N,2}^{-1} - \BD_2^{-1}\|_2\\
        &\qquad + \|\BBD_{N,1}^{-1}-\BD_1^{-1}\|_2\|\BBM_{N,1}\|_2\|\BD^{-1}\|_2\\
        &\qquad + \|\BD_1^{-1}\|_2\|\BBM_{N,1} - \BM\|_2\|\BD_2^{-1}\|_2
    \end{align*}
    Now note that by inequality (1.1) from 
    \citet{wei2005perturb}, we have that
    $$\|\BBD_{N,i}^{-1} - \BD_i^{-1}\|_2 \leq \frac{B_D^2\Delta_D}{1-B_D\Delta_D}$$
    This means that 
    $$\|\BD_i^{-1}\|_2 \leq \|\BBD_{N,i}^{-1}\|_2 + \|\BBD_{N,i}^{-1} - \BD_i^{-1}\|_2 \leq \frac{B_D}{1-B_D\Delta_D}$$
    Also, since contexts in both trajectory halves have the same distribution and contexts are generated independently, $\BM = \E[\BBM_{1}] = \BD_i\E[\beta_1\beta_1^\top]\BD_i$. So, $\|\BM\|_2 \leq \|\BD_1\|_2\|\BD_2\|_2R^2 \leq R^2$. This implies that
    $$\|\BBM_{N,1}\|_2 \leq R^2 + \|\BBM_{N,1}-\BM\|_2 \leq R^2 + \Delta_M$$
    Combining all these with the bound above, we get that
    \begin{align*}
        \|\BBD_{N,1}^{-1}\BBM_N\BBD_{N,2}^{-1} - \E[\BBD_{N,1}]^{-1}\E[\BM_{n,1}]\E[\BBD_{N,2}]^{-1}\|_2 &\leq \left(\frac{B_D^3(2-B_D\Delta_D)}{(1-B_D\Delta_D)^2}\right)(R^2+\Delta_M)\Delta_D\\
        &\qquad + \left(\frac{B_D}{1-B_D\Delta_D}\right)^2\Delta_M
    \end{align*}
\end{proof}


\begin{proof}
First note that $\E[\BD_{N,1}]^{-1}\E[\BM_{N,1}]\E[\BD_{N,2}]^{-1} = \BU_\star(\Lambda + \mu_\theta\mu_\theta^\top) \BU_\star^\top$. Also recall that $\hat \BU$ is given by the top-$d_K$ eigenvectors for $\E[\BD_{N,1}^{-1}\BM_{N,1}\BD_{N,2}^{-1}]$. This means that there is a $d_K \times d_K$ unitary matrix $\BW$ such that $\BU_\star \BW$ forms the eigenvectors for $\BU_\star(\Lambda + \mu_\theta\mu_\theta^\top) \BU_\star^\top$. This means that by the Davis-Kahan theorem for statisticians \cite{yu2014useful} and Lemma~\ref{lem:conf-bound-dmd}, we have that with probability $1 - \delta$
\begin{align*}
    \|\hat \BU \hat \BU^\top - \BU_\star \BU_\star^\top\|_2 &= \|\hat \BU \hat \BU^\top - \BU_\star \BW \BW^\top \BU_\star^\top\|_2 = \sqrt{2}(d_K - \|\hat \BU^\top \BU_\star\|_F^2)\\
    &\leq \frac{2\sqrt{2d_K}}{\hat \lambda}\|\E[\BD_{N,1}]^{-1}\E[\BM_{N,1}]\E[\BD_{N,2}]^{-1} -\BD_{N,1}^{-1}\BM_{N,1}\BD_{N,2}^{-1}\|_2 \\
    &\leq \frac{2\sqrt{2d_K}}{\hat \lambda}\left(\left(\frac{B_D^3(2-B_D\Delta_D)}{(1-B_D\Delta_D)^2}\right)(R^2+\Delta_M)\Delta_D + \left(\frac{B_D}{1-B_D\Delta_D}\right)^2\Delta_M\right)
\end{align*}

We thus set $\Delta_{\off}$ to be the value above.

\textbf{Bounding $\Delta_{\off}$:} Now note that for large enough $N$, $\|\E[\BD_{n,1}]^{-1}\|_2\Delta_D = \sqrt{\frac{8\log(d_A/\delta)}{\lambda_A N}}\leq \frac{1}{2}$. In particular, by applying inequality (1.1) from \citet{wei2005perturb}, we have that $B_D \leq 2\|\E[\BD_{n,1}]^{-1}\|_2 = \frac{2}{\lambda_A}$. This already gives us the much simpler expression 
    \begin{align*}
        \Delta_{\off} \leq \frac{2\sqrt{2d_K}}{\hat \lambda}\left(\frac{64}{\lambda_A^3}(R^2 + \Delta_M)\Delta_D + \frac{16}{\lambda_A^2}\Delta_M \right)
    \end{align*}
    Also note that by Lemma~\ref{lem:bounding-m-em} and the matrix Hoeffding inequality, we have that
    \begin{align*}
        \|\frac{1}{N} \sum_{n=1}^N \BM_n^2\| &\leq \|\E[\BM_1^2]\| + 4R^4\left(2 + \frac{H}{2\mu}\right)^2\sqrt{\frac{\log(d_A/\delta)}{N}}\\
        &\leq R^4 + \frac{\sigma^4(d_A+1)}{8\mu^2} + 4R^4\left(2 + \frac{H}{2\mu}\right)^2\sqrt{\frac{\log(d_A/\delta)}{N}} 
    \end{align*}
    We combine this with Proposition~\ref{prop:conf-bound-bbm} to get that
    \begin{align*}
        \Delta_M = O \left( \left(R^2 + \frac{\sigma^2\sqrt{d_A}}{\mu} \right) \sqrt{\frac{\log(d_A/\delta)}{N}}\right)
    \end{align*}
    Since $\sigma^2 \leq R^2$, we get that
    \begin{align*}
        \Delta_M = O \left( R^2\sqrt{\frac{d_A\log(d_A/\delta)}{N}}\right) = O(R^2)
    \end{align*}
    Also recall from Proposition~\ref{prop:conf-bound-d}, we get that
    \begin{align*}
        \Delta_D = \sqrt{\frac{8\log(d_A/\delta)}{N}}
    \end{align*}
    Also recall that $\lambda_{\theta}$ is the minimum eigenvalue of $\frac{1}{R^2}\Lambda$, and so the minimum eigenvalue of $\BU_\star (\Lambda + \mu_\theta \mu_\theta^\top) \BU_\star^\top$ is larger than $R^2\lambda_{\theta}$. We then conclude that $\hat \lambda \geq R^2\lambda_{\theta} - 2\Delta_M \geq \frac{\lambda_{\theta}}{2}$ for large enough $N$. 
    Combining all these, we get that
    \begin{align*}
        \Delta_{\off} &= O\left(\frac{\sqrt{d_K}}{R^2\lambda_{\theta}}\left(\frac{R^2}{\lambda_A^3} + \frac{R^2}{\lambda_A^2}\sqrt{d_A}\right) \sqrt{\frac{\log(d_A/\delta)}{N}} \right)\\
        &= O\left(\frac{1}{\lambda_{\theta}\lambda_A^3}\sqrt{\frac{d_Kd_A\log(d_A/\delta)}{N}}\right)
    \end{align*}
\end{proof}

\newpage
\section{Proofs for LOCAL-UCB}
\LocalUCB*

\begin{proof}
    Let the true latent state for the given trajectory be $\btheta_\star$, so that the reward parameter $\bbeta_\star = \BU_\star\btheta_\star$. 
    
    \textbf{Showing that $\BU_\star, \bbeta_\star$ are in our confidence set:} We check all our constraints:
    \begin{itemize}
        \item Note that with probability $1-\delta/3$, $\BU_\star$ satisfies $\|\hat \BU^\top \BU_\star\|_F \geq \sqrt{d_K - \Delta_{\off}^2/2}$.
        \item From the standard confidence ellipsoid bound for linear models applied to dimension $d_A$, with probability $1-\delta/3T$, $\|\hat{\bbeta}_{2,t} - \bbeta_\star\|_{\BV_t^{-1}} \leq \alpha_{2,t}$ for all $t$.
        \item Note that $\BU_\star \BU_\star^\top \bbeta_\star = \bbeta_\star$. 
        \item Finally, we apply the standard confidence ellipsoid bound for linear models to dimension $d_K$ instead of $d_A$ with model $r_t = (\phi(x_t, a_t)^\top \BU_\star)(\BU_\star^\top \beta_\star) + \epsilon_t$. This means that $\|\BU_\star^\top\beta_\star - \BU_\star^\top \hat \beta_{1,t}\|_{\BV_{1,t}^{-1}} \leq \alpha_{1,t}$ where $\BV_{1,t} = (\BI_{d_K} + \sum_{s=1}^{t-1} \BU_\star^\top \phi(x_t, a_t)\phi(x_t, a_t)^\top \BU_\star)$. Note that since $\BU_{\star}\BU_\star^\top = \BI_{d_K}$, we have that $\BV_{1,t} = (\BU_\star^\top \BV_t \BU_\star)$. So, $\| \BU_\star^\top (\bbeta - \hat \bbeta_{1,t})\|_{(\BU_\star^T \BV_t \BU_\star)^{-1}} \leq \alpha_{1,t}$ holds with probability at least $1-\delta/3T$ for all $t$.
    \end{itemize} 
    So, by a union bound over all events, we get that for all actions $a$, the tuple $(a, \bbeta_\star, \BU_\star)$ satisfies our conditions with probability $1-\delta$. 
    
    \textbf{Leveraging optimism in low dimension:} From above and by the optimistic design of the algorithm, with probability $1-\delta$, $\phi(x_t, a_t)^\top \tilde \bbeta_t$ is an upper bound on $\phi(x_t, a)^\top \bbeta_\star$ for any action $a$. Thus, we have the following regret decomposition with probability at least $1-\delta$:
    \begin{align*}
        \Reg_T &= \sum_{t=1}^T \phi(x_t, a_t^\star)^\top \bbeta_\star - \phi(x_t, a_t)^\top \bbeta_\star\\
        &\leq \sum_{t=1}^T \phi(x_t, a_t)^\top \tilde \bbeta_t - \phi(x_t, a_t)^\top \bbeta_\star\\
        &\stackrel{(i)}{=} \sum_{t=1}^T \phi(x_t, a_t)^\top \tilde \BU_t \tilde \BU_t^\top \tilde \bbeta_t - \phi(x_t, a_t)^\top \BU_\star \BU_\star^\top \bbeta_\star\\
        &\leq \sum_{t=1}^T \phi(x_t, a_t)^\top \tilde (\BU_t \tilde \BU_t^\top - \BU_\star \BU_\star^\top) \tilde \bbeta_t + \phi(x_t, a_t)^\top \BU_\star \BU_\star^\top (\bbeta_t - \bbeta_\star)\\
        &\leq RT\|\BU_t \tilde \BU_t^\top - \BU_\star \BU_\star^\top\|_2 +  \sum_{t=1}^T \phi(x_t, a_t)^\top \BU_\star \BU_\star^\top (\bbeta_t - \bbeta_\star)\\
        &\leq RT\Delta_{\off} +  \sum_{t=1}^T \phi(x_t, a_t)^\top \BU_\star \BU_\star^\top (\bbeta_t - \bbeta_\star)\\
        &\leq RT\Delta_{\off} +  \sum_{t=1}^T \|\phi(x_t, a_t)^\top \BU_\star\|_{(\BU_\star^\top \BV_t \BU_\star)^{-1}} \|\BU_\star^\top (\bbeta_t - \bbeta_\star)\|_{(\BU_\star^\top \BV_t \BU_\star)^{-1}}\\
        &\leq RT\Delta_{\off} +  \sum_{t=1}^T \alpha_{2,t}\|\phi(x_t, a_t)^\top \BU_\star\|_{(\BU_\star^\top \BV_t \BU_\star)^{-1}} \\
        &\stackrel{(ii)}{=} RT\Delta_{\off} +  \sum_{t=1}^T \alpha_{2,t}\|\phi(x_t, a_t)^\top \BU_\star\|_{\BV_{1,t}^{-1}} \\
        &\stackrel{(iii)}{\leq} RT\Delta_{\off} + O\left(d_K\sqrt{T\log\left(T/\delta\right)}\right)\\
        &\stackrel{(iv)}{=} O\left(Rd_K\sqrt{T\log\left(T/\delta\right)} + \frac{R\lambda_D^3}{\lambda_{\min}}\sqrt{\frac{T^2d_Kd_A\log(d_A/\delta)}{N}}\right)\\
        &= O\left(Rd_K\sqrt{T\log\left( T/\delta\right)}\left(1 + \frac{\lambda_D^3}{\lambda_{\min}}\sqrt{\frac{Td_A\log(d_A)}{d_KN}}\right)\right)\\
        &= \tilde O\left(Rd_K\sqrt{T}\left( 1+ \sqrt{\frac{Td_A}{d_KN}}\right)\right)
    \end{align*}
where $(i)$ holds since $\tilde \BU_t \tilde \BU_t^\top \tilde \bbeta_t = \tilde \bbeta_t$ and $\BU_\star \BU_\star^\top \bbeta_\star = \bbeta_\star$ $(ii)$ holds since $\BV_{1,t}^{-1} = (\BU_\star^\top \BV_t \BU_\star)^{-1}$, $(iii)$ holds by the usual proof of LinUCB applied to dimension $d_K$, and $(iv)$ holds by Theorem~\ref{thm:delta-off}.

\textbf{Leveraging optimism in high dimension:} This is merely the proof of LinUCB. From above and by the optimistic design of the algorithm, with probability $1-\delta$, $\phi(x_t, a_t)^\top \tilde \bbeta_t$ is an upper bound on $\phi(x_t, a)^\top \bbeta_\star$ for any action $a$. Thus, we have the following regret decomposition with probability at least $1-\delta$:
    \begin{align*}
        \Reg_T &= \sum_{t=1}^T \phi(x_t, a_t^\star)^\top \bbeta_\star - \phi(x_t, a_t)^\top \tilde \bbeta_\star\\
        &\leq \sum_{t=1}^T \phi(x_t, a_t)^\top \tilde \bbeta_t - \phi(x_t, a_t)^\top \bbeta_\star\\
        &= \sum_{t=1}^T \|\phi(x_t, a_t)^\top\|_{\BV_t^{-1}}\|\tilde \bbeta_t - \bbeta_\star\|_{\BV_t}\\
        &= O(Rd_A\sqrt{T\log(T/\delta)})\\
        &= \tilde O(Rd_A\sqrt{T})
    \end{align*}
where the last line follows from the standard regret bound for LinUCB applied to dimension $d_A$.

Combining the two bounds, we have our result.
\begin{align*}
    \Reg_T &= O\left(\min\left(Rd_A\sqrt{T\log(T/\delta)}, Rd_K\sqrt{T\log\left( T/\delta\right)}\left(1 + \frac{\lambda_D^3}{\lambda_{\min}}\sqrt{\frac{Td_A\log(d_A)}{d_KN}}\right)\right)\right)\\
    &= \tilde O\left(\min\left(Rd_A\sqrt{T}, Rd_K\sqrt{T}\left( 1+ \sqrt{\frac{Td_A}{d_KN}}\right)\right)\right)\\
\end{align*}

\end{proof}
\newpage
\section{Proofs for ProBALL-UCB}

\subsection{Confidence bound for the low-dimensional reward parameter}
\begin{restatable}[Confidence Bound for $\hat \bbeta_{1,t}$]{lemma}{ConfBoundHatBBeta}\label{lem:self-norm-conc}
    If for all timesteps $t$, $\phi(x_t,a_t)$ lies in the span of $\hat \BU$, we have that for a universal constant $C$,
    \begin{align*}
        \left\Vert \hat \BU^\top \hat \bbeta_{1,t} - \hat \BU^\top \bbeta_\star \right\|_{\BV_{1,t}^{-1}} \leq R\sqrt{\mu} + CR\sqrt{d_A\log(t/\delta)}
    \end{align*}
    Otherwise, we have that
    \begin{align*}
        \left\Vert \hat \BU^\top \hat \bbeta_{1,t} - \hat \BU^\top \bbeta_\star \right\|_{\BV_{1,t}^{-1}} \leq R\sqrt{\mu} + R\kappa_t + \Delta_{\off}CR\sqrt{d_A\log(t/\delta)}
    \end{align*}
    where $\kappa_t = \|\hat \BU^\top \BX_t^\top \BX_t\|_{(\hat \BU^\top \BV_t \hat \BU)^{-1}}$, with notation $\|\BA\|_\BC := \sqrt{\|\BA^\top \BC \BA\|_2}$
\end{restatable}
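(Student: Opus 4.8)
Write $\phi_s := \phi(x_s,a_s)$, $\psi_s := \hat\BU^\top\phi_s \in \Real^{d_K}$ for $s < t$, and $\BV_{1,t} := \hat\BU^\top\BV_t\hat\BU$. Since $\hat\BU^\top\hat\BU = \BI_{d_K}$ and $\BV_t = \mu\BI + \sum_{s<t}\phi_s\phi_s^\top$, one has $\BV_{1,t} = \mu\BI_{d_K} + \sum_{s<t}\psi_s\psi_s^\top$ (regardless of whether $\phi_s\in\mathspan(\hat\BU)$) and $\hat\BU^\top\hat\bbeta_{1,t} = \BV_{1,t}^{-1}\hat\BU^\top\bb_t$ with $\bb_t = \sum_{s<t}\phi_s(\phi_s^\top\bbeta_\star+\epsilon_s)$. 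Substituting, adding and subtracting $\BV_{1,t}\hat\BU^\top\bbeta_\star$, and writing $\phi_s^\top\bbeta_\star = \psi_s^\top(\hat\BU^\top\bbeta_\star) + \phi_s^\top(\BI-\hat\BU\hat\BU^\top)\bbeta_\star$ yields
\begin{align*}
\hat\BU^\top\hat\bbeta_{1,t} - \hat\BU^\top\bbeta_\star
= -\mu\,\BV_{1,t}^{-1}\hat\BU^\top\bbeta_\star
\;+\; \BV_{1,t}^{-1}\big(\hat\BU^\top\BX_t^\top\BX_t\big)(\BI-\hat\BU\hat\BU^\top)\bbeta_\star
\;+\; \BV_{1,t}^{-1}\sum_{s<t}\psi_s\epsilon_s .
\end{align*}
Call these the regularization, misspecification, and noise terms. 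When every played feature $\phi_s$ lies in $\mathspan(\hat\BU)$, each scalar $\phi_s^\top(\BI-\hat\BU\hat\BU^\top)\bbeta_\star$ vanishes (using $\hat\BU^\top\hat\BU=\BI_{d_K}$), so the misspecification term is identically $0$; this is the source of the sharper first bound.

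I then bound the three pieces in the $\BV_{1,t}$-weighting. The regularization term: $\mu\|\BV_{1,t}^{-1}\hat\BU^\top\bbeta_\star\|_{\BV_{1,t}} = \mu\|\hat\BU^\top\bbeta_\star\|_{\BV_{1,t}^{-1}} \le \sqrt\mu\,\|\bbeta_\star\|_2 \le R\sqrt\mu$, using $\BV_{1,t}\succeq\mu\BI$ and $\|\bbeta_\star\|_2\le R$. The misspecification term: by Cauchy–Schwarz in the $\BV_{1,t}^{-1}$ inner product and the convention $\|\BA\|_\BC := \sqrt{\|\BA^\top\BC\BA\|_2}$,
\begin{align*}
\big\|\BV_{1,t}^{-1}(\hat\BU^\top\BX_t^\top\BX_t)(\BI-\hat\BU\hat\BU^\top)\bbeta_\star\big\|_{\BV_{1,t}}
\;\le\; \|\hat\BU^\top\BX_t^\top\BX_t\|_{\BV_{1,t}^{-1}}\,\big\|(\BI-\hat\BU\hat\BU^\top)\bbeta_\star\big\|_2
\;=\; \kappa_t\,\big\|(\BI-\hat\BU\hat\BU^\top)\bbeta_\star\big\|_2 ,
\end{align*}
and $\|(\BI-\hat\BU\hat\BU^\top)\bbeta_\star\|_2 \le R$ crudely, or $\le\Delta_{\off}R$ using $\bbeta_\star = \BU_\star\btheta_\star\in\mathspan(\BU_\star)$, the identity $(\BI-\hat\BU\hat\BU^\top)\BU_\star\BU_\star^\top = (\BI-\hat\BU\hat\BU^\top)(\BU_\star\BU_\star^\top-\hat\BU\hat\BU^\top)$, and Theorem~\ref{thm:delta-off}. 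The noise term equals $\|\sum_{s<t}\psi_s\epsilon_s\|_{\BV_{1,t}^{-1}}$, a self-normalized martingale quantity.

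For the noise term I invoke the standard self-normalized tail bound \cite{abbasi2011improved}. The key observation is that $\hat\BU$ and $\Delta_{\off}$ are functions only of $\gD_{\off}$, which is independent of the online rewards: conditioning on $\gD_{\off}$ makes $\hat\BU$ deterministic, each $\psi_s$ measurable w.r.t. the online filtration up to the choice of $a_s$, and each $\epsilon_s$ conditionally $\sigma^2$-subgaussian with $\sigma\le R$. The self-normalized bound in dimension $d_K$ with regularizer $\mu\BI_{d_K}$ then gives, w.p. $1-\delta$ uniformly in $t$, $\|\sum_{s<t}\psi_s\epsilon_s\|_{\BV_{1,t}^{-1}}^2 \le 2\sigma^2\log\big(\det(\BV_{1,t})^{1/2}\mu^{-d_K/2}/\delta\big)$; bounding $\det(\BV_{1,t})\le(\mu+t/d_K)^{d_K}$ by AM–GM on eigenvalues ($\tr(\BV_{1,t})\le\mu d_K+t$ since $\|\psi_s\|_2\le1$) gives $O(\sigma\sqrt{d_K\log(t/\delta)})\le CR\sqrt{d_A\log(t/\delta)}$, and this holds unconditionally after integrating out $\gD_{\off}$. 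Adding the three bounds gives $R\sqrt\mu + CR\sqrt{d_A\log(t/\delta)}$ when the features stay in $\mathspan(\hat\BU)$, and $R\sqrt\mu + R\kappa_t + CR\sqrt{d_A\log(t/\delta)}$ in general (Theorem~\ref{thm:delta-off} further sharpens $R\kappa_t$ to $\Delta_{\off}R\kappa_t$, matching the choice of $\alpha_{1,t}$ in Theorem~\ref{thm:proball-ucb}). The main obstacles are (i) justifying the self-normalized concentration despite the data-dependent $\hat\BU$ and regularizer — handled by conditioning on the offline data, which is independent of the online noise so the martingale structure of $\sum_s\psi_s\epsilon_s$ survives; and (ii) the fact that $\hat\bbeta_{1,t}\in\mathspan(\hat\BU)\not\ni\bbeta_\star$, which forces the explicit misspecification term and hence the $\kappa_t$ factor — benign ($\kappa_t=O(\sqrt t)$) exactly when the played features lie in $\mathspan(\hat\BU)$, but possibly $O(t)$ otherwise.
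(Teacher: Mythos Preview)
Your proposal is correct and follows essentially the same route as the paper: decompose $\hat\BU^\top\hat\bbeta_{1,t}-\hat\BU^\top\bbeta_\star$ into a regularization term, a misspecification term (vanishing when played features lie in $\mathspan(\hat\BU)$), and a noise term, then bound each separately with the self-normalized martingale inequality of \cite{abbasi2011improved} for the last. Your write-up is in fact slightly more careful than the paper's in two respects: you explicitly justify applying self-normalized concentration despite the data-dependent $\hat\BU$ by conditioning on the offline data (the paper applies it without comment), and you correctly observe that the $\Delta_{\off}$ factor belongs on the $\kappa_t$ term (as the paper's own proof derives and as $\alpha_{1,t}$ in Theorem~\ref{thm:proball-ucb} requires), whereas the lemma statement as printed places $\Delta_{\off}$ on the noise term, evidently a typo.
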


\begin{proof}
    For brevity, in this proof we will denote by $\BX_t := [\phi(x_1, a_1), \dots \phi(x_{t-1}, a_{t-1})]^\top$, which is a $t \times d_A$ matrix. Recall that $\BV_t = \mu\BI_{d_A} + \BX_t^\top\BX_t$. We will also denote $\BV_{1,t} = \hat \BU^\top \BV_t \hat \BU$. Note that the vector of rewards is given by $\BX_t\bbeta_\star + \eta_t$, where $\eta_t$ is a random vector of $t$ independent $R^2$-subgaussian entries. So, $\bb_t = \BX_t^\top(\BX_t\bbeta_\star + \eta_t)$. 
    Also define the notation $\Delta_U := \hat\BU \hat \BU^\top - \BU_\star \BU_\star^\top$. 
    
    \textbf{If all actions lie in the span of $\hat \BU$.} Note that since all actions taken lie in the span of $\hat \BU$, $\BX_t = \BX_t\hat \BU \hat \BU^\top$. \textit{This is the key observation.} 
    Now note that
    \begin{align*}
        \hat \bbeta_{1,t} &= \hat \BU \BV_{1,t}^{-1} \hat \BU^\top \BX_t^\top (\BX_t\bbeta_\star + \eta_t)\\
        &= \hat \BU \BV_{1,t}^{-1} \hat \BU^\top \BX_t^\top \BX_t\bbeta_\star + \hat \BU \BV_{1,t}^{-1} \hat \BU^\top \BX_t^\top \eta_t\\
        &= \hat \BU \BV_{1,t}^{-1} \hat \BU^\top \BX_t^\top \BX_t \hat \BU \hat \BU^\top \bbeta_\star + \hat \BU\BV_{1,t}^{-1} \hat \BU^\top \BX_t^\top \eta_t\\
        &= \hat \BU \BV_{1,t}^{-1} (\BV_{1,t} - \mu\BI_{d_K}) \hat \BU^\top \bbeta_\star + \hat \BU\BV_{1,t}^{-1} \hat \BU^\top \BX_t^\top \eta_t\\
        &= \hat \BU \BU^\top \bbeta_\star - \mu\hat \BU \BV_{1,t}^{-1} \hat \BU^\top\bbeta_\star + \hat \BU\BV_{1,t}^{-1} \hat \BU^\top \BX_t^\top \eta_t\\
    \end{align*}

The rest of the proof is similar to Theorem 2 in \cite{abbasi2011improved}. We first note that for any $x$, we have that
\begin{align*}
    x^\top(\hat \bbeta_{1,t} - \hat \BU \hat \BU^\top \bbeta_\star) &\leq \left\vert (x^\top \hat \BU) \BV_{1,t}^{-1} (-\mu\hat \BU^\top\bbeta_\star + \hat \BU^\top \BX_t^\top \eta_t)\right\vert\\
    &\leq \|\hat \BU^\top x\|_{\BV_{1,t}^{-1}} \left\Vert (-\mu\hat \BU^\top\bbeta_\star + \hat \BU^\top \BX_t^\top \eta_t)\right\Vert_{\BV_{1,t}^{-1}}\\
    &\leq \|\hat \BU^\top x\|_{\BV_{1,t}^{-1}} \left( \mu\|\hat \BU^\top\bbeta_\star\|_{\BV_{1,t}^{-1}} + \|\hat \BU^\top \BX_t^\top \eta_t\|_{\BV_{1,t}^{-1}} \right)\\
\end{align*}

Now note that $ \mu\|\hat \BU^\top\bbeta_\star\|_{\BV_{1,t}^{-1}} \leq \|\hat \BU^\top\bbeta_\star\|_2\sqrt{\mu} \leq R\sqrt{\mu}$ and by the self normalized martingale concentration inequality from \cite{abbasi2011improved} applied to $d_K$ dimensional vectors $\hat \BU^\top \BX_t$, we have that with probability at least $1-\delta$,
$\|\hat \BU^\top \BX_t^\top \eta_t\|_{\BV_{1,t}^{-1}} \leq CR\sqrt{d_A\log(t/\delta)}$ for some universal constant $C$. So we have with probability at least $1-\delta$ that
\begin{align*}
    = \left\Vert \hat \BU(\hat \bbeta_{1,t} - \hat \BU \hat \BU^\top \bbeta_\star) \right\|^2_{\BV_{1,t}^{-1}} &\leq \left\Vert \hat \bbeta_{1,t} - \hat \BU \hat \BU^\top \bbeta_\star \right\|_{\BV_{1,t}^{-1}}\left(R\sqrt{\mu} + CR\sqrt{d_A\log(t/\delta)}\right)
\end{align*}
This means that
\begin{align*}
     \left\Vert \hat \BU^\top \hat \bbeta_{1,t} - \hat \BU^\top \bbeta_\star \right\|_{\BV_{1,t}^{-1}} &= \left\Vert \hat \bbeta_{1,t} - \hat \BU \hat \BU^\top \bbeta_\star \right\|_{\BV_{1,t}^{-1}}\\
     &\leq R\sqrt{\mu} + CR\sqrt{d_A\log(t/\delta)}
\end{align*}

\textbf{If all actions don't lie in the span of $\hat \BU$.}
    This time note that
    \begin{align*}
        \hat \bbeta_{1,t} &= \hat \BU \BV_{1,t}^{-1} \hat \BU^\top \BX_t^\top (\BX_t\bbeta_\star + \eta_t)\\
        &= \hat \BU \BV_{1,t}^{-1} \hat \BU^\top \BX_t^\top \BX_t\bbeta_\star + \hat \BU \BV_{1,t}^{-1} \hat \BU^\top \BX_t^\top \eta_t\\
        &= \hat \BU \BV_{1,t}^{-1} \hat \BU^\top \BX_t^\top \BX_t \hat \BU_\star \BU_\star^\top \bbeta_\star + \hat \BU\BV_{1,t}^{-1}  \BU^\top \BX_t^\top \eta_t\\
        &= \hat \BU \BV_{1,t}^{-1} \hat \BU^\top \BX_t^\top \BX_t  \hat \BU \hat \BU^\top \bbeta_\star + \hat \BU \BV_{1,t}^{-1} \hat \BU^\top \BX_t^\top \BX_t \Delta_U\bbeta_\star + \hat \BU\BV_{1,t}^{-1} \hat \BU^\top \BX_t^\top \eta_t\\
        &= \hat \BU \BV_{1,t}^{-1} (\BV_{1,t} - \mu\BI_{d_K}) \hat \BU^\top \bbeta_\star + \hat \BU \BV_{1,t}^{-1} \hat \BU^\top \BX_t^\top \BX_t \Delta_U\bbeta_\star + \hat \BU\BV_{1,t}^{-1} \hat \BU^\top \BX_t^\top \eta_t\\
        &= \hat \BU \BU^\top \bbeta_\star - \mu\hat \BU \BV_{1,t}^{-1} \hat \BU^\top\bbeta_\star + \hat \BU \BV_{1,t}^{-1} \hat \BU^\top \BX_t^\top \BX_t \Delta_U\bbeta_\star + \hat \BU\BV_{1,t}^{-1} \hat \BU^\top \BX_t^\top \eta_t\\
    \end{align*}

The rest of the proof is similar to Theorem 2 in \cite{abbasi2011improved}. We first note that for any $x$, we have that
\begin{align*}
    x^\top(\hat \bbeta_{1,t} - \hat \BU \hat \BU^\top \bbeta_\star) &\leq \left\vert (x^\top \hat \BU) \BV_{1,t}^{-1} (-\mu\hat \BU^\top\bbeta_\star + \hat \BU^\top \BX_t^\top \BX_t \Delta_U\bbeta_\star + \hat \BU^\top \BX_t^\top \eta_t)\right\vert\\
    &\leq \|\hat \BU^\top x\|_{\BV_{1,t}^{-1}} \left\Vert (-\mu\hat \BU^\top\bbeta_\star + \hat \BU^\top \BX_t^\top \BX_t \Delta_U\bbeta_\star + \hat \BU^\top \BX_t^\top \eta_t)\right\Vert_{\BV_{1,t}^{-1}}\\
    &\leq \|\hat \BU^\top x\|_{\BV_{1,t}^{-1}} \left( \mu\|\hat \BU^\top\bbeta_\star\|_{\BV_{1,t}^{-1}} + \|\hat \BU^\top \BX_t^\top \BX_t \Delta_U\bbeta_\star\|_{\BV_{1,t}^{-1}} + \|\hat \BU^\top \BX_t^\top \eta_t\|_{\BV_{1,t}^{-1}} \right)\\
    &\stackrel{(i)}{\leq} \|\hat \BU^\top x\|_{\BV_{1,t}^{-1}} \left( \mu\|\hat \BU^\top\bbeta_\star\|_{\BV_{1,t}^{-1}} + \|\Delta_U\bbeta_\star\|_2\|\hat \BU^\top \BX_t^\top \BX_t\|_{\BV_{1,t}^{-1}} + \|\hat \BU^\top \BX_t^\top \eta_t\|_{\BV_{1,t}^{-1}} \right)\\
     &\leq \|\hat \BU^\top x\|_{\BV_{1,t}^{-1}} \left( \mu\|\hat \BU^\top\bbeta_\star\|_{\BV_{1,t}^{-1}} + R\|\Delta_U\|_2\|\hat \BU^\top \BX_t^\top \BX_t\|_{\BV_{1,t}^{-1}} + \|\hat \BU^\top \BX_t^\top \eta_t\|_{\BV_{1,t}^{-1}} \right)\\
     &\leq \|\hat \BU^\top x\|_{\BV_{1,t}^{-1}} \left( \mu\|\hat \BU^\top\bbeta_\star\|_{\BV_{1,t}^{-1}} + R\Delta_{\off}\|\hat \BU^\top \BX_t^\top \BX_t\|_{\BV_{1,t}^{-1}} + \|\hat \BU^\top \BX_t^\top \eta_t\|_{\BV_{1,t}^{-1}} \right)
\end{align*}
Here, $(i)$ holds because for any matrices $\BA, \BC$ and any vector $\textbf{v}$, we have that $\|\BA \textbf{v}\|_\BC = \sqrt{\textbf{v}^\top \BA^\top \BC \BA \textbf{v}} \leq \|\textbf{v}\|_2\sqrt{\|\BA^\top \BC \BA\|_2} = \|\textbf{v}\|_2\|\BA\|_\BC$, where we recall that $\|\BA\|_\BC := \sqrt{\|\BA^\top \BC \BA\|_2}$.

Now note that $ \mu\|\hat \BU^\top\bbeta_\star\|_{\BV_{1,t}^{-1}} \leq \|\hat \BU^\top\bbeta_\star\|_2\sqrt{\mu} \leq R\sqrt{\mu}$ and by the self normalized martingale concentration inequality from \cite{abbasi2011improved} applied to $d_K$ dimensional vectors $\hat \BU^\top \BX_t$, we have that with probability at least $1-\delta$,
$\|\hat \BU^\top \BX_t^\top \eta_t\|_{\BV_{1,t}^{-1}} \leq CR\sqrt{d_A\log(t/\delta)}$ for some universal constant $C$. Also recall that $\|\hat \BU^\top \BX_t^\top \BX_t\|_{\BV_{1,t}^{-1}} = \kappa_t$. So we have with probability at least $1-\delta$ that
\begin{align*}
    \left\Vert \hat \BU(\hat \bbeta_{1,t} - \hat \BU \hat \BU^\top \bbeta_\star) \right\|^2_{\BV_{1,t}^{-1}} &\leq \left\Vert \hat \BU^\top (\hat \bbeta_{1,t} - \hat \BU \hat \BU^\top \bbeta_\star) \right\Vert_{\BV_{1,t}^{-1}}\left(R\sqrt{\mu} + R\Delta_{\off}\kappa_t + CR\sqrt{d_A\log(t/\delta)}\right)
\end{align*}
This means that
\begin{align*}
     \left\Vert \hat \BU^\top \hat \bbeta_{1,t} - \hat \BU^\top \bbeta_\star \right\|_{\BV_{1,t}^{-1}} &= \left\Vert \hat \BU(\hat \bbeta_{1,t} - \hat \BU \hat \BU^\top \bbeta_\star )\right\Vert_{\BV_{1,t}^{-1}}\\
     &\leq R\sqrt{\mu} + R\Delta_{\off}\kappa_t + CR\sqrt{d_A\log(t/\delta)}
\end{align*}

Note that $\kappa_t = \left\Vert \sum_{s=1}^{t-1} \hat \BU^\top \phi(x_s, a_s)\phi(x_s, a_s)^\top \right\Vert_{\BV_{1,t}^{-1}} = \frac{1}{\sqrt{\mu}}\left\Vert \sum_{s=1}^{t-1} \hat \BU^\top \phi(x_s, a_s)\phi(x_s, a_s)^\top \right\Vert_2 =  O(t)$, but this is a worst case bound.
\end{proof}

We also state a lemma, borrowed from the standard proof of LinUCB regret.
\begin{lemma}\label{lem:feature-square-sum-bound}
    For any sequence of actions and contexts $x_t, a_t$ and $\BV_t = \mu\BI + \sum_{s=1}^{t-1} \phi(x_t, a_t)\phi(x_t, a_t)^\top $, we have that
    \begin{align*}
        \sum_{t=1}^T \min\left(\|\phi(x_t, a_t)\|^2_{\BV_t^{-1}}, 1\right) = O(\sqrt{d_A})\\
        \sum_{t=1}^T \min\left(\|\hat \BU^\top \phi(x_t, a_t)\|^2_{(\hat \BU^\top \BV_t \hat \BU)^{-1}}, 1\right) = O(\sqrt{d_K})
    \end{align*}
\end{lemma}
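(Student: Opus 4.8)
The plan is to establish both inequalities as the two standard elliptical-potential (log-determinant) estimates, in the square-rooted form in which they actually feed into the regret decompositions. Concretely, I would bound the self-normalized potential $\sum_{t=1}^T \min(\|\phi(x_t,a_t)\|^2_{\BV_t^{-1}},1)$ by a log-determinant and then take its square root, which is precisely the quantity that multiplies $\sqrt{T}$ through the Cauchy--Schwarz step invoked at $(iii)$ in the proof of Theorem~\ref{thm:local-ucb}; it is this square root that produces the advertised $O(\sqrt{d_A})$ and $O(\sqrt{d_K})$ rates.

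For the first bound I would start from the rank-one update $\BV_{t+1} = \BV_t + \phi(x_t,a_t)\phi(x_t,a_t)^\top$ together with the matrix determinant lemma $\det\BV_{t+1} = \det\BV_t\,(1+\|\phi(x_t,a_t)\|^2_{\BV_t^{-1}})$, so that $\sum_{t=1}^T \log(1+\|\phi(x_t,a_t)\|^2_{\BV_t^{-1}}) = \log(\det\BV_{T+1}/\det\BV_1)$ telescopes. Combining this with the elementary inequality $\min(x,1)\le 2\log(1+x)$ valid for $x\ge 0$ gives $\sum_{t=1}^T \min(\|\phi(x_t,a_t)\|^2_{\BV_t^{-1}},1)\le 2\log(\det\BV_{T+1}/\det\BV_1)$. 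Since $\BV_1 = \mu\BI_{d_A}$ and, using $\|\phi(x_t,a_t)\|_2\le 1$ from Assumption~\ref{assum:coverage}, $\tr(\BV_{T+1})\le \mu d_A + T$, the AM--GM inequality on the eigenvalues yields $\det\BV_{T+1}\le ((\mu d_A+T)/d_A)^{d_A}$ and hence $\log(\det\BV_{T+1}/\det\BV_1)\le d_A\log(1+T/(\mu d_A))$. The potential is therefore $\widetilde{O}(d_A)$, and its square root is $O(\sqrt{d_A})$ up to logarithmic factors, as claimed.

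For the second, low-dimensional bound I would run the identical argument inside the $d_K$-dimensional subspace. Setting $\BV_{1,t} := \hat\BU^\top\BV_t\hat\BU = \mu\BI_{d_K}+\sum_{s<t}(\hat\BU^\top\phi(x_s,a_s))(\hat\BU^\top\phi(x_s,a_s))^\top$, this is a genuine regularized design matrix in $\Real^{d_K}$ whose feature vectors $\hat\BU^\top\phi(x_s,a_s)$ satisfy $\|\hat\BU^\top\phi(x_s,a_s)\|_2\le\|\phi(x_s,a_s)\|_2\le 1$ because $\hat\BU$ has orthonormal columns. Applying the determinant lemma, the $\min(x,1)\le 2\log(1+x)$ bound, and the trace/AM--GM estimate with $d_A$ replaced by $d_K$ gives $\sum_{t=1}^T \min(\|\hat\BU^\top\phi(x_t,a_t)\|^2_{\BV_{1,t}^{-1}},1)\le 2d_K\log(1+T/(\mu d_K))=\widetilde{O}(d_K)$, whose square root is $O(\sqrt{d_K})$.

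The step I would be most careful with is reconciling the literal left-hand sides with the stated $\sqrt{d}$ rates: the un-rooted potentials are genuinely $\Theta(d)$ in the worst case (e.g.\ replaying an orthonormal basis), so the $\sqrt{d_A}$ and $\sqrt{d_K}$ can only arise after the square root taken in the Cauchy--Schwarz step of the LinUCB analysis, and I would therefore state and apply the lemma in that square-rooted form. The remaining details---ensuring the truncation $\min(\cdot,1)$ does not break the log-determinant telescoping, and confirming the projected design matrix $\BV_{1,t}$ stays well-conditioned---are routine.
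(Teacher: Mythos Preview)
Your approach is correct and coincides with the paper's: the paper simply invokes Lemma~11 of \citet{abbasi2011improved} (the elliptical-potential / log-determinant lemma), which is exactly the argument you spell out in detail for both the $d_A$- and $d_K$-dimensional design matrices. Your added remark that the un-rooted potential is $\widetilde O(d)$ rather than $O(\sqrt{d})$, and that the $\sqrt{d}$ rate is what emerges after the Cauchy--Schwarz square root in step~$(iii)$, is a useful clarification of what the lemma is really delivering.
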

\begin{proof}
    This follows immediately from Lemma 11 of \cite{abbasi2011improved}.
\end{proof}

\subsection{Proof of the theorem}

\ProBALLUCB*

\begin{proof}

We will first show that our bonuses give optimistic estimates of the true reward whp and then leverage the optimism. 

\textbf{Showing optimism when $\tau \Delta_{\off} \sqrt{T} \leq d_A$:} In this case, we are inside the "if" statement and are running a projected and modified version of LinUCB. In that case, for all timesteps the projected version is run, all features will lie in the span of $\hat \BU$. This is because the maximization problem is given by
$$a_t = \arg\max_{a, \|\phi(x_t, a)\|_2 \leq 1} \phi^\top \hat \BU \hat \BU^\top \hat \bbeta_{1,t} + \|\hat \BU^\top \phi(x_t,a)\|_{(\hat \BU^\top \BV_{1,t} \hat \BU)^{-1}}$$
If our features are isotropic, then this is only maximized by a feature in the span of $\hat \BU$. So, the conditions of the first bound in Lemma~\ref{lem:self-norm-conc} are fulfilled. We will denote $\BV_{1,t} = \hat \BU^\top \BV_t \hat \BU$. We have that with probability $1-\delta/2$, for all $x,a, t$, the following holds.
\begin{align*}
    |\phi(x,a)^\top \bbeta_\star -\phi(x,a)^\top \hat \BU \hat \BU^\top \hat \bbeta_{1,t}| &\leq |\phi(x,a)^\top \hat \BU (\hat \BU^\top \bbeta_\star - \hat \BU^\top \hat \bbeta_{1,t})| + |\phi(x,a)^\top (\BU_\star \BU_\star^\top - \hat \BU \hat \BU^\top)\bbeta_\star|\\
    &\leq \|\hat \BU^\top \phi(x,a)\|_{\BV_{1,t}^{-1}} \|\hat \BU^\top \bbeta_\star - \hat \BU^\top \hat \bbeta_{1,t}\|_{\BV_{1,t}^{-1}} + R \|\BU_\star \BU_\star^\top - \hat \BU \hat \BU^\top\|_2\\
    &\leq \alpha_{1,t}\|\hat \BU^\top \phi(x,a)\|_{\BV_{1,t}^{-1}} + R\Delta_{\off}
\end{align*}
This shows that with probability at least $1-\delta$ and for all $x, a, t$, 
\begin{align*}
    \phi(x,a)^\top \bbeta_\star \leq \phi(x,a)^\top \hat \BU \hat \BU^\top \hat \bbeta_{1,t} + \alpha_{1,t}\|\hat \BU^\top \phi(x,a)\|_{\BV_{1,t}^{-1}} + R\Delta_{\off}
\end{align*}
This implies that with probability at least $1-\delta$, for any action $a$,
\begin{align*}
    \phi(x_t,a)^\top \bbeta_\star &\leq \max_a \phi(x_t,a)^\top \hat \BU \hat \BU^\top \hat \bbeta_{1,t} + \alpha_{1,t}\|\hat \BU^\top \phi(x_t,a)\|_{\BV_{1,t}^{-1}} + R\Delta_{\off}\\
    &= \phi(x_t,a_t)^\top \hat \BU \hat \BU^\top \hat \bbeta_{1,t} + \alpha_{1,t}\|\hat \BU^\top \phi(x_t,a_t)\|_{\BV_{1,t}^{-1}} + R\Delta_{\off}\\
\end{align*}

\textbf{Leveraging optimism when $\tau \Delta_{\off} \sqrt{T} \leq d_A$:} Consider the standard regret decomposition, which holds with probability $1-\delta$:
\begin{align*}
    \Reg_T &= \sum_{t=1}^T \phi(x_t, a_t^\star)^\top \bbeta_\star - \phi(x_t, a_t)^\top \bbeta_\star\\
    &\leq \sum_{t=1}^T \phi(x_t,a_t)^\top \hat \BU \hat \BU^\top \hat \bbeta_{1,t} - \phi(x_t, a_t)^\top \bbeta_\star + \alpha_{1,t}\|\hat \BU^\top \phi(x_t,a_t)\|_{\BV_{1,t}^{-1}} + R\Delta_{\off}\\
    &\leq  \sum_{t=1}^T 2\alpha_{1,t}\|\hat \BU^\top \phi(x_t,a_t)\|_{\BV_{1,t}^{-1}} + 2R\Delta_{\off} \numberthis \label{eqn:proball}\\
    &\stackrel{(i)}{\leq}  \sum_{t=1}^T 2\alpha'_{1,t}\min\left(\|\hat \BU^\top \phi(x_t,a_t)\|_{\BV_{1,t}^{-1}}, 1\right)+ 2R\Delta_{\off}\\
    &\leq  \sum_{t=1}^T 2(R\sqrt{\mu} + C\sqrt{d_K\log(T/\delta)})\min\left(\|\hat \BU^\top \phi(x_t,a_t)\|_{\BV_{1,t}^{-1}}, 1\right) + 2R\Delta_{\off}\\ 
    &\qquad + \sum_{t=1}^T 2R\Delta_{\off}\kappa_t\min\left(\|\hat \BU^\top \phi(x_t,a_t)\|_{\BV_{1,t}^{-1}}, 1\right)\\
    &\stackrel{(ii)}{\leq}  2(R\sqrt{\mu} + C\sqrt{d_K\log(T/\delta)})\sqrt{\sum_{t=1}^T \min\left(\|\hat \BU^\top \phi(x_t,a_t)\|^2_{\BV_{1,t}^{-1}}, 1\right)} + 2R\Delta_{\off}T\\ 
    &\qquad + 2R\Delta_{\off}\sqrt{\sum_{t=1}^T\kappa^2_t}\sqrt{\sum_{t=1}^T \min\left(\|\hat \BU^\top \phi(x_t,a_t)\|^2_{\BV_{1,t}^{-1}}, 1\right)}\\
    &\stackrel{(iii)}{\leq}  O(d_K\sqrt{T\log(T/\delta)}) + 2R\Delta_{\off}T + 2R\Delta_{\off}\sqrt{\frac{\sum_{t=1}^T\kappa^2_t}{T}}\sqrt{d_KT} \numberthis \label{eqn:delta-off-step}\\
    &=O\left(Rd_K\sqrt{T\log(T/\delta)}\left(1 + \Delta_{\off}\left(\frac{\sqrt{T}}{d_K} + \sqrt{\frac{1}{T{d_K}}\sum_{t=1}^T \kappa_t^2}\right)\right)\right)    
\end{align*}
where $(i)$ holds since $\phi(x_t, a_t^\star)^\top\bbeta_\star - \phi(x_t, a_t)^\top \bbeta_\star \leq 2R$, $(ii)$ holds by the Cauchy Schwarz inequality and $(iii)$ holds by Lemma~\ref{lem:feature-square-sum-bound}. So, we have that
\begin{align*}
    \Reg_T &= O\left(Rd_K\sqrt{T\log\left( T/\delta\right)}\left(1 + \frac{1}{\lambda_A^3\lambda_{\theta}}\sqrt{\frac{d_A\log(d_A)}{Nd_K}}\left(\sqrt{{T}} + \sqrt{\frac{d_K}{T}\sum_{t=1}^S\kappa_t^2}\right)\right)\right)\\
    &= \widetilde O\left(Rd_K\sqrt{T}\left(1 + \frac{1}{\lambda_A^3\lambda_{\theta}}\left(\sqrt{\frac{d_AT}{Nd_K}} + \sqrt{\frac{d_A}{TN}\sum_{t=1}^T\kappa_t^2}\right)\right)\right)\\
    &= \widetilde O\left(Rd_K\sqrt{T}\left(1 + \frac{1}{\lambda_A^3\lambda_{\theta}}\left(\sqrt{\frac{d_AT}{Nd_K}} + \sqrt{\frac{d_A}{\min(S,T)N}\sum_{t=1}^{\min(S,T)}\kappa_t^2}\right)\right)\right)
\end{align*}
where the last inequality holds since $T<S$ for the first timestep $S$ satisfying $\Delta_{\off}\left(\sqrt{S} + \sqrt{\frac{d_K}{S}\sum_{t=1}^S \kappa_t^2}\right) \geq d_A$. Additionally, since $\Delta_{\off}\left(\sqrt{T} + \sqrt{\frac{d_K}{T}\sum_{t=1}^T \kappa_t^2}\right) \leq d_A$, we have using equation~\ref{eqn:delta-off-step} that
\begin{align*}
    \Reg_T = \widetilde O(d_A\sqrt{T})
\end{align*}
as well. So, we have that when $\tau \Delta_{\off}\sqrt{T} \leq d_A$,
\begin{align*}
     \Reg_T &= \widetilde O\left(\min\left(d_A\sqrt{T}, Rd_K\sqrt{T}\left(1 + \frac{1}{\lambda_A^3\lambda_{\theta}}\left(\sqrt{\frac{d_AT}{Nd_K}} + \sqrt{\frac{d_A}{\min(S,T)N}\sum_{t=1}^{\min(S,T)}\kappa_t^2}\right)\right)\right)\right)
\end{align*}

\textbf{Bounding regret when $\tau\Delta_{\off}\left(\sqrt{T} + \sqrt{\frac{d_K}{T}\sum_{t=1}^T \kappa_t^2}\right) \geq d_A$:} In this regime, after the first timestep $S$ satisfying $\Delta_{\off}\left(\sqrt{S} + \sqrt{\frac{d_K}{S}\sum_{t=1}^S \kappa_t^2}\right) \geq d_A$, we run standard LinUCB with dimension $d_A$ and incur $\widetilde O(d_A\sqrt{T})$ regret with probability $1-\delta$. Until timestep $S-1$, we run the projected and modified version of LinUCB and incur regret bounded by
\begin{align*}
    \widetilde O(d_K\sqrt{S}) + 2R\Delta_{\off}S + 2R\Delta_{\off}\sqrt{\frac{\sum_{t=1}^S\kappa^2_t}{S}}\sqrt{d_KS} &=\widetilde O\left(d_A\sqrt{S} + d_K\sqrt{S}\right)\\
    &= \widetilde O(d_A\sqrt{T} + d_K\sqrt{T})\\
    &= \widetilde O(d_A\sqrt{T})
\end{align*}
Combining these, we incur $O(d_A\sqrt{T})$ regret during the whole method.

Also, since $d_A\sqrt{T} \leq \Delta_{\off}\left(\sqrt{TS} + \sqrt{\frac{d_KT}{S}\sum_{t=1}^S \kappa_t^2}\right)$, we get that our regret is also bounded by
\begin{align*}
    \Reg_T &= \widetilde O\left(\Delta_{\off}\left(\sqrt{TS} + \sqrt{\frac{d_KT}{S}\sum_{t=1}^S \kappa_t^2}\right)\right)\\
    &= \widetilde O(d_K\sqrt{T} + \Delta_{\off}\left(\sqrt{TS} + \sqrt{\frac{d_KT}{S}\sum_{t=1}^S \kappa_t^2}\right))\\
    &= \widetilde O\left(Rd_K\sqrt{T}\left(1 + \frac{1}{\lambda_A^3\lambda_{\theta}}\left(\sqrt{\frac{d_AT}{Nd_K}} + \sqrt{\frac{d_A}{\min(S,T)N}\sum_{t=1}^{\min(S,T)}\kappa_t^2}\right)\right)\right)
\end{align*}
So, our regret satisfies
\begin{align*}
    \Reg_T &= \widetilde O\left(\min\left(d_A\sqrt{T}, Rd_K\sqrt{T}\left(1 + \frac{1}{\lambda_A^3\lambda_{\theta}}\left(\sqrt{\frac{d_AT}{Nd_K}} + \sqrt{\frac{d_A}{\min(S,T)N}\sum_{t=1}^{\min(S,T)}\kappa_t^2}\right)\right)\right)\right)
\end{align*}

\subsubsection{Understanding $\kappa_t$}
\label{app:appendix-e21-proball-kappa}
Letting $\BX_t = \hat \BU^\top[\phi(x_1, a_1), \dots \phi(x_t, a_t)]^\top$, recall that $\kappa_t := \|\hat \BU \BX_t^\top\BX_t\|_{(\mu\BI + \BX_t^\top\BX_t)^{-1}}$. In the worst case, $\kappa_t \leq \|\hat \BU \BX_t^\top\BX_t\|_2 = O(t)$ since actions have norm $1$. In that case, $\frac{1}{S}\sum_{t=1}^{S} \kappa_t = O(S^2)$. However, if $\BX_t$ lies in the span of $\hat \BU^\top$, then $\BX_t\hat \BU^\top \hat \BU = \BX_t$. This means that for $\BY_t := \BX_t\hat \BU^\top$, 
\begin{align*}
    \kappa_t &= \sqrt{\hat \BU \BX_t^\top\BX_t \hat \BU^\top \hat \BU (\mu\BI + \BX_t^\top\BX_t )^{-1}\BU^\top \hat \BU  \BX_t^\top\BX_t \hat \BU^\top}\\
    &= \sqrt{\BY_t^\top \BY_t(\mu I + \BY_t^\top \BY_t)^{-1} \BY_t^\top \BY_t}\\
    &= \sqrt{(\BI - \mu(\mu \BI + \BY_t^\top\BY_t)) \BY_t^\top \BY_t}\\
    &= \widetilde O(\sqrt{t})
\end{align*}
So, $\frac{1}{S}\sum_{t=1}^{S} \kappa_t = O(S) = O(T)$.

\end{proof}
\newpage
\section{Lower Bounds}\label{app:lower-bounds}

We formally state and prove the lower bound below. Much like how we generate families of reward parameters in lower bound proofs for purely online regret, we are now generating a family of tuples of latent bandits (for the offline data) and reward parameters represented in the latent bandit (for the online interaction).

\begin{restatable}[Regret Lower Bound]{theorem}{RegretLowerBound}\label{thm:regret-lower-bound}
    Let $d_A^2H \leq 2T$, $d_A^2 \leq N$, $d_K > 1$. Consider the action set $\gA = \{a \mid \|a\|_2 \leq 1\}$. For each regime, either $\frac{d_KT}{(d_A-d_K)N}$ being larger than $1$, or between $1$ and $1/2$, or less than $1/2$, there exists a family of tuples $(F, \bbeta)$, where $F$ is a latent bandit with a rank $d_K$ latent subspace and $\bbeta$ is a reward parameter in its support, satisfying the following: 
    
    \begin{enumerate}
        \item[(i)] For any offline behavior policy $\pi_b$, all $F$ have uniformly bounded $\lambda_\theta$ associated to the offline data.
        \item[(ii)] For any offline behavior policy $\pi_b$ and any learner, there is a $(F, \bbeta)$ such that the regret $\Reg(T, \bbeta)$ of the learner under offline data from $\pi_b$ and $F$ and online reward parameter $\bbeta$ is bounded below by
    \begin{align*}
        \Reg(T, \bbeta) \geq \Omega\left(\min\left(d_A\sqrt{T}, d_K\sqrt{T}\left(1 + \sqrt{\frac{d_AT}{d_KN}} \right)\right)\right)
    \end{align*}
    \end{enumerate}
\end{restatable}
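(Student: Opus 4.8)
I would prove this by a minimax (information-theoretic) argument over a two-layered family of instances: an \emph{online layer} encoding a hidden $d_K$-dimensional linear bandit living inside the latent subspace, and an \emph{offline layer} encoding \emph{which} rank-$d_K$ subspace of $\Real^{d_A}$ we are in, calibrated so that $N$ offline trajectories under \emph{any} $\pi_b$ leave exactly the residual ambiguity we need. Concretely, keep the action set equal to the unit ball as in the statement; take $\mathcal D_\theta$ uniform on $\{\pm c_0\}^{d_K}$ with $c_0 \asymp R/\sqrt{d_K}$, so that the latent covariance is $\Lambda = c_0^2 \mathbf I_{d_K}$ and $\lambda_\theta = c_0^2/R^2 = \Theta(1/d_K)$ uniformly over the whole family while $\|\bbeta\|_2 = c_0\sqrt{d_K}\le R$ — this settles~(i). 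Let the online latent state $\btheta_\star$ be one corner of this cube. Let $\BU_\star$ be a small \emph{tilt} of the coordinate subspace $\mathspan(e_1,\dots,e_{d_K})$: column $i$ is the normalization of $e_i + w_i$, where $w_i\in\Real^{d_A-d_K}$ lives on the $i$-th of $d_K$ disjoint blocks of the last $d_A-d_K$ coordinates and has entries in $\{\pm\rho\}$; the sign pattern of the $w_i$ is the unknown "subspace parameter." A short computation gives $\|\BU_\star\BU_\star^\top - P_0\|_2 \asymp \rho\sqrt{(d_A-d_K)/d_K}$, and $\bbeta_\star=\BU_\star\btheta_\star$ then has $d_K$ coordinates $\pm c_0$ (the "online signs," learnable quickly online) and $d_A-d_K$ coordinates $\pm c_0\rho$ whose signs encode the tilt and are learnable only from offline data.

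\textbf{The Assouad reduction.} Following the standard lower-bound argument for the unit-ball linear bandit, the regret is bounded below by an additive-over-coordinates quantity, $\Reg_T \gtrsim c_0\sum_{i\le d_K}(\#\text{ rounds with wrong online sign }i) + c_0\rho\sum_{j\le d_A-d_K}(\#\text{ rounds with wrong tilt sign }j)$. The first sum is handled by the textbook $\Omega(d_K\sqrt T)$ bound inside the $d_K$-dimensional subspace, taking the effective per-coordinate online gap at the information-theoretic threshold $\asymp 1/\sqrt T$ (so the online signal is carried by a \emph{sign pattern} of an $\Omega(1/\sqrt{d_K})$-magnitude latent state, not by shrinking $\btheta_\star$); this yields the "$1$" in $d_K\sqrt T(1+\cdots)$. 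For the second sum, calibrate $\rho$ so that the total KL spent, over offline \emph{and} online data and over all tilt-sign directions $j$, is $O(d_A-d_K)$ for \emph{every} $\pi_b$: since any single observed feature has $\|\phi\|_2\le1$, we have $\sum_j [\phi]_{f_j}^2\le 1$, whence $\sum_j \KL_j \le C\,\rho^2 (NH+T)/d_K$ (treating $R$, $H$, the noise level as constants), so that $\rho^2 \asymp d_K(d_A-d_K)/(NH+T)$. Cauchy--Schwarz on the total-variation terms then keeps a constant fraction of the $d_A-d_K$ tilt signs undetermined for a constant fraction of the horizon, so the second sum is $\Omega(c_0\rho(d_A-d_K)T)$, which after substitution is of order $\sqrt{d_Kd_A}\,T/\sqrt N$ — precisely the extra term $d_K\sqrt T\cdot\sqrt{d_AT/(d_KN)}$.

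\textbf{The regime split.} The three regimes in the statement correspond to which of $d_A\sqrt T$ and $d_K\sqrt T(1+\sqrt{d_AT/(d_KN)})$ governs the minimum, i.e.\ to where the calibrated tilt magnitude $\rho$ sits. When $\tfrac{d_KT}{(d_A-d_K)N}\ge 1$, the KL budget forces $\rho$ up against the structural cap $\rho\asymp\sqrt{d_K/(d_A-d_K)}$ (subspace essentially free to be any coordinate subspace): the offline data is uninformative, the learner effectively faces a $d_A$-dimensional linear bandit, and the tilt-sign sum of the previous paragraph already yields $\Omega(d_A\sqrt T)$. In the "useful" regime $\tfrac{d_KT}{(d_A-d_K)N} < 1/2$, the offline data pins the subspace down only to operator-norm precision $\asymp \sqrt{d_Ad_K/N}$ (a subspace-estimation lower bound that falls out of the same Assouad construction restricted to the offline layer), and the residual ambiguity costs $\Omega(\sqrt{d_Kd_A}\,T/\sqrt N)$ online; combined with the intrinsic $\Omega(d_K\sqrt T)$ this is the second branch of the minimum. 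The intermediate regime $\tfrac{d_KT}{(d_A-d_K)N}\in[1/2,1]$ is disposed of by observing that there the two branches are within a constant factor and reusing the adjacent construction. Throughout, the hypotheses $d_A^2\le N$ and $d_A^2H\le 2T$ ensure that $\rho$ and the online gap lie in legal ranges ($\ell\rho^2\ll1$, gap $\le c_0$) and that the regimes are nonempty.

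\textbf{Main obstacle.} The technical heart is the second branch, and specifically avoiding the trap that linear-bandit regret is \emph{locally quadratic} in parameter error: a naive two-subspace (Le Cam) argument would only give $\Omega(\delta^2 T)$ from a residual subspace error $\delta$, whereas the theorem demands $\Omega(\delta T)$. Escaping this requires the \emph{product} (Assouad) structure over $\Theta(d_A-d_K)$ independent tilt-sign directions together with the additive/linear-in-each-sign-error decomposition of regret, and then a delicate simultaneous accounting showing that the offline KL budget, the online KL budget, and the per-round regret decomposition are mutually compatible \emph{for every} behavior policy $\pi_b$ — the place where $\|\phi\|_2\le 1$ (hence $\sum_j[\phi]_{f_j}^2\le1$) does the essential work of preventing $\pi_b$ from concentrating all its information on one direction. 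A secondary, more routine difficulty is keeping $\lambda_\theta$ from collapsing together with the online signal scale, resolved as noted by letting the online signal be a sign pattern rather than a shrunk latent state. Once these pieces are in place, combining the two branches via the minimum gives the claimed bound.
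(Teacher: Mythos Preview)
Your overall strategy --- an Assouad hypercube with an in-subspace layer (contributing $d_K\sqrt T$) and an out-of-subspace layer (contributing $\sqrt{d_Kd_A}\,T/\sqrt N$), plus a regime split --- is exactly the paper's. But two steps in your execution do not go through as written.

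First, your additive regret decomposition $\Reg_T\gtrsim c_0\rho\sum_j(\#\text{ wrong rounds})$ over-counts. With the unit-ball action set, the instantaneous regret from coordinate $j$ is $\beta_j\bigl(\beta_j/\|\bbeta\|-a_{t,j}\sign\beta_j\bigr)$, and the optimal $j$-th component is $c_0\rho/\|\bbeta\|\asymp\rho/\sqrt{d_K}$, not $\Theta(1)$. So a ``wrong'' out-of-subspace sign costs $\asymp c_0\rho^2/\sqrt{d_K}$ per round, not $c_0\rho$. With your calibration $\rho^2\asymp d_K(d_A-d_K)/N$ your second sum therefore evaluates to $(d_A-d_K)^2T/N$, while your claimed expression $c_0\rho(d_A-d_K)T$ evaluates to $(d_A-d_K)^{3/2}T/\sqrt N$; neither equals $\sqrt{d_Kd_A}\,T/\sqrt N$. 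This is precisely the quadratic-in-error trap you correctly flagged --- the product structure alone does not evade it once the optimal-action component on each tilt coordinate is itself small.

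The paper fixes this not with a tilted-subspace parametrization but with a simpler one: take $\bbeta$ directly on the hypercuboid $\{\pm\Delta_{\inside}\}^{d_K-1}\times\{0\}\times\{\pm\Delta_{\out}\}^{d_A-d_K}$ with $\Delta_{\inside}\asymp\sqrt{d_K/T}$ and $\Delta_{\out}\asymp\sqrt{d_K/N}$, and let $F_\bbeta$ be uniform over sign-flips of the first $d_K-1$ coordinates (so the latent subspace is determined by the last $d_A-d_K$ signs, and flipping an in-subspace sign leaves $F_\bbeta$ unchanged, killing the offline KL there). The crucial device you are missing is the \emph{quadratic} lower bound $\Reg_T\gtrsim\Delta_{\out}\sqrt{d_A-d_K}\sum_t\sum_{i>d_K}\bigl(1/\sqrt{d_A-d_K}-A_{ti}\sign\beta_i\bigr)^2$ combined with \emph{per-coordinate stopping times} $\tau_i=\min\{t:\sum_s A_{si}^2\ge T/(d_A-d_K)\}$. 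The stopping time caps the online KL per coordinate at $\Delta_{\out}^2\cdot T/(d_A-d_K)=O(1)$ in the middle regime, and the quadratic form gives $\E[U_i(1)+U_i(-1)]\ge 2T/(d_A-d_K)$ directly from $\|A_t\|_2\le1$; together these yield per-coordinate regret $\asymp\Delta_{\out}T/\sqrt{d_A-d_K}$, summing to $\Delta_{\out}\sqrt{d_A-d_K}\,T=\sqrt{d_K(d_A-d_K)/N}\,T$ as required. Your Cauchy--Schwarz-on-TV argument summing KL across all $j$ is not what does the work here and would not give the right scale without the stopping-time localization.

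Second, your handling of $\lambda_\theta$ is internally inconsistent. If $c_0\asymp R/\sqrt{d_K}$ with fixed $R$, the in-subspace coordinates of $\bbeta$ are $\asymp 1/\sqrt{d_K}$, far above the information-theoretic threshold, and the textbook $\Omega(d_K\sqrt T)$ bound does \emph{not} apply --- the signs are learnable in $O(d_K^2)$ rounds. You cannot simultaneously have ``per-coordinate gap $\asymp 1/\sqrt T$'' and ``$\btheta_\star$ not shrunk.'' The paper simply shrinks both: $\Delta_{\inside}\asymp\sqrt{d_K/T}$ and correspondingly $R=\|\bbeta\|\asymp d_K/\sqrt T$ in the middle regime, which still gives $\lambda_\theta=\Theta(1/d_K)$ because $\lambda_\theta$ is defined as a ratio.
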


\begin{remark}
\begin{itemize}
\item We are stating a version with no contexts and only actions here for notational simplicity, the version for contextual bandits has the same proof verbatim. One just replaces $\gA$ with $\{\phi(x, a) \mid x \in \gX, a \in \gA\}$. 
\item Note that the theorem statement is complicated to ensure that it is essentially the strongest version of the theorem possible. Condition $(i)$ is needed to ensure that we aren't cheating by ensuring that the offline data itself obscures the correct subspace. Condition $(ii)$ is the actual regret lower bound. 
\end{itemize}
\end{remark}

\begin{proof}
    
    The proof is inspired by the proof of Theorem 24.2 in \citet{lattimore2018bandit}, giving a regret lower bound for standard stochastic linear bandits with a unit ball action set. Without loss of generality, we can assume that $d_A \geq 1.01d_K$, otherwise both terms in the minimum have the same order and the proof is complete. An astute reader will note that the regimes are separated based on whether $d_K\sqrt{T}\left(1 + \sqrt{\frac{(d_A-d_K)T}{d_KN}}\right) \leq d_A\sqrt{T}$. We will first address the difficult regime where $\frac{1}{2} \leq \sqrt{\frac{d_KT}{(d_A-d_K)N}} \leq 1$. Until stated otherwise in this proof, we will work in this regime and assume that $\frac{1}{2} \leq \sqrt{\frac{d_KT}{(d_A-d_K)N}} \leq 1$.

    \subsection{Setup}
    Consider $\Delta_{\inside} = \frac{1}{5\sqrt{3}} \sqrt{d_K/T}$ and $\Delta_{\out} = \frac{1}{4\sqrt{3}} \sqrt{d_K/N}$. Let $\gB := \{\pm \Delta_{\inside}\}^{d_K-1} \times \{0\} \times \{\pm \Delta_{\out}\}^{d_A-d_K}$ and let $\bbeta \in \gB$. We set the $d_K^{th}$ coordinate to $0$ for technical reasons. For any bandit instance $\bbeta$, let the rewards have Gaussian noise with variance $1$. Construct a family of latent bandit-online latent state pairs as follows. Define $F_{\bbeta}$ to be a latent bandit with a uniform distribution over all $2^{d_K-1}$ reward parameters obtained by negating any of the first $d_K-1$ coordinates of $\bbeta$. Notice that this latent bandit has $2^{d_K-1}$ latent states sharing a $d_K$-dimensional subspace. Let us construct the family of pairs $(F_{\bbeta}, \bbeta)$, where $F_{\bbeta}$ is the latent bandit used to generate offline data and $\bbeta$ is the reward parameter underlying the online trajectory. 

    Note that condition $(i)$ is satisfied by merely computing the matrix $\E_{\bbeta}[\bbeta\bbeta^\top]$ and noticing that eigenvalues are merely norms $\frac{d_K}{T\|\bbeta\|_2}$ or $\frac{d_K}{N\|\bbeta\|_2}$, up to a constant. Now note that $\|\bbeta\|_2 = \sqrt{\frac{d_K^2}{T} + \frac{d_K(d_A-d_K)}{N}}$ up to a constant, so $\lambda_\theta$ is bounded since $\frac{1}{2} \leq \frac{d_KT}{(d_A-d_K)N} \leq 1$
    
    Notice that if $\bbeta'$ is $\bbeta$ with any of the first $d_K-1$ coordinates negated, then $F_{\bbeta'}$ is the same latent bandit as $F_{\bbeta}$. Assume that a fixed behavior policy $\pi_b$ is used to produce offline data, producing a known dataset of contexts and actions shared across all latent bandit instances. We will repeatedly use the fact that $d_K-1 \geq d_K/2$ in this proof.

    \subsection{Proof Sketch and Intuition}

    Notice that we are working in the regime where $N$ is significantly larger than $T$. That means that we are treating the first $d_K$ coordinates as the main subspace and the rest of the coordinates as perturbations out of the subspace. This is represented in the notation $\Delta_{\inside}$ and $\Delta_{\out}$. In our regime, where $\sqrt{\frac{d_KT}{(d_A-d_K)N}} \leq 1$, $\Delta_{\out}$ should be thought of as much smaller than $\Delta_{\inside}$.

    Eventually, we intend to lower bound the average regret over all pairs $(F_{\bbeta}, \bbeta)$ ranging over the vertices of a hypercuboid $\gB$ of reward parameters. This will allow us to claim that there exists one parameter $\bbeta \in \gB$ for which the regret is larger than this average. To lower bound this average, we first use change of measure inequalities, careful computation and clever design of $\gB$ to get an intermediate lower bound, bounding the average regret over any pair of "adjacent" tuples $(F', \bbeta')$ and $(F, \bbeta)$. These are pairs where the sign of only one coordinate is flipped from $\bbeta$ to $\bbeta'$ and $F$ and $F'$ at most differ in their "out of subspace" perturbation. We can then average over all such pairs to lower bound the regret averaged over all $\bbeta \in \gB$, as desired.

    We will need two separate computations for the intermediate lower bound -- one for when the pair of adjacent reward parameters corresponds to a coordinate $i < d_K$, and another for when it corresponds to a coordinate $i > d_K$. The proof ends with the averaging trick mentioned in the previous paragraph.

    \subsection{Regret Lower Bound Decomposition}
    For $i < d_K$, define $\tau_i = T \wedge \min \{t: \sum_{s=1}^t A_{si}^2 \geq T/d_K\}$. For $i \geq d_K,$ define $\tau_i = T \wedge \min \{t: \sum_{s=1}^t A_{si}^2 \geq T/(d_A - d_K)\}$. For now, let us denote the regret under parameter $\bbeta$ to be $\Reg(T, 
    \bbeta)$. Now note the following decomposition of the lower bound.

    \begin{align*}
        \Reg(T, \bbeta) &= \E_{\bbeta} \Big[\Delta_{\inside}\sum_{t=1}^T\sum_{i=1}^{d_K-1} \left(\frac{1}{\sqrt{d_K-1}} - A_{ti}\sign(\beta_i) \right)\\
        & \qquad + \Delta_{\out} \sum_{t=1}^T\sum_{i=d_K+1}^{d_A} \left(\frac{1}{\sqrt{d_A-d_K}} - A_{ti}\sign(\beta_i) \right)\Big]\\
        &\geq \E_{\bbeta} \Big[\frac{\Delta_{\inside}\sqrt{d_K-1}}{2}\sum_{t=1}^T\sum_{i=1}^{d_K-1} \left(\frac{1}{\sqrt{d_K-1}} - A_{ti}\sign(\beta_i) \right)^2 \\
        & \qquad + \frac{\Delta_{\out}\sqrt{d_A-d_K}}{2} \sum_{t=1}^T\sum_{i=d_K+1}^{d_A} \left(\frac{1}{\sqrt{d_A-d_K}} - A_{ti}\sign(\beta_i) \right)^2\Big]\\
        &\geq \frac{\Delta_{\inside}\sqrt{d_K-1}}{2}\sum_{i=1}^{d_K-1}  \E_{\bbeta}\left[\sum_{t=1}^{\tau_i}\left(\frac{1}{\sqrt{d_K-1}} - A_{ti}\sign(\beta_i) \right)^2\right] \\
        & \qquad + \frac{\Delta_{\out}\sqrt{d_A-d_K}}{2} \sum_{i=d_K+1}^{d_A}\E_{\bbeta} \left[\sum_{t=1}^{\tau_i}\left(\frac{1}{\sqrt{d_A-d_K}} - A_{ti}\sign(\beta_i) \right)^2\right]
    \end{align*}

The first inequality holds by merely evaluating the square, simplifying and noting that $\|A_t\|_2^2 \leq 1$. The second inequality For $i<d_K$ and $x \in \{\pm 1\}$, define $U_i(x) := \sum_{t=1}^{\tau_i}\left(\frac{1}{\sqrt{d_K-1}} - A_{ti}\sign(\beta_i) \right)^2$. For $i > d_K$ and $x \in \{\pm 1\}$, define $U_i(x) := \sum_{t=1}^{\tau_i}\left(\frac{1}{\sqrt{d_A-d_K}} - A_{ti}\sign(\beta_i) \right)^2$.

Fix $i$ and let $\bbeta'$ be such that $\beta_j' = \beta_j$ for $j \neq i$ and $\beta_i' = -\beta_i$. Let $\prob$ and $\prob'$ be the joint laws of the offline data and the bandit/learner interaction measure for $\bbeta$ and $\bbeta'$ respectively. We will bound $\E_{\bbeta} [U_i(1)] + \E_{\bbeta'}[U_i(-1)]$ in the following subsections, treating $i< d_K$ and $i>d_K$ separately. This will allow us to bound $\sum_{\bbeta \in \gB} \E_{\bbeta}[U_i(\sign(\beta_i))]$ later and apply an averaging trick.

\subsection{Bounding $\E_{\beta} [U_i(1)] + \E_{\beta'}[U_i(-1)]$ when $i < d_K$}

Note that
\begin{align*}
    \E_{\bbeta}[U_i(1)] &\stackrel{(i)}{\geq} \E_{\bbeta'}[U_i(-1)] - \left(\frac{6T}{d_K} + 2\right)\sqrt{\frac{1}{2}D(\prob, \prob')}\\
    &\stackrel{(ii)}{\geq} \E_{\bbeta'}[U_i(-1)] - \Delta_{\inside}\left(\frac{3T}{d_K} + 1\right)\sqrt{\sum_{t=1}^{\tau_i} A_{ti}^2}\\
    &\stackrel{(iii)}{\geq} \E_{\bbeta'}[U_i(-1)] - \Delta_{\inside}\left(\frac{3T}{d_K} + 1\right)\sqrt{\frac{T}{d_K} + 1}\\
    &\stackrel{(iv)}{\geq} \E_{\bbeta'}[U_i(-1)] - \frac{5\sqrt{3}\Delta_{\inside}T}{d_K}\sqrt{\frac{T}{d_K}} \numberthis \label{eqn:reg-lower-bound-decomposition}
\end{align*}

where in $(i)$, we rely on the bound below and then use the TV distance change of measure inequality, followed by Pinsker's inequality. The bound below relies on the fact that $d_K-1 \geq d_K/2$.

\begin{align*}
    U_i(1) = \sum_{t=1}^{\tau_i}\left(\frac{1}{\sqrt{d_K-1}} - A_{ti}\sign(\beta_i) \right)^2 \leq 2\sum_{t=1}^{\tau_i}\frac{1}{d_K-1} + 2\sum_{t=1}^{\tau_i} A_{ti}^2 \leq \frac{4T}{d_K} + \frac{2T}{d_K} + 2 = \frac{6T}{d_K} + 2
\end{align*}

For the bound above, we use the definition of $\tau_i$. In $(ii)$, we use the chain rule for KL divergence under a stopping time, and crucially note that the offline data distributions is identical in this case since $F_{\bbeta} = F_{\bbeta'}$. Inequality $(iii)$ holds by the definition of $\tau_i$, and inequality $(iv)$ holds since $d_K \leq d_A \leq 2T$ since $d_A^2H \leq 2T$.

So, we can conclude that
\begin{align*}
    \E_{\bbeta} [U_i(1)] + \E_{\bbeta'}[U_i(-1)] &\geq \E_{\bbeta'} [U_i(1) + U_i(-1)] - \frac{5\sqrt{3}\Delta_{\inside}T}{d_K}\sqrt{\frac{T}{d_K}}\\
    &= 2\E_{\bbeta'}\left[\frac{\tau_i}{d_K-1} + \sum_{t=1}^{\tau_i} A_{ti^2}\right] -  \frac{5\sqrt{3}\Delta_{\inside}T}{d_K}\sqrt{\frac{T}{d_K}}\\
    &\geq \frac{2T}{d_K} -  \frac{5\sqrt{3}\Delta_{\inside}T}{d_K}\sqrt{\frac{T}{d_K}}\\
    &= \frac{T}{d_K} \numberthis \label{eqn:i-leq-d-k}
\end{align*}

\subsection{Bounding $\E_{\beta} [U_i(1)] + \E_{\beta'}[U_i(-1)]$ when $i > d_K$}

Note the following computation, where we let $A_{r,h}$ be the action chosen at step $h$ of offline trajectories $d$, where $h = 1 \to H$ and $r = 1 \to N$.
\begin{align*}
    \E_{\bbeta}[U_i(1)] &\stackrel{(i)}{\geq} \E_{\bbeta'}[U_i(-1)] - \left(\frac{4T}{d_A - d_K} + 2\right)\sqrt{\frac{1}{2}D(\prob, \prob')}\\
    &\stackrel{(ii)}{\geq} \E_{\bbeta'}[U_i(-1)] - \Delta_{\out}\left(\frac{2T}{d_A - d_K} + 1\right)\sqrt{\sum_{t=1}^{\tau_i} A_{ti}^2 + \frac{d_K}{48N}\E_{\pi_b}[\sum_{r=1}^N \sum_{h=1}^H A_{r,h,i}^2]}\\
    &\stackrel{(iii)}{\geq} \E_{\bbeta'}[U_i(-1)] - \Delta_{\out}\left(\frac{2T}{d_A - d_K} + 1\right)\sqrt{\frac{T}{d_A-d_K} + \frac{d_KH}{48}}\\
    &\stackrel{(iv)}{\geq} \E_{\bbeta'}[U_i(-1)] - \frac{5\sqrt{3}\Delta_{\out}T}{d_A - d_K}\sqrt{\frac{T}{d_A - d_K}}
\end{align*}

where again in $(i)$, we rely on the bound below and use the TV distance change of measure inequality, followed by Pinsker's inequality.

\begin{align*}
    U_i(1) = \sum_{t=1}^{\tau_i}\left(\frac{1}{\sqrt{d_A - d_K}} - A_{ti}\sign(\beta_i) \right)^2 \leq 2\sum_{t=1}^{\tau_i}\frac{1}{d_A - d_K} + 2\sum_{t=1}^{\tau_i} A_{ti}^2 \leq \frac{4T}{d_A - d_K} + 2
\end{align*}

For the bound above, we use the definition of $\tau_i$. In $(ii)$, we use the chain rule for KL divergence under a stopping time and include the non-zero KL divergence coming from the offline term this time, which appears as the second term in the square root. Inequality $(iii)$ holds by the definition of $\tau_i$ and the fact that $A_{ri}^2 \leq 1$. Inequality $(iv)$ holds since $d_k(d_A-d_K)H \leq d_A^2H \leq 2T$.

So, we can conclude that
\begin{align*}
    \E_{\bbeta} [U_i(1)] + \E_{\bbeta'}[U_i(-1)] &\geq \E_{\bbeta'} [U_i(1) + U_i(-1)] - \frac{4\sqrt{3}\Delta_{\out}T}{d_A-d_K}\sqrt{\frac{T}{d_A-d_K}}\\
    &= \E_{\bbeta'}\left[\frac{\tau_i}{d_A-d_K} + \sum_{t=1}^{\tau_i} A_{ti^2}\right] -  \frac{4\sqrt{3}\Delta_{\out}T}{d_A-d_K}\sqrt{\frac{T}{d_A-d_K}}\\
    &\geq \frac{2T}{d_A-d_K} -  \frac{4\sqrt{3}\Delta_{\out}T}{d_A-d_K}\sqrt{\frac{T}{d_A-d_K}}\\
    &= \frac{2T}{d_A-d_K} - \frac{T}{d_A-d_K}\sqrt{\frac{d_KT}{(d_A-d_K)N}}\\
    &\geq \frac{T}{d_A-d_K} \numberthis \label{eqn:i-greater-than-d-k}
\end{align*}

where crucially, the last inequality holds since $\sqrt{\frac{d_KT}{(d_A-d_K)N}} \leq 1$.

\subsection{Lower bounding regret using an averaging trick}

For $i\leq  d_K$, define by $\gB_{-i} := \{\pm \Delta_{\inside}\}^{d_K-2} \times \{0\} \times \{\pm \Delta_{\out}\}^{d_A-d_K}$, which is the slice of $\gB$ where all coordinates but $\beta_i$ vary. Similarly, for $i>d_K$, define the slice $\gB_{-i} := \{\pm \Delta_{\inside}\}^{d_K-1} \times \{0\} \times \{\pm \Delta_{\out}\}^{d_A-d_K-1}$. We will denote the tuple of coordinates of $\bbeta$ other than $i$ by $\beta_{-i}$. We thus get the following lower bound on regret, using inequalities~\ref{eqn:reg-lower-bound-decomposition}, \ref{eqn:i-leq-d-k} and \ref{eqn:i-greater-than-d-k}. 

\begin{align*}
    \sum_{\bbeta \in \gB} \Reg(T, \bbeta) &\geq \frac{\Delta_{\inside}\sqrt{d_K-1}}{2}\sum_{i=1}^{d_K-1}\sum_{\bbeta \in \gB} \E_{\bbeta}[U_i(\sign(\beta_i))]\\
    & \qquad + \frac{\Delta_{\out}\sqrt{d_A - d_K}}{2}\sum_{i=d_K+1}^{d_A}\sum_{\bbeta \in \gB} \E_{\bbeta}[U_i(\sign(\beta_i))]\\
    &= \frac{\Delta_{\inside}\sqrt{d_K-1}}{2}\sum_{i=1}^{d_K-1}\sum_{\beta_{-i} \in \gB_{-i}} \sum_{\beta_i \in \{\pm \Delta_{\inside}\}}\E_{\bbeta}[U_i(\sign(\beta_i))] \\
    & \qquad + \frac{\Delta_{\out}\sqrt{d_A - d_K}}{2}\sum_{i=d_K+1}^{d_A}\sum_{\beta_{-i} \in \gB_{-i}} \sum_{\beta_i \in \{\pm \Delta_{\out}\}}\E_{\bbeta}[U_i(\sign(\beta_i))]\\
    &\geq \frac{\Delta_{\inside}\sqrt{d_K-1}}{2}\sum_{i=1}^{d_K-1}\sum_{\beta_{-i} \in \gB_{-i}} \frac{T}{d_K} + \frac{\Delta_{\out}\sqrt{d_A - d_K}}{2}\sum_{i=d_K+1}^{d_A}\sum_{\beta_{-i} \in \gB_{-i}} \frac{T}{d_A-d_K}\\
     &\geq \frac{\Delta_{\inside}\sqrt{d_K}}{2\sqrt{2}}\sum_{\beta_{-i} \in \gB_{-i}} \frac{T}{2} + \frac{\Delta_{\out}\sqrt{d_A - d_K}}{2}\sum_{\beta_{-i} \in \gB_{-i}}{T}\\
    &= \frac{2^{d_A}}{80\sqrt{6}}d_K\sqrt{T}\left(1+ 5\sqrt{2\frac{(d_A-d_K)T}{d_KN}} \right)
\end{align*}

That means that there exists $\bbeta \in \gB$ so that
\begin{align*}
    \Reg(T, \bbeta) \geq \frac{1}{80\sqrt{6}}d_K\sqrt{T}\left(1+ \sqrt{\frac{(d_A-d_K)T}{d_KN}} \right)
\end{align*}
As desired.

\subsection{The Other Two Regimes}
In the regime $d_KT \geq (d_A-d_K)N$, one can simply use the $2^{d_A}$ bandit instances in the standard unit ball regret lower bound from Theorem 24.2 in \citet{lattimore2018bandit} with dimension $d_A$, and follow the proof essentially verbatim. The only difference is that we will be choosing pairs of tuples $(F', \bbeta')$ and $(F, \bbeta)$ instead of just pairs of reward parameters $\bbeta'$ and $\bbeta$. One can choose any latent bandit with $d_K$ reward parameters in its support, two of which are $\bbeta$ and $\bbeta'$, and set both $F$ and $F'$ to this. For this, it is convenient to choose the latent bandit to have a uniform distribution over $2^d_K$ reward parameters obtained by flipping signs of $d_K$ chosen coordinates, since then one can easily compute that $\lambda_\theta = 1$. This will ensure that offline data distributions are identical and the KL divergence contribution from the offline data distribution is $0$, allowing us to follow the proof of Theorem 24.2 in \citet{lattimore2018bandit} essentially verbatim. This establishes condition $(ii)$, and we have also establis

Similarly, when $d_KT \ll (d_A-d_K)N$, we can use the standard lower bound from Theorem 24.2 in \citet{lattimore2018bandit} again, this time with dimension $d_K$. Fix $F$ to be the latent bandit with a uniform distribution over all $2^{d_K}$ reward parameters $\gB = \{\pm \Delta_{\inside}\}^{d_K} \times \{0\}^{d_A-d_K}$, and consider the family $(F, \bbeta)$ of tuples with fixed $F$ and $\bbeta$ varying through $\gB$. We can now follow the proof of Theorem 24.2 in \citet{lattimore2018bandit} verbatim. Again, the only difference is that we will be choosing pairs of tuples $(F, \bbeta')$ and $(F, \bbeta)$ instead of just pairs of reward parameters $\bbeta'$ and $\bbeta$. And yet again, we can check that $\lambda_\theta = 1$ and condition $(i)$ is thus satisfied.
\end{proof}
\newpage

\section{Additional Algorithms}
\label{app:algorithms}

We provide a version of SOLD that utilizes pseudoinverses. We use this within our experiments to avoid having to search for regularization parameters, and recommend that the user use this instead of Algorithm \ref{alg:sold} when finding a suitable regularization parameter is a concern.

\begin{algorithm}[htbp!]
\centering
	\caption{Subspace estimation from Offline Latent bandit Data (SOLD) -- Pseudoinverse Version}
    \label{alg:thresh-bandit}
	\begin{algorithmic}[1]
	    \STATE \textbf{Input:} Dataset $\gD_{\off}$ of collected trajectories $\tau_n = ((x_{n,1}, a_{n,1}, r_{n,1}),...,(x_{n,H}, a_{n,H}, r_{n,1}))$ under a behavior policy $\pi_b$, dimension of latent subspace $d_K$.
        \STATE \textbf{Divide} each $\tau_n$ into odd and even steps, giving trajectory halves $\tau_{n,1}$ and $\tau_{n,2}$.
        \STATE \textbf{Estimate} reward parameters $\hat \bbeta_{n,i} \gets \BV_{n,i}^{\dagger} \bb_{n,i}$, where $\BV_{n,i} \gets \sum_{(x,a,r) \in \tau_{n,i}} \phi(x,a)\phi(x,a)^\top$ and $\bb_{n,i} \gets \sum_{(x,a,r) \in \tau_{n,i}} \phi(x,a) r $ for $i = 1,2$.
        \STATE \textbf{Compute} $\BM_n \gets \frac{1}{2}(\hat \bbeta_{n,1} \hat \bbeta_{n,2}^\top + \hat \bbeta_{n,2} \hat \bbeta_{n,1}^\top)$ and compute $\BBM_N \gets \frac{1}{N}\sum_{n=1}^N \BM_n$.
        \STATE \textbf{Compute} $\BW_{n,i}$, the eigenvectors of $\BV_{n,i}$ corresponding to nonzero eigenvalues.
        \STATE \textbf{Compute} $\overline{\BD}_{N,i} \gets \frac{1}{N} \sum_{n=1}^N (\BW_{n,i}\BW_{n,i}^\top)^\dagger$, $i=1,2$.
        \STATE \textbf{Obtain} $\hat \BU$, the top $d_K$ eigenvectors of $\overline{\BD}_{N,1}^{-1} {\overline{\BM}_N} {\overline{\BD}_{N,2}^{-1}}$.
		\RETURN Projection matrix $\hat \BU \hat \BU^\top$, $\Delta_{\off}$ as in Theorem~\ref{thm:delta-off}
	\end{algorithmic}
    \label{alg:sold-practical}
\end{algorithm}

We also provide a method of instantiating the ProBALL framework with linear Thompson sampling. Like ProBALL-UCB, ProBALL-TS operates within the estimated subspace until the online uncertainty is low enough. We therefore maintain two normal posterior distributions, one over the latent state parameter in the estimated subspace, and one over the high-dimensional reward parameter, and sample from them as such.

\begin{algorithm}[htbp!]
\centering
	\caption{Projection and Bonuses for Accelerating Latent bandit Thompson Sampling (ProBALL-TS)}
    \label{alg:proball-ts}
	\begin{algorithmic}[1]
	    \STATE \textbf{Input:} Projection matrix $\hat \BU \hat \BU^\top$, confidence bound $\Delta_{\off}$. Hyperparameters $\alpha_{1,t}, \alpha_{2,t}, \tau, \tau'$.
        \STATE \textbf{Initialize} $\BV_1 \gets I$, $\bb_{1} \gets 0$, $\BC_t \gets 0$
        \FOR{$t= 1,\dots T$}
            \IF{$ \Delta_{\off} \tau\sqrt{t} + \Delta_{\off}\tau'\sqrt{{d_K}\textstyle{\sum}_{s=1}^t \kappa_s^2/t} \leq d_A$}
            \STATE \textbf{Compute} $\bar \btheta_{1,t} \gets (\hat \BU^\top \BV_{t} \hat \BU)^{-1} \hat \BU^\top \bb_t$
            \STATE \textbf{Sample} $\hat \btheta_{1,t} \sim \gN\left(\bar \bbeta_{1,t}, \alpha_{1,t}^2(\hat \BU^\top \BV_{t} \hat \BU)^{-1} \right)$
            \STATE \textbf{Play} $a_t \gets \arg\max_{a} \phi(x_t, a)^\top \hat \BU \hat \btheta_{1,t}$
            \ELSE 
            \STATE \textbf{Compute} $\bar \bbeta_{2,t} \gets \BV_{t}^{-1} \bb_t$
            \STATE \textbf{Sample} $\hat \bbeta_{2,t} \sim \gN\left(\bar \bbeta_{2,t}, \alpha_{2,t}^2 \BV_{t}^{-1} \right)$
            \STATE \textbf{Play} $a_t \gets \arg\max_{a} \phi(x_t, a)^\top \hat \bbeta_{2,t}$
            \ENDIF
            \STATE \textbf{Observe} reward $r_t$ and update $\bb_{t+1} \gets \bb_t + \phi(x_t, a)r_t$, $\BV_{t+1} \gets \BV_t + \phi(x_t, a)\phi(x_t, a)^\top$
            \STATE \textbf{Update} $\BC_{t+1} \gets \BC_t + \hat \BU^\top \phi(x_t, a_t)\phi(x_t, a_t)^\top $, $\kappa_{t+1} \gets \|\BC_{t+1}\|_{(\hat \BU^\top \BV_{t+1} \hat \BU)^{-1}}$
        \ENDFOR
	\end{algorithmic}
\end{algorithm}

\newpage

\section{Experimental Details and Additional Experiments}
\label{app:experiments}

\subsection{Determining the Latent Rank from Offline Data}
We note that as discussed in \ref{sec:subspace-est}, we can use the eigenvalues of $\overline{\BD}_{N,1}^{-1} {\overline{\BM}_N} {\overline{\BD}_{N,2}^{-1}}$ to determine the rank of our subspace. We use the version of this arising from pseudo-inverses instead of regularization, just like in the MovieLens experiments. We demonstrate that we can indeed determine that the $d_K=18$ by finding the significant eigenvalues of the pseudo-inverse version of $\overline{\BD}_{N,1}^{-1} {\overline{\BM}_N} {\overline{\BD}_{N,2}^{-1}}$ estimated from the offline dataset of 5000 samples. We show the plots and log plots of these eigenvalues. We also plot the eigenvalues of the completed ratings matrix for comparison. Notice that they match and both fall after $18$ eigenvalues.

\begin{figure}[H]
    \centering
    \includegraphics[scale=0.5]{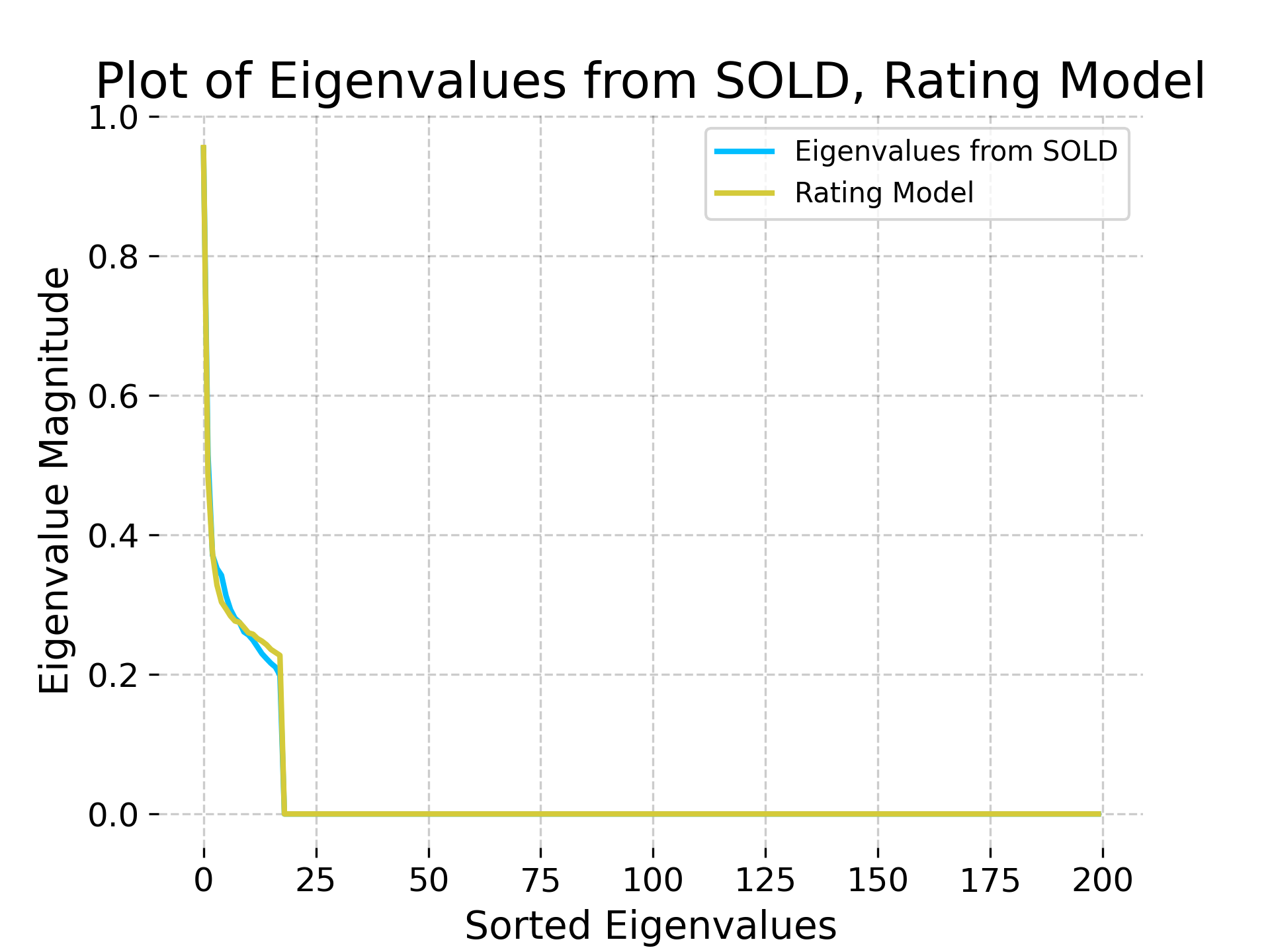}
    \caption{Plot of eigenvalues of aforementioned matrix. Notice the drop after $18$ eigenvalues. }
    \label{fig:eig-plot}
\end{figure}

\begin{figure}[H]
    \centering
    \includegraphics[scale=0.5]{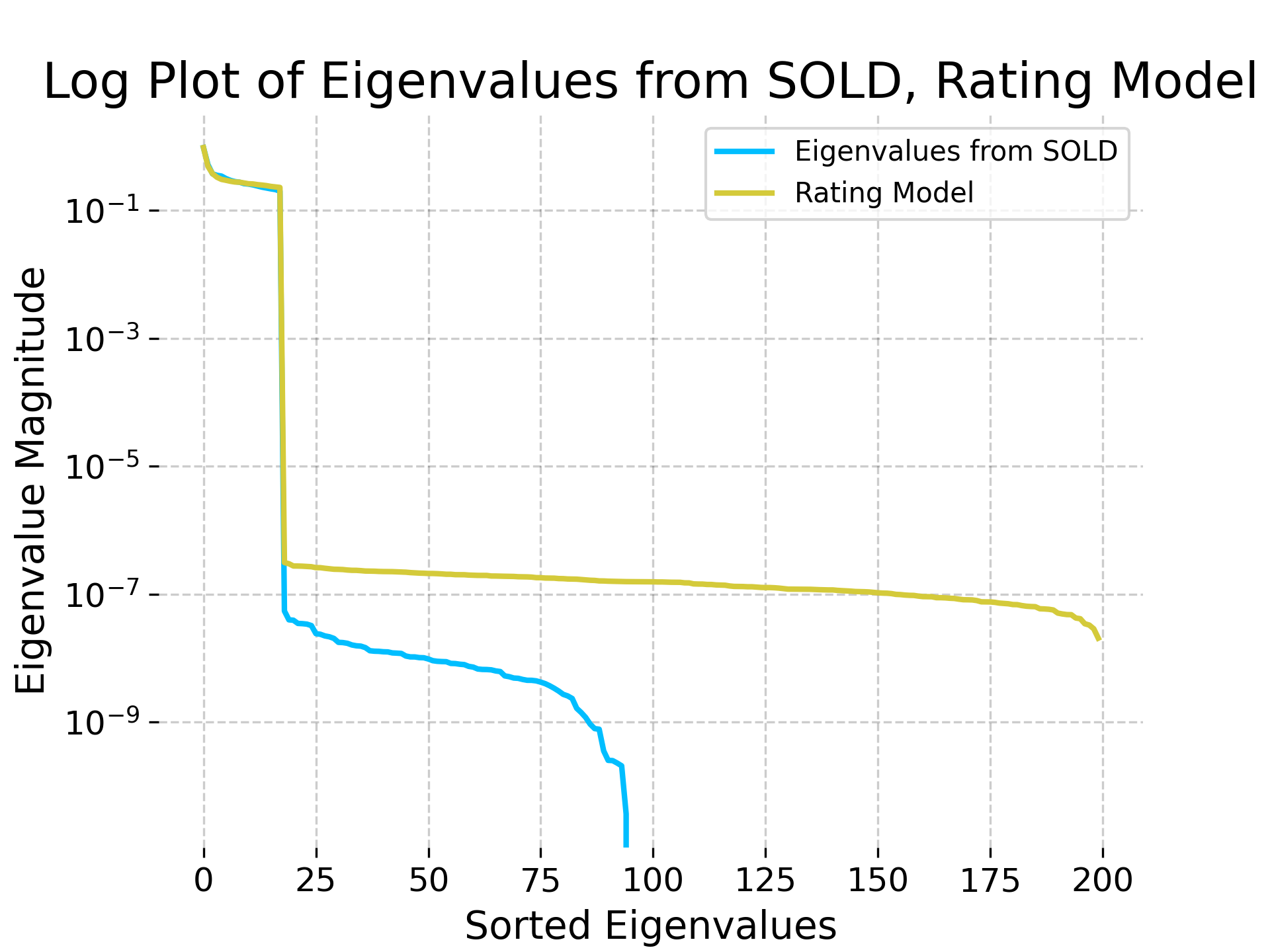}
    \caption{Log-plot of eigenvalues of aforementioned matrix. Notice the drop after $18$ eigenvalues. }
    \label{fig:eig-log-plot}
\end{figure}

\newpage
\subsection{Simulation Study}
We generate $\BU_*$ with $\BU_{ij} \text{ i.i.d. } \text{Unif}(0,\frac{2.5}{d_K d_A})$. We simulate the hidden labels $\btheta_n \sim \gN(0, d_K^{-1}\BI_{d_K})$, generate feature vectors $\phi(x_{n,h}, a_{n,h}) \sim \gN(0, \BI_{d_A})$ normalized to unit norm, and sample noise $\epsilon_{n,h} \text{ i.i.d. } \gN(0, 0.5^2)$. We use SOLD to estimate $\hat{\BU}$ from the offline dataset $\gD_{\off}$, which consists of $5000$ trajectories of length $20$ each. In accordance with the confidence set determined by \cite{Li_2010}, we choose $\alpha_{1,t} = 0.33\sqrt{d_K\log(1+10T/d_K)}$ and $\alpha_{2,t} = 0.33\sqrt{d_A\log(1+10T/d_A)}$, and share the LinUCB and ProBALL-UCB hyperparameters by assigning $\alpha_t=\alpha_{2,t}$. 
\footnote{All experiments were run on a single computer with an Intel i9-13900k CPU, 128GB of RAM, and a NVIDIA RTX 3090 GPU, in no more than an hour in total.}

\begin{figure}[H]
    \centering
    \includegraphics[width=\textwidth]{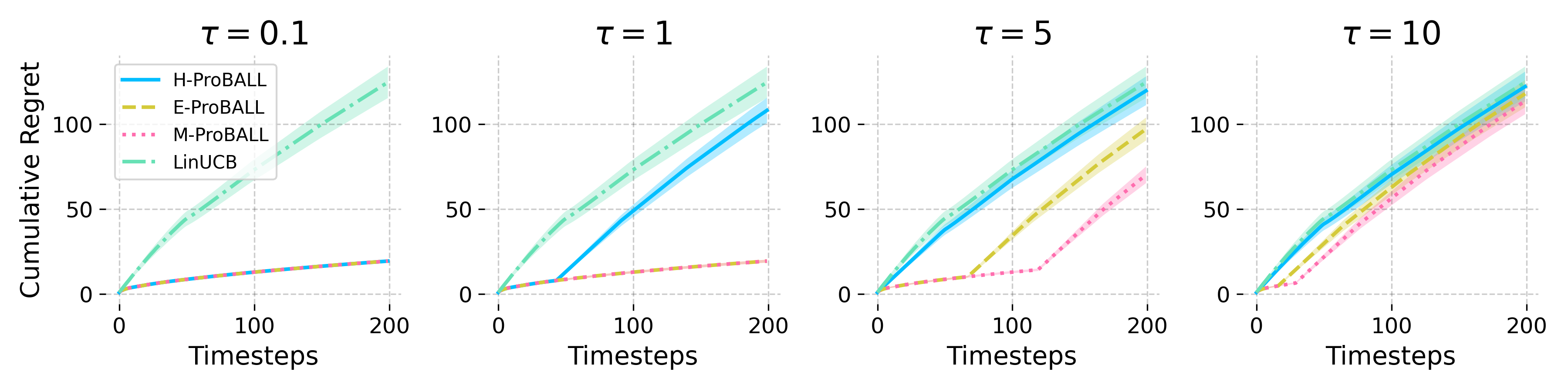}
    \caption{Comparison of ProBALL-UCB with LinUCB, for different choices of $\tau$ and confidence bound constructions. All variants perform no worse than LinUCB, with martingale Bernstein performing the best. The shaded area depicts $1$-standard error confidence intervals over $30$ trials.}
    \label{fig:regret-sims-ablation}
\end{figure}

\newpage
\subsection{MovieLens}

MovieLens \citep{movielens} is a large-scale movie recommendation dataset comprising $6040$ users and $3883$ movies, where each user may rate one or more movies. Like \cite{hong2020latent}, we filter the dataset to include only movies rated by at least $200$ users and vice-versa.
We factor the sparse rating matrix into user parameters $\bbeta$ and movie features $\Phi$ using the probabilistic matrix factorization algorithm of \cite{mnih2007pmf}, using nuclear norm regularization so that the rank of $\bbeta$ is $d_K=18$. However, we consider a much higher dimensional problem than \cite{hong2020latent} do -- we let $d_A = 200$ so $\bbeta\in \Real^{1589\times 200}, \Phi \in \Real^{200 \times 1426}$. At each round for user $i$, the agent chooses between $20$ movies of different genres with features $\Phi_{a_1},...,\Phi_{a_{20}}$, and has to recommend the best movie presented to it to maximize the user's rating of the movie. We generate rewards for recommending movie $j$ to user $i$ by $\bbeta_{i}^T\Phi_j + \epsilon_{ij}, \epsilon_{ij} \text{ i.i.d. } \gN(0, 0.5)$. 

Our hyperparameters are chosen and varied just as in the simulation study. To reproduce the methods of \cite{hong2020latent}, we cluster the user features into $d_K$ clusters using k-means, and provide mUCB and mmUCB with the mean vectors of each cluster as latent models. We initialize ProBALL-UCB with a subspace estimated with an unregularized variant of SOLD, that uses pseudo-inverses instead of inverses, because of difficulties in finding an appropriate regularization parameter for this large, noisy, and high-dimensional dataset. The subspace was estimated from $5000$ trajectories of length $50$ simulated from the reward model and the uniform behavior policy. Note that we assign $\Delta_{\off}$ for $\epsilon$ in mmUCB, as this is their tolerance parameter for model misspecification.

\begin{figure}[H]
    \centering
    \includegraphics[scale=0.5]{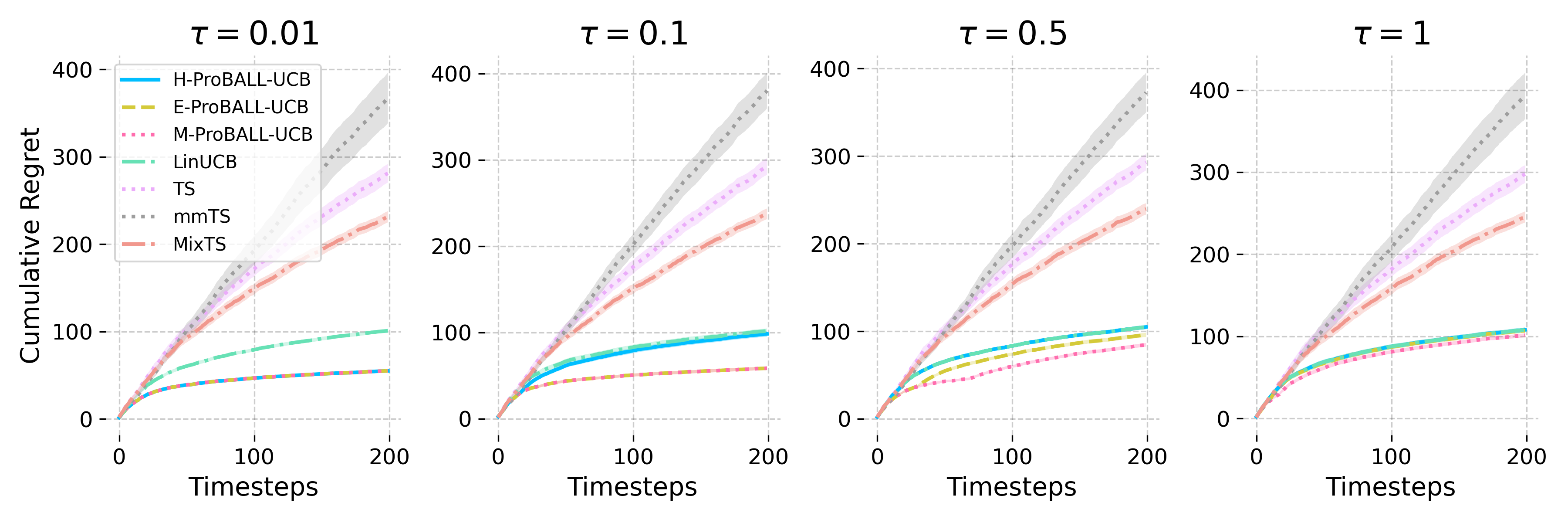}
    \caption{Comparison of ProBALL-UCB with LinUCB and TS algorithms, for different choices of $\tau$ and confidence bound constructions. All variants perform no worse than LinUCB and outperform the TS algorithms, with martingale Bernstein performing the best. The shaded area depicts $1$-standard error confidence intervals over $30$ trials.}
    \label{fig:regret-sims} 
\end{figure}

\begin{figure}[H]
    \centering
    \includegraphics[scale=0.5]{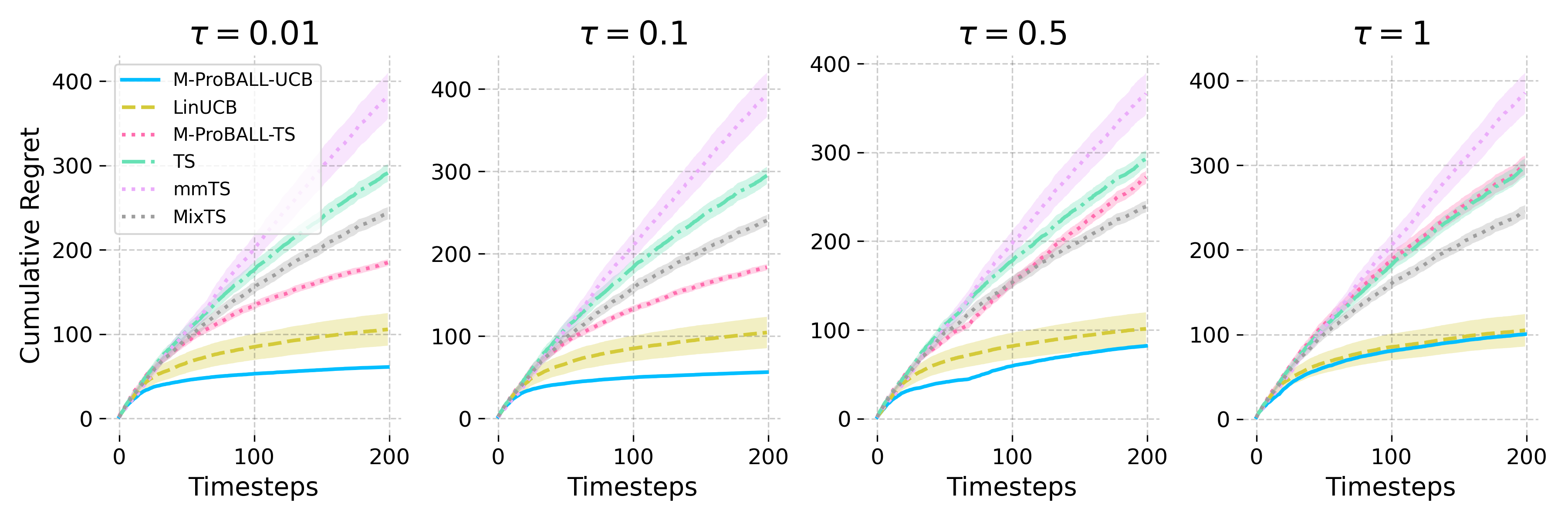}
    \caption{Comparison of ProBALL-UCB and ProBALL-TS initialized with SOLD against LinUCB, TS, MixTS, and mmTS, for different choices of $\tau$ and confidence bound constructions. ProBALL-UCB outperforms LinUCB, and ProBALL-TS outperforms MixTS and mmTS. Shaded area depicts $1$-standard error confidence intervals over $30$ trials with fresh $\btheta$. The confidence intervals on regret thus account for the variation in frequentist regret for changing $\btheta$.}
    \label{fig:regret-movies-ucb-ts}
\end{figure}

\newpage
\begin{minipage}{\textwidth}
\subsubsection{UCB Algorithms}

\begin{figure}[H]
    \centering
    \includegraphics[scale=0.5]{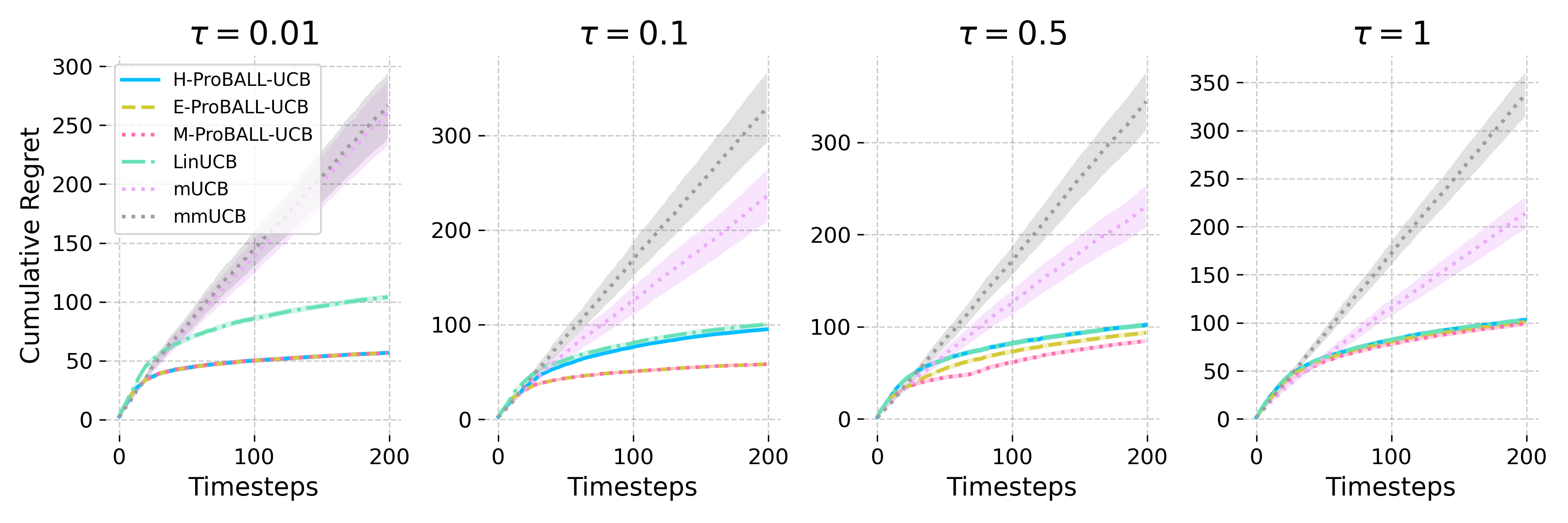}
    \caption{Comparison of ProBALL-UCB initialized with SOLD against LinUCB, mUCB, and mmUCB, for different choices of $\tau$ and confidence bound constructions, in terms of regret. All variants of ProBALL-UCB perform no worse than LinUCB, and outperform mUCB and mmUCB. Shaded area depicts $1$-standard error confidence intervals over 30 trials with fresh $\theta$. The confidence intervals on regret thus account for the variation in frequentist regret for changing $\theta$.}
    \label{fig:movie-regret-ucbs-sold}
\end{figure}

\begin{figure}[H]
    \centering
    \includegraphics[scale=0.5]{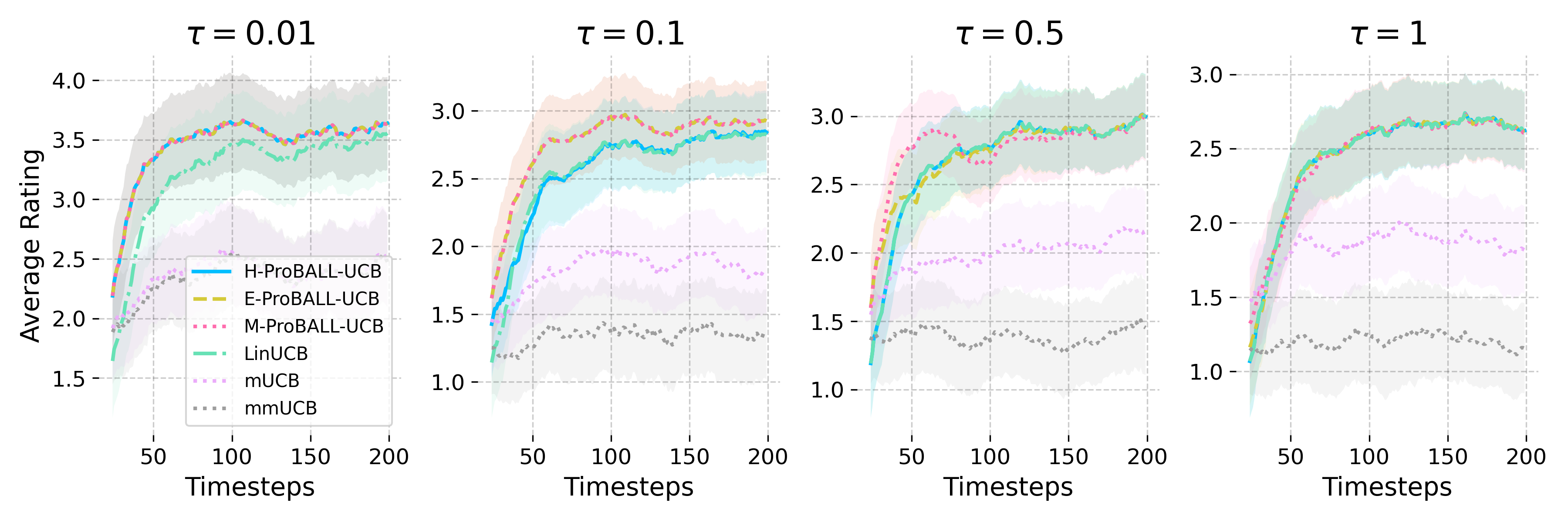}
    \caption{Comparison of ProBALL-UCB initialized with SOLD against LinUCB, mUCB, and mmUCB, for different choices of $\tau$ and confidence bound constructions, in terms of rolling average rating over $25$ timesteps. ProBALL-UCB performs no worse than LinUCB, and outperforms mUCB and mmUCB. }
    \label{fig:movie-rating-ucbs-sold}
\end{figure}
\end{minipage}

\newpage
\begin{minipage}{\textwidth}
\subsubsection{TS Algorithms}\label{app:proball-ts}

\begin{figure}[H]
    \centering
    \includegraphics[scale=0.5]{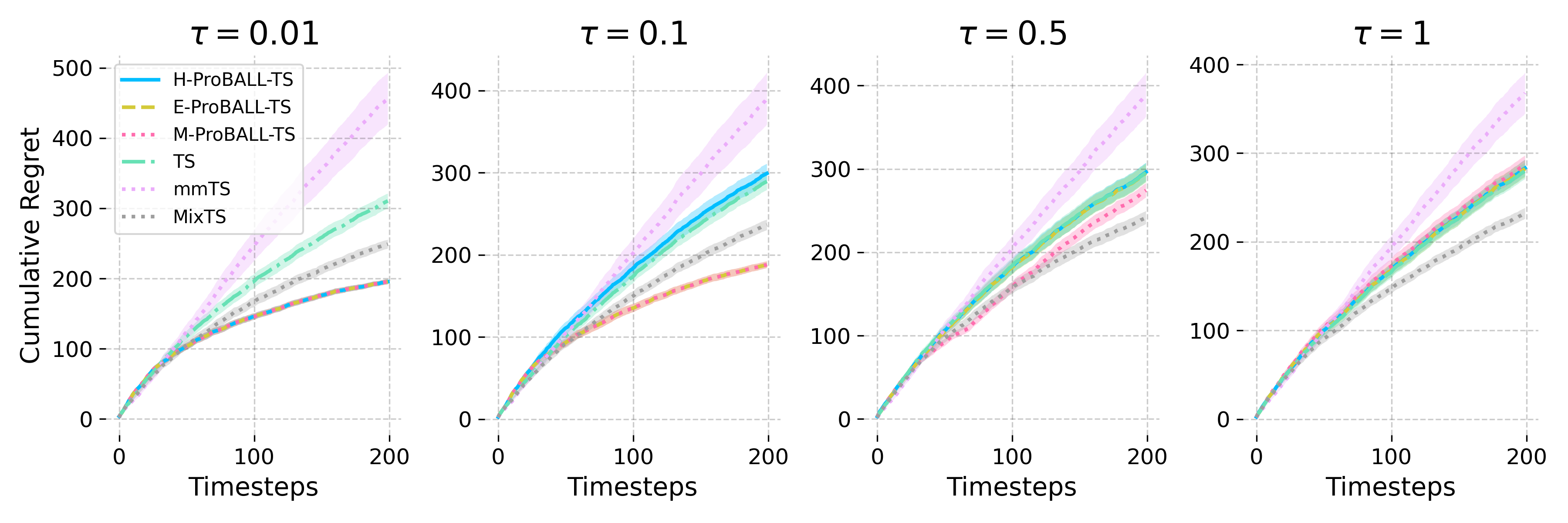}
    \caption{Comparison of ProBALL-TS initialized with SOLD against TS, mmTS, and MixTS, for different choices of $\tau$ and confidence bound constructions, in terms of regret. All variants of ProBALL-TS outperform TS, mmTS, and MixTS. }
    \label{fig:movie-regret-ts-sold}
\end{figure}

\begin{figure}[H]
    \centering
    \includegraphics[scale=0.5]{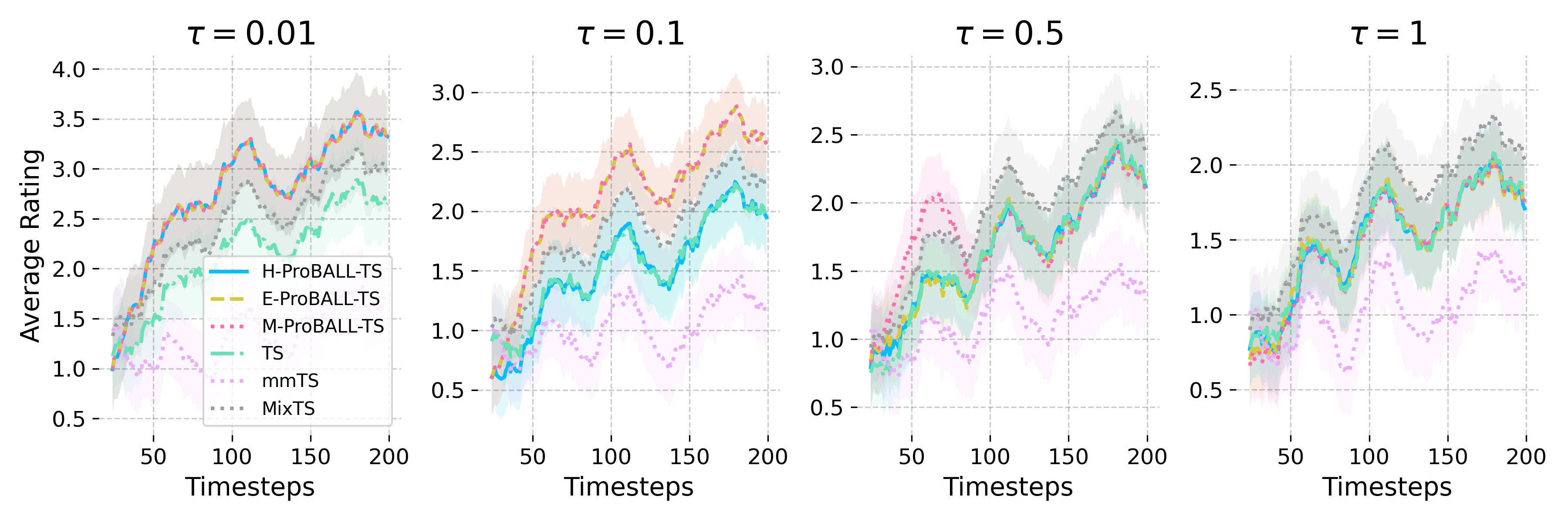}
    \caption{Comparison of ProBALL-TS initialized with SOLD against TS, mmTS, and MixTS, for different choices of $\tau$ and confidence bound constructions, in terms of rolling average rating over $25$ timesteps. All variants of ProBALL-TS outperform TS, mmTS, and MixTS. Shaded area depicts $1$-standard error confidence intervals over 30 trials with fresh $\theta$. The confidence intervals on regret thus account for the variation in frequentist regret for changing $\theta$. }
    \label{fig:movie-rating-ts-sold}
\end{figure}
\end{minipage}

\newpage
\begin{minipage}{\textwidth}
\subsubsection{Comparison Against Low-Dimensional Ground Truth Subspaces}

\begin{figure}[H]
    \centering
    \includegraphics[scale=0.55]{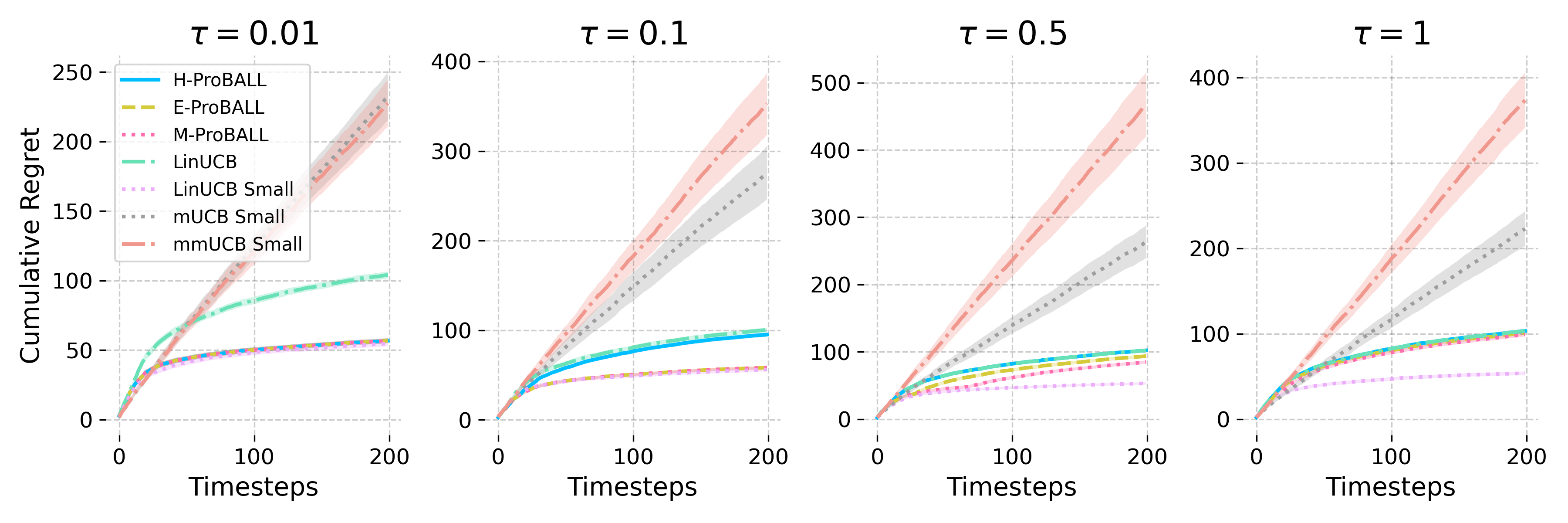}
    \caption{Comparison of ProBALL-UCB initialized with SOLD against LinUCB, mUCB, and mmUCB on low-dimensional ground-truth features, for different choices of $\tau$ and confidence bound constructions. When $\tau$ is small enough, all variants of ProBALL-UCB perform no worse than low-dimensional LinUCB, and outperform mUCB and mmUCB, on ground truth features. This showcases the efficacy of SOLD, and demonstrates that we recover subspaces that are just as good as ground-truth. Shaded area depicts $1$-standard error confidence intervals over 30 trials with fresh $\theta$. The confidence intervals on regret thus account for the variation in frequentist regret for changing $\theta$.}
    \label{fig:movie-regret-no-sold}
\end{figure}

\begin{figure}[H]
    \centering
    \includegraphics[scale=0.55]{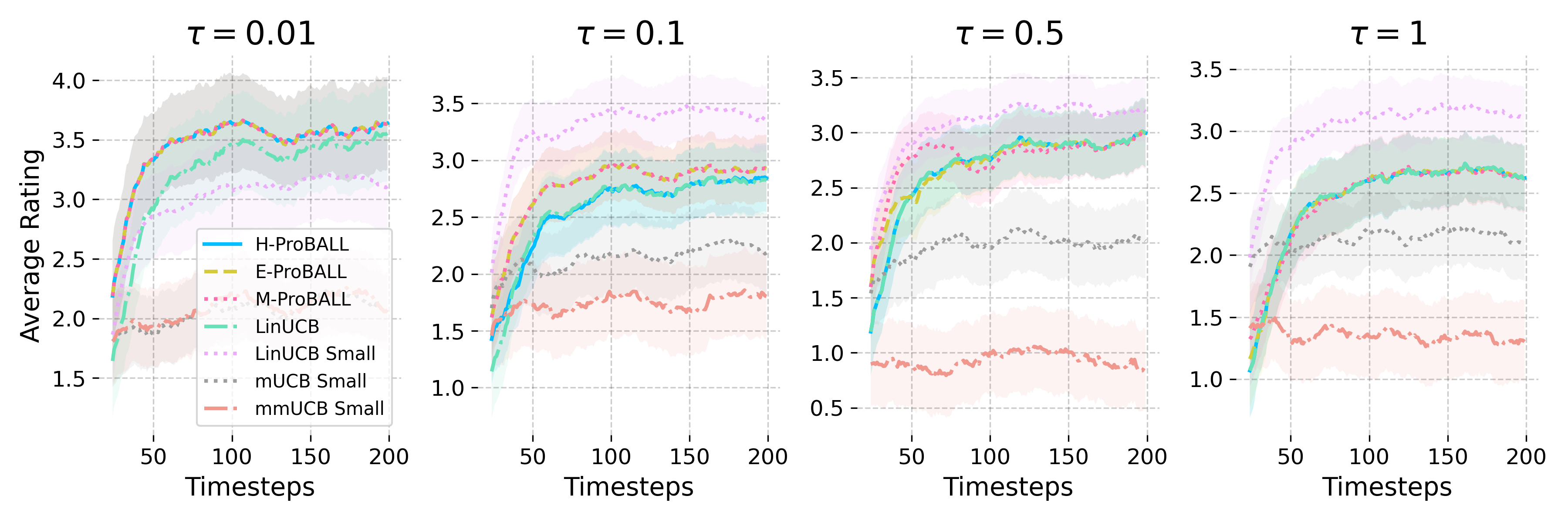}
    \caption{Comparison of ProBALL-UCB initialized with SOLD against LinUCB, mUCB, and mmUCB on low-dimensional ground-truth features, for different choices of $\tau$ and confidence bound constructions. When $\tau$ is small enough, all variants of ProBALL-UCB perform no worse than low-dimensional LinUCB, and outperform mUCB and mmUCB, on ground truth features. This showcases the efficacy of SOLD, and demonstrates that we recover subspaces that are just as good as ground-truth. Shaded area depicts $1$-standard error confidence intervals over 30 trials with fresh $\theta$. The confidence intervals on regret thus account for the variation in frequentist regret for changing $\theta$.}
    \label{fig:movie-rating-no-sold}
\end{figure}
\end{minipage}

\begin{minipage}{\textwidth}
   
\subsubsection{No Usage of SOLD}

\begin{figure}[H]
    \centering
    \includegraphics[scale=0.55]{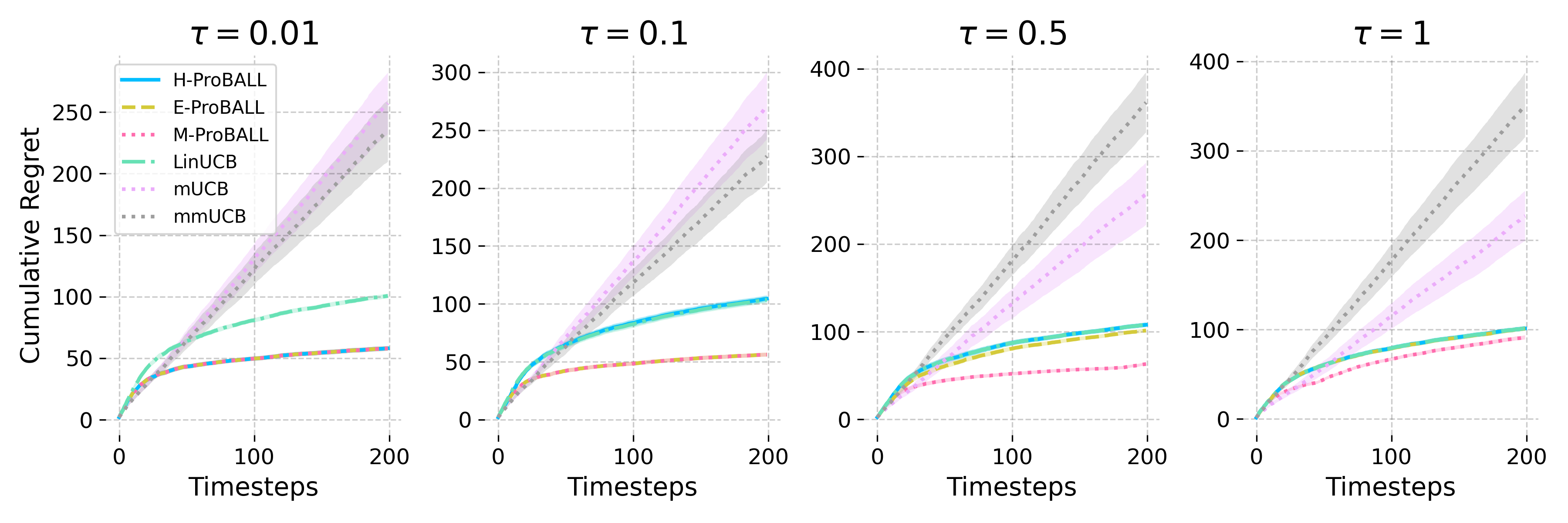}
    \caption{Comparison of ProBALL-UCB initialized with ground truth subspaces against LinUCB, mUCB, and mmUCB, for different choices of $\tau$ and confidence bound constructions. All variants of ProBALL-UCB perform no worse than LinUCB, and outperform mUCB and mmUCB. Shaded area depicts $1$-standard error confidence intervals over 30 trials with fresh $\theta$. The confidence intervals on regret thus account for the variation in frequentist regret for changing $\theta$.}
    \label{fig:movie-regret-no-sold}
\end{figure}
\begin{figure}[H]
    \centering
    \includegraphics[scale=0.55]{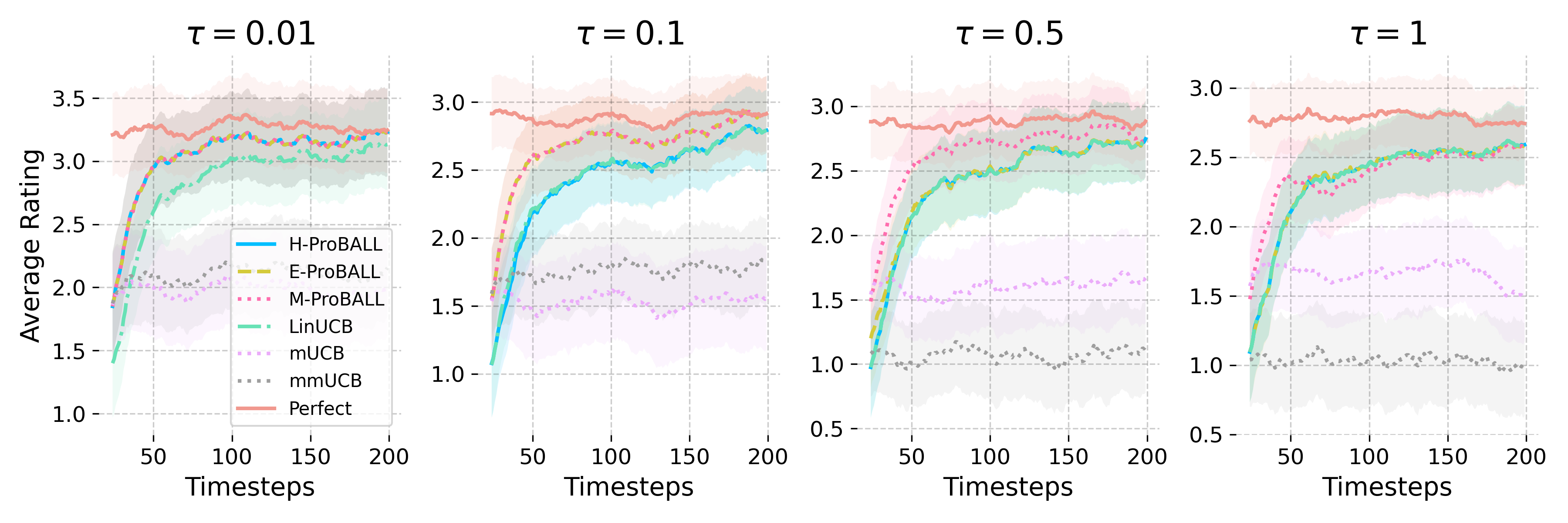}
    \caption{Comparison of ProBALL-UCB initialized with ground truth subspaces against LinUCB, mUCB, and mmUCB, for different choices of $\tau$ and confidence bound constructions, in terms of rolling average rating over $25$ timesteps. All variants of ProBALL-UCB perform no worse than LinUCB, and outperform mUCB and mmUCB. Shaded area depicts $1$-standard error confidence intervals over 30 trials with fresh $\theta$. The confidence intervals on regret thus account for the variation in frequentist regret for changing $\theta$.}
    \label{fig:movie-rating-no-sold}
\end{figure}
 
\end{minipage}

\newpage
\subsection{Sample Complexity of SOLD}

We perform an empirical study of the sample complexity of SOLD on the MovieLens dataset. To do so, we compare the end-to-end regret at $T=200$ timesteps of both ProBALL-UCB and ProBALL-TS, against LinUCB and Linear Thompson sampling using ground-truth low-dimensional features. When $\tau$ is small enough, we see that the end-to-end regret of both ProBALL-UCB and ProBALL-TS converges to that of LinUCB and Linear Thompson sampling using ground-truth low-dimensional features. This shows that we lose little from needing to estimate the subspace with SOLD when enough offline samples are present.

\begin{figure}[H]
    \centering
    \includegraphics[width=0.7\linewidth]{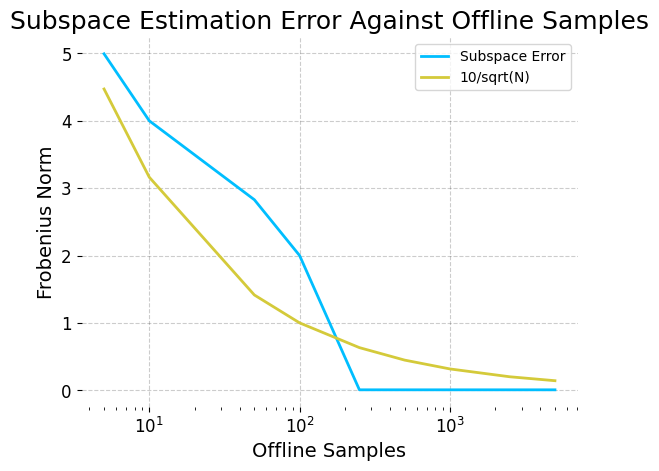}
    \caption{Subspace estimation error of SOLD against the number of offline samples, in the Frobenius norm. This was performed on the MovieLens dataset. We compare the error of SOLD against the parametric rate of $1/\sqrt{N}$. This shows that the error of SOLD indeed decreases very quickly in practice.}
    \label{fig:subspace-error-sold}
\end{figure}

\newpage
\begin{figure}[H]
    \centering
    \includegraphics[scale=0.55]{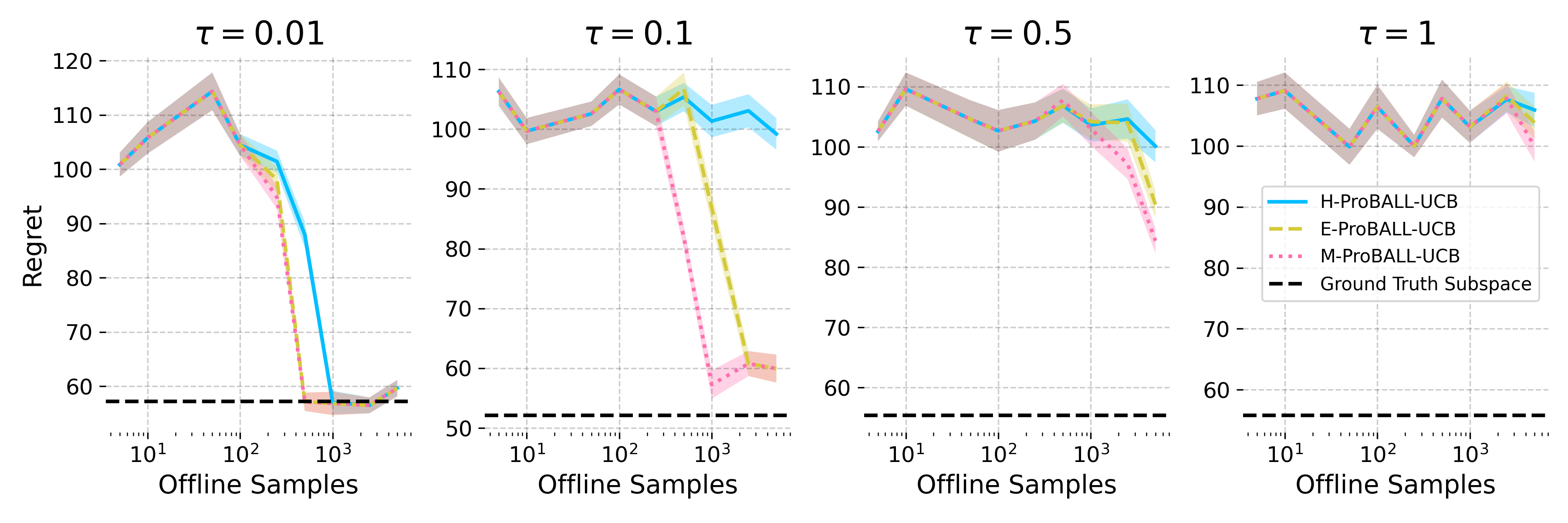}
    \caption{End-to-end regret at $T=200$ timesteps of ProBALL-UCB initialized with SOLD, against the number of offline samples used in fitting SOLD. With a low enough $\tau$, the regret of ProBALL-UCB approaches the regret of LinUCB on ground-truth low-dimensional features, showing that we lose next to nothing from needing to estimate the subspace with SOLD. Shaded area depicts $1$-standard error confidence intervals over 30 trials with fresh $\theta$. }
    \label{fig:sample-complexity-sold-ucbs}
\end{figure}

\begin{figure}[H]
    \centering
    \includegraphics[scale=0.55]{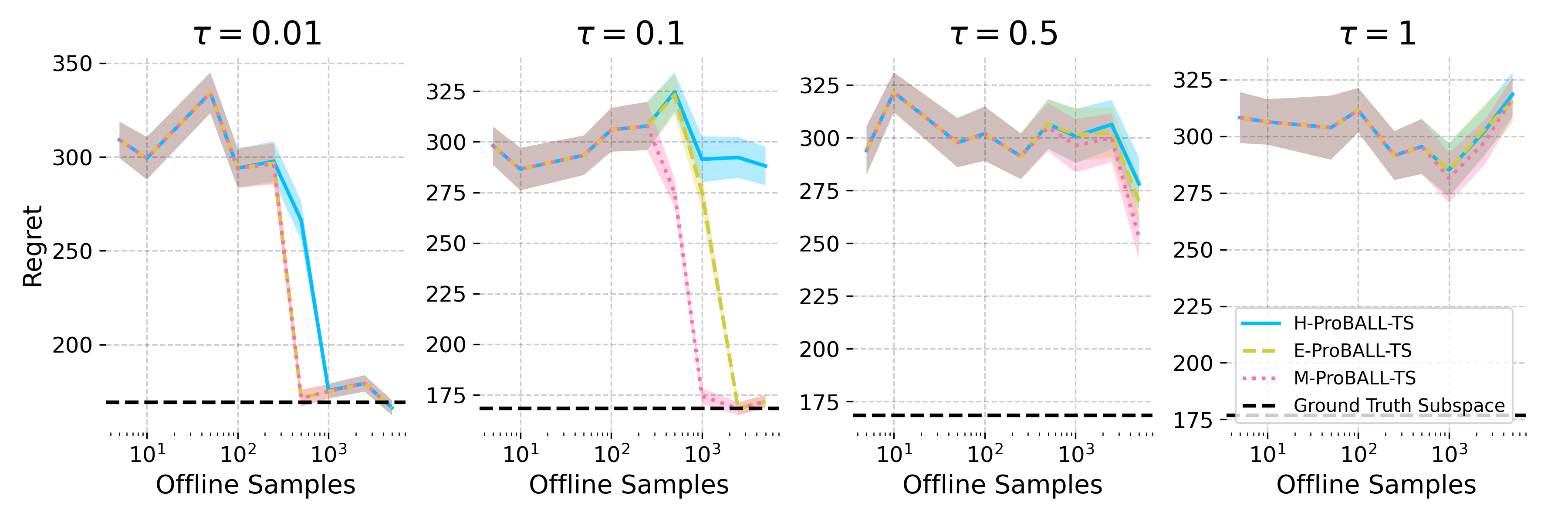}
    \caption{End-to-end regret at $T=200$ timesteps of ProBALL-TS initialized with SOLD, against the number of offline samples used in fitting SOLD. With a low enough $\tau$, the regret of ProBALL-TS approaches the regret of TS on ground-truth low-dimensional features, showing that we lose next to nothing from needing to estimate the subspace with SOLD. Shaded area depicts $1$-standard error confidence intervals over 30 trials with fresh $\theta$. }
    \label{fig:sample-complexity-sold-ts}
\end{figure}


\end{document}